\documentclass[11pt]{article}
\usepackage{fullpage}

\usepackage[utf8]{inputenc} 
\usepackage[T1]{fontenc}    
\usepackage{hyperref}       
\usepackage{url}            
\usepackage{booktabs}       
\usepackage{amsfonts}       
\usepackage{nicefrac}       
\usepackage{microtype}      
\usepackage{tcolorbox}
\usepackage{enumerate}
\usepackage{enumitem}
\usepackage[numbers]{natbib}
\usepackage{graphicx} 
\usepackage{caption}
\usepackage{subcaption}
\usepackage{amsmath}
\usepackage{amsthm}
\usepackage{amssymb}
\usepackage{tikz}
\usepackage{tablefootnote}
\usepackage{multirow}
\usepackage{enumerate}
\usepackage{color}
\usepackage{xcolor}
\usetikzlibrary{arrows}

\allowdisplaybreaks[4]

\usepackage{mathrsfs}

\usepackage{algorithm}
\usepackage{algorithmic}
\usepackage{hyperref}
\usepackage{bm,todonotes}

\def\S{\mathcal{S}}
\def\A{\mathcal{A}}

\def\P{\mathbb{P}}

\allowdisplaybreaks

\newtheorem{thm}{Theorem}[section]
\newtheorem{lem}{Lemma}[section]
\newtheorem{cor}{Corollary}[section]
\newtheorem{prop}{Proposition}[section]
\newtheorem{asmp}{Assumption}[section]
\newtheorem{defn}{Definition}[section]

\newtheorem{conj}{Conjecture}[section]
\newtheorem{rem}{Remark}[section]



\usepackage{amsmath,amsfonts,bm}


















\def\1{\bm{1}}










\DeclareMathAlphabet{\mathsfit}{\encodingdefault}{\sfdefault}{m}{sl}
\SetMathAlphabet{\mathsfit}{bold}{\encodingdefault}{\sfdefault}{bx}{n}






\def\A{{\bf A}}

\def\E{{\bf E}}

\def\S{{\bf S}}

\def\0{{\bf 0}}
\def\1{{\bf 1}}

\def\AM{{\mathcal A}}

\def\OM{{\mathcal O}}

\def\SM{{\mathcal S}}
\def\TM{{\mathcal T}}

\def\XM{{\mathcal X}}

\def\RB{{\mathbb R}}
\def\EB{{\mathbb E}}

\def\argmax{\mathop{\rm argmax}}
\def\argmin{\mathop{\rm argmin}}

\def\poly{\mathrm{poly}}











\newcommand{\Cphi}{C_{\Phi}}

\newcommand{\propi}{{\textsf{Prop-I}(\hat{\varepsilon})}}
\newcommand{\propii}{{\textsf{Prop-II}(\hat{\varepsilon})}}

\newcommand{\tm}{{\textsf{Time}}}
\newcommand{\alg}{{\textsf{Alg}}}
\newcommand{\hpi}{\hat{\pi}}
\newcommand{\Emu}{ \EB_{s_0 \sim \mu}}
\newcommand{\Etau}{\EB_{\tau \sim \pi} }
\newcommand{\rmdp}{\mathcal{M}(\lambda)}

\title{
Finding the Near Optimal Policy via Adaptive Reduced Regularization in MDPs
}

\author{
Wenhao Yang\thanks{Academy for Advanced Interdisciplinary Studies, Peking University; email: \texttt{yangwenhaosms@pku.edu.cn}. } \\
\and
Xiang Li\thanks{School of Mathematical Sciences, Peking University; email: \texttt{lx10077@pku.edu.cn}. } \\
\and
Guangzeng Xie\thanks{Academy for Advanced Interdisciplinary Studies, Peking University; email: \texttt{smsxgz@pku.edu.cn}. } \\
\and
Zhihua Zhang\thanks{School of Mathematical Sciences, Peking University; email: \texttt{zhzhang@math.pku.edu.cn}. } \\
}

%
\begin{document}

\maketitle

\begin{abstract}
    Regularized MDPs serve as a smooth version of original MDPs. However, biased optimal policy always exists for regularized MDPs. Instead of making the coefficient $\lambda$ of regularized term sufficiently small, we propose an adaptive reduction scheme for $\lambda$ to approximate optimal policy of the original MDP. It is shown that the iteration complexity for obtaining an $\varepsilon$-optimal policy could be reduced in comparison with setting sufficiently small  $\lambda$. In addition, there exists strong duality connection between the reduction method and solving the original MDP directly, from which we can derive more adaptive reduction method for certain algorithms.
\end{abstract}

\section{Introduction}
Reinforcement learning (RL) has achieved great success empirically, especially when policy  and value function are parameterized by neural networks. Many studies \cite{mnih2015human, schulman2015trust, silver2016mastering, haarnoja2018soft} have shown powerful and striking performance of RL compared to human-level performance. Dynamic Programming \cite{puterman1978modified, scherrer2015approximate,geist2019theory, azar2012dynamic} and Policy Gradient method \cite{williams1992simple, sutton2000policy, kakade2002natural} are the most frequently used optimization tools in these studies. 
However, when policy gradient methods are applied, theoretically understanding the success of RL is still limited in the case that policy is searched either on simplex or parameterized space.
There is a line of recent work \cite{bhandari2020note, agarwal2019optimality, bhandari2019global} on convergence performance of policy gradient methods for MDPs without parameterization, while another line of recent work \cite{mei2020global, cen2020fast, wang2019neural, dai2018sbeed} focus on MDPs with parameterization.

In addition, during the process of learning MDPs, it is often observed that the obtained policy could be quite deterministic while the environment is not fully explored. 
Some prior works \cite{ahmed2019understanding,mnih2016asynchronous, fox2015taming, vamplew2017softmax} propose to impose the Shannon entropy to each reward to make the policy stochastic, so  agent can explore the environment instead of trapping in a local place and achieves success.  Intuitively and empirically speaking, adding entropy regularization helps soften the learning process and encourage agents to explore more, so it might fasten convergence.
However, there are very few works that theoretically analyze the efficiency of regularization. 
\citet{lee2018sparse} proposed the Tsallis entropy \cite{tsallis1988possible} as an alternative choice while \citet{yang2019regularized} presented a more general regularized term and studied the asymptotic property of the optimal regularized policy. 
Except for the choice of regularization term, there are also some studies on the convergence rate of regularized MDPs with certain algorithms. 
For dynamic programming methods, some works \cite{geist2019theory,fox2015taming, schulman2017equivalence} have shown that the convergence rate could be linear. 
As for policy gradient methods, \citet{shani2019adaptive} showed that the convergence rate is sublinear when mirror descent and natural gradient are applied in the Tabular MDP. \citet{mei2020global} showed that the convergence rate is still (near-)linear when softmax parameterization is applied for vanilla gradient descent and \citet{cen2020fast} showed a linear convergence rate in the same setting for natural gradient descent.

However, regularized MDPs always own biased optimal policies due to the  regularization imposed.
The degree of biasedness is often controlled by a regularization coefficient (also called temperature parameter) $\lambda$.
When $\lambda$ approaches zero, the regularized optimal policy converges to the unregularized optimal policy.
Prior works~\cite{cen2020fast,agarwal2019optimality} propose to set the temperature sufficiently small, which we find deteriorates the convergence rate of the algorithms and contradicts with the intention of obtaining faster convergence via regularization. 

Thus, can we design an approach to adaptively  reducing the temperature  such that the output policy is an $\varepsilon$-optimal policy with respect to the original MDP and the convergence speed is maintained or even faster? 
In this paper we give positive answers about the question. In summary, we offer the major contribution as follows.
\begin{itemize}
    \item We propose a reduction method to adaptively tune the temperature parameter $\lambda$.
    Specifically, given the current $\lambda_t$, we resort to a sub-solver algorithm to solve the optimal policy in a $\lambda_t$-regularized MDP within certain accuracy, and then decay $\lambda_t$ to $\lambda_{t+1}$.
    This reduction method allows us to use almost any regularized RL algorithms as long as they satisfy certain policy improvement property (Definition ~\ref{def:prop}).
    We mainly focus on dynamic programming methods and first order gradient methods in our paper.
    \item We show that our algorithm has the same or even lower iteration complexity than simply setting $\lambda$ sufficiently small (it is often the case that $\lambda=O((1-\gamma)\varepsilon/\Cphi)$, see Table~\ref{table:1} for more details).
    In particular, we improve previous analysis on Projected Gradient Ascent without parameterization from $O(\frac{|\SM||\AM|}{\epsilon^2(1-\gamma)^4})$ to $O(\frac{|\SM||\AM|}{\epsilon(1-\gamma)^3})$ when full information of MDP is acquired. Furthermore, we also promote our results to approximate version, where only a $\nu$-restart model can be accessed. With our reduction method, the iteration complexity can be improved from $O(\frac{|\SM||\AM|}{\epsilon^2(1-\gamma)^4})$ to $O(\frac{|\SM||\AM|}{\epsilon(1-\gamma)^3})$, while the sample complexity (number of trajectories) can also be improved from $\widetilde{O}\left(\frac{|\SM|^4|\AM|^3\rho^6\rho_\nu^6}{\varepsilon^6(1-\gamma)^{8}}\right)$ to $\widetilde{O}\left(\frac{|\SM|^4|\AM|^3\rho^5\rho_\nu^6}{\varepsilon^5(1-\gamma)^{7}}\right)$.
    \item 
    We  reveal that our reduction method is a dual approach to solving the unregularized MDP by showing strong duality holds. Thus, we can derive more efficient adaptive reduction methods from dual approach. Though, we conjecture that order of $\frac{1}{\varepsilon}$ is the best we can hope. However, we can still design efficient learning algorithm to reduce dependence on other terms.
\end{itemize}

\begin{table}[htbp!]
  \centering
    \begin{tabular}{|c|c|c|c|}
    \hline
    Method & unregularized MDP & $\lambda=O\left(\frac{(1-\gamma)\varepsilon}{\Cphi}\right)$     & \textsf{AdaptReduce} \\
    \hline
    No parameterization+DP & $O\left(\frac{1}{1-\gamma}\log\frac{C}{\varepsilon(1-\gamma)}\right)$     & $O\left(\frac{1}{1-\gamma}\log\frac{C}{\varepsilon(1-\gamma)}\right)$     & $O\left(\frac{1}{1-\gamma}\log\frac{C}{\varepsilon(1-\gamma)}\right)$ \\[0.2cm]
    \hline
    No parameterization+GD & $O\left(\frac{|\SM||\AM|\rho^2\rho_\nu^2}{\varepsilon^2(1-\gamma)^4}\right)$     & $O\left(\frac{|\SM||\AM|\rho^2\rho_\nu^2 \widetilde{C}_\Phi}{\varepsilon^2(1-\gamma)^4}\right)$     & $O\left(\frac{|\SM||\AM|\rho\rho_\nu^2\widetilde{C}_\Phi}{\varepsilon(1-\gamma)^3}\right)$ \\
    \hline
    \end{tabular}%
    \caption{Iteration complexity for obtaining an $\varepsilon$-optimal policy. $\Cphi$ is upper bound of $\Omega$. $C$ is dependent with $|\SM|, |\AM|,\gamma, \Omega$. $\rho$ and $\rho_\nu$ are upper bound of distribution shift. $\widetilde{C}_\Phi$ is a constant dependent with smoothness of $\Omega$. For Softmax line, $c$ is a constant dependent with MDP, $\mu$ is inital distribution. 
    }
      \label{table:1}
\end{table}%

\subsection{Related Work}

We mainly use dynamic programming  and first order gradient methods as our sub-solvers.
Even considering the existence of regularization, we can still define a (regularized) Bellman operator that still serves as a $\gamma$-contraction~\cite{yang2019regularized,neu2017unified}.
When using the regularized Bellman operator, dynamic programming methods could still achieve a linear convergence rate~\cite{neu2017unified, vieillard2020leverage, smirnova2019convergence}.
\citet{smirnova2019convergence} analyzed the convergence of a general form of regularized policy iteration when $\lambda_t$ decays in a asymptotic sense.
But they did not specify a concrete decay method and only analyzed how regularized policy iteration converges with different asymptotic $\lambda$ decay rate.

In  tabular unregularized MDP scenarios, \citet{agarwal2019optimality} showed that the vanilla policy gradient method achieves a $\widetilde{O}(\frac{1}{\varepsilon^2})$ iteration complexity, while \citet{bhandari2020note} argued that the complexity rate should be  $O(\log\frac{1}{\varepsilon})$ in the same setting. 
However, in order to obtain linear convergence, \citet{bhandari2020note} made use of exact line search.
When the learning rate is appropriately selected (even infinity), the resulting algorithm incorporates policy iteration to a special case, which is shown to converge linearly. 
Since \citet{bhandari2020note} required to select the optimal learning rate, it is not hard to understand their conclusion: a policy output by exact line search performs at least as good as policy iteration.
However, finding a policy good enough by line search needs to setting learning rate optimally or even infinity, which is impractical.


In a regularized MDP, by choosing a sufficiently small and fixed $\lambda$, we show that the iteration complexity of the vanilla policy gradient method~\cite{agarwal2019optimality} with no parameterization is $\widetilde{O}(\frac{1}{\varepsilon^2})$ under the assumption that regularized term is smooth with respect to $\pi$. 
Such an iteration complexity does not have any advantage over its counterpart in unregularized MDPs. 
\citet{shani2019adaptive} analyzed a variant of TRPO and showed that it converges much faster in a regularized MDP.
In particular, it has a $\widetilde{O}(\frac{1}{\varepsilon})$ complexity in a regularized MDP and $\widetilde{O}(\frac{1}{\varepsilon^2})$ in the unregularized case.
We argue that the method can be reduced to regularized policy iteration by choosing learning rate appropriately and thus can achieve a $O(\log\frac{1}{\varepsilon})$ convergence rate.
However, it is quite different from the vanilla policy gradient. In fact, when policy is parameterized by softmax, we show that it is equivalent to natural policy gradient, which can be referred in Section~\ref{sec:dissother}.


For softmax parameterization and vanilla policy gradient method, \citet{mei2020global} showed  for a fixed coefficient $\lambda$ the iteration complexities for unregularized and regularized MDP are $\widetilde{O}(\frac{1}{\varepsilon})$ and $\widetilde{O}(\log\frac{1}{\varepsilon})$, respectively. 
However, we find that the result is somewhat problematic for regularized MDP in Section~\ref{sec:softpg} when $\lambda$ is set sufficiently small. 
Actually, with the result of \cite{mei2020global}, we can not even devise an efficient learning algorithm by adaptively decaying scheme.

\section{Preliminaries and Notation}
\paragraph{Markov Decision Processes.} 

An infinite-horizon MDP is defined by the tuple $(\SM, \AM, P, r, \mu, \gamma)$, where $\SM$ is the state space and $\AM$ the action space, both assumed to be finite with  respective sizes $S$ and $A$. Here
$r: \SM \times \AM \rightarrow [0, R]$ is the bounded reward function. 
Let $\Delta({\XM})$ denote the set of probabilities on $\XM$, that is, $\Delta({\mathcal{X}}) = \{P: \sum_{x \in \mathcal{X}} P(x) =1, P(x) \ge 0\}$. 
Then $P: \SM \times \AM  \rightarrow \Delta(\SM)$ is the unknown transition probability distribution and $\mu \in \Delta(\SM)$ is the initial state distribution.
$\gamma \in [0, 1)$ is the discount factor. 
Let $V^{\pi} \in \RB^{S}$ be the value of a policy $\pi$ with its $s \in \SM$ entry given by $V^{\pi}(s) := \Etau [ \sum_{t=0}^{\infty} \gamma^t r(s_t, a_t)|s_0 = s ]$, where $ \tau \sim \pi$ means the trajectory $\tau = (a_0, s_1, a_1, s_2, a_2, \cdots)$ is generated according to the policy $\pi$.
It is known that $V^{\pi} = (I - \gamma P^{\pi})^{-1}r^{\pi}$ where $[P^\pi]_{s, s'} := \EB_{a \sim \pi(\cdot|s)}P(s'|s, a)$ and $r^{\pi}(s) = \EB_{a \sim \pi(\cdot|s)}r(s, a)$.
For a given initial distribution $\mu$ on $s_0$, we set $V^\pi(\mu) := \Emu V^{\pi}(s)$.

\paragraph{Regularized Markov Decision Processes.} 
Given any convex function $\Omega: \Delta(\AM) \rightarrow \RB$, for any policy $\pi$ and state $s \in \SM$, we  abuse the notation a little bit and denote by $\Omega(\pi, s) := \Omega(\pi(\cdot|s))$ for simplicity.
A regularized MDP can be described by a tuple $(\SM, \AM, P, r, \mu, \gamma, \lambda, \Omega)$, for simplicity, which we denote by $\mathcal{M}(\lambda)$~\cite{geist2019theory}.
We similarly denote the value function of $\pi$ in $\mathcal{M}(\lambda)$ by $V_{\lambda}^{\pi} \in \RB^S$ with its $s \in \SM$ entry given by
\begin{equation}
\label{eq:value}
V_\lambda^\pi(s) =  \EB_{\tau \sim \pi}  \left[  \sum_{t=0}^{\infty}\gamma^t
(r(s_t, a_t) - \lambda \Omega(\pi, s_t)  )\Big|s_0 = s \right].
\end{equation}
The optimal policy is $\pi_{\lambda}^{*} := \argmax_{\pi} J(\pi, \lambda) = \argmax_{\pi} V_{\lambda}^{\pi}(\mu)$ with value function as $V_\lambda^* := V_{\lambda}^{\pi_{\lambda}^*}$.
Sometimes it is convenient to focus on the effect of regularization alone, so we define $\Phi^{\pi} \in \RB^S$ with its $s \in \SM$ entry given by
\begin{equation}
\label{eq:Phi}
\Phi^\pi(s) =  \EB_{\tau \sim \pi} \left[  \sum_{t=0}^{\infty}\gamma^t
\Omega(\pi, s_t)\Big|s_0 = s \right].
\end{equation}
Typically, in $\mathcal{M}(\lambda)$, we consider the sum of discounted reward with regularization as follows:
\begin{equation}
\label{eq:goal}
J(\pi, \lambda) 
= \Emu \EB_{\tau \sim \pi} \sum_{t=0}^{\infty}\gamma^t\left( r(s_t, a_t) - \lambda \Omega(\pi, s_t)\right) 
= V^\pi(\mu) - \lambda \Phi^\pi(\mu),
\end{equation}
where  $\lambda$ is the regularization coefficient (or called temperature) and $ \Phi^\pi(\mu) = \Emu \Phi^\pi(s_0)$.
Previous work shows that it is sometimes more efficient to maximize $J(\pi, \lambda)$ and obtain $\pi_{\lambda}^* =  \argmax_{\pi} J(\pi, \lambda)$~\cite{shani2019adaptive}.
By contrast, unregularized $\mathcal{M}$ focus on $J(\pi) := J(\pi, 0) = V^{\pi}(\mu)$.
We say a policy $\pi$ to be an $\epsilon$-optimal policy  if $V^{*}(\mu) - V^{\pi}(\mu) \le \epsilon$.

\paragraph{Value Function Decomposition.}
The value function of $\pi$ evaluated in $\rmdp$ is $V_{\lambda}^{\pi}$.
When $\lambda = 0$, $V_\lambda^\pi$ will be degenerated to the typical definition of value function $V^{\pi}$. 
We can decompose $V_{\lambda}^\pi$ into two parts, that is,
\begin{equation}
\label{eq:decom}
V_{\lambda}^\pi = V^\pi - \lambda \Phi^\pi.
\end{equation}

\paragraph{Discounted state visitation distribution} of a policy $\pi$ is defined as
\begin{equation}
\label{eq:d}
d_{s_0}^{\pi}(s) : = (1-\gamma) \sum_{t=0}^{\infty} \gamma^t P^{\pi}(s_t = s|s_0),
\end{equation}
where  $P^{\pi}(s_t = s|s_0)$ is the state visitation probability that $s_t = s$, after we execute $\pi$ with initial state $s_0$. 
Again, we overload notation and write: $d_{\mu}^{\pi}(s) = \Emu  d_{s_0}^{\pi}(s)$ and concatenate it as a row vector $d_{\mu}^{\pi} \in \RB^{1 \times S}$.
Interestingly, we have $d_{\mu}^{\pi} = (1-\gamma) \mu(I - \lambda P^{\pi})^{-1}$.
It is often the case that we use another distribution (say, $\nu$) as the initial state distribution used in a RL algorithm, but still use $\mu$ to measure the sub-optimality of our policies.
Given a policy $\pi$, the distribution mismatch coefficient of $\pi$ relative to $\nu$ is defined as $\| \frac{d_{\mu}^{\pi}}{\nu}\|_{\infty}$.

\paragraph{Bregman Divergence.} Given any strictly convex and continuously differentiable function $\Omega: \Delta(\AM) \rightarrow \RB$, for any two policies $\pi, \pi' \in \Delta(\AM)^S$ and $s \in \SM$, the Bregman divergence between $\pi, \pi'$ at $s$ is defined as
$  D_{\Omega}(\pi'||\pi)(s) := D_{\Omega}(\pi'(\cdot|s)\| \pi(\cdot|s)) = \Omega(\pi'(\cdot|s)) - \Omega(\pi(\cdot|s)) - \langle \nabla_{\pi(\cdot|s)}\Omega(\pi(\cdot|s)), \pi' - \pi \rangle $.
For simplicity, we let $D_{\Omega}(\pi'||\pi) \in \RB^S$ with its $s \in \SM$ entry given by $ D_{\Omega}(\pi'||\pi)(s)$.

\paragraph{Performance of optimal regularized policy.} Given a fixed $\lambda$, the performance of optimal regularized policy $\pi_\lambda^*$ is guaranteed by the following proposition\cite{cen2020fast, yang2019regularized}.
\begin{prop}
    \label{prop:bias}
	Denote $\pi_\lambda^*\in\argmax_\pi J(\pi,\lambda)$ and $V^*(s)=\max_\pi V^{\pi}$. Then
	\begin{align}
		\|V^*-V^{\pi_\lambda^*}\|_{\infty}\le\frac{\lambda}{1-\gamma}C_\Phi,
	\end{align}
	where $C_\Phi = \max_{\pi\in\Delta(\AM)}\Omega(\pi)$.
\end{prop}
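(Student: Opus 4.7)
The plan is to compare $\pi^*_\lambda$ against an unregularized optimal policy $\pi^*$ by exploiting the value decomposition $V_\lambda^\pi = V^\pi - \lambda \Phi^\pi$ stated in \eqref{eq:decom}, and to control the regularization term $\Phi^\pi$ by its trivial bound $C_\Phi/(1-\gamma)$. The statement is two-sided (an $\infty$-norm), so I need both an upper bound and a lower bound on $V^*(s) - V^{\pi^*_\lambda}(s)$ at every state $s$.

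The lower bound is immediate: by definition $\pi^*$ achieves the maximum of $V^\pi(s)$ for every $s$ (standard MDP theory: a simultaneously optimal deterministic policy exists for the unregularized problem), so $V^*(s) - V^{\pi^*_\lambda}(s) \ge 0$. For the upper bound, I rewrite the gap using \eqref{eq:decom}:
\begin{equation*}
V^{\pi^*}(s) - V^{\pi^*_\lambda}(s) = \bigl(V_\lambda^{\pi^*}(s) - V_\lambda^{\pi^*_\lambda}(s)\bigr) + \lambda\bigl(\Phi^{\pi^*}(s) - \Phi^{\pi^*_\lambda}(s)\bigr).
\end{equation*}
The first parenthesis is $\le 0$ because $\pi^*_\lambda$ is optimal for the regularized MDP $\mathcal M(\lambda)$, and this optimality holds pointwise in $s$ (again by the existence of a uniformly optimal policy, now for the regularized Bellman operator, which is still a $\gamma$-contraction; this is standard, e.g., \cite{geist2019theory}). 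Hence
\begin{equation*}
V^{\pi^*}(s) - V^{\pi^*_\lambda}(s) \;\le\; \lambda\bigl(\Phi^{\pi^*}(s) - \Phi^{\pi^*_\lambda}(s)\bigr).
\end{equation*}

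It remains to bound the right-hand side by $\lambda C_\Phi/(1-\gamma)$. By definition \eqref{eq:Phi} of $\Phi^\pi$ and the assumption $\Omega(\pi,\cdot) \in [0, C_\Phi]$ (which is the implicit convention in the statement, since $C_\Phi = \max_\pi \Omega(\pi)$ is used as an upper bound), one has $0 \le \Phi^{\pi}(s) \le \sum_{t\ge0}\gamma^t C_\Phi = C_\Phi/(1-\gamma)$ for every policy $\pi$ and state $s$. Dropping the nonnegative term $\lambda\Phi^{\pi^*_\lambda}(s)$ and upper-bounding $\Phi^{\pi^*}(s)$ gives the claim.

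I do not expect any serious obstacle: the only point that deserves care is the pointwise (as opposed to $\mu$-averaged) optimality of $\pi^*_\lambda$, which is needed to conclude the first parenthesis is nonpositive at \emph{every} $s$ and hence to obtain an $\infty$-norm (not just a $\mu$-weighted) bound. This follows from the contraction property of the regularized Bellman optimality operator, which yields a unique fixed point $V^*_\lambda$ attained simultaneously at every state. If $\Omega$ is allowed to take negative values, the same argument still works at the cost of replacing $C_\Phi$ by $\max_\pi\Omega(\pi) - \min_\pi\Omega(\pi)$ in the bound, but under the stated convention no such modification is needed.
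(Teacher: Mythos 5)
Your proof is correct, and it is essentially the approach the paper relies on: the paper does not prove Proposition~\ref{prop:bias} itself (it cites \cite{cen2020fast,yang2019regularized}), but your argument uses exactly the machinery appearing elsewhere in the paper --- the decomposition $V_\lambda^\pi = V^\pi - \lambda\Phi^\pi$ of \eqref{eq:decom} together with $0 \le \Phi^\pi(s) \le \Cphi/(1-\gamma)$ as in Lemma~\ref{lem:bound_phi} --- and by discarding the nonnegative term $\lambda\Phi^{\pi_\lambda^*}(s)$ you even recover the stated constant $\lambda\Cphi/(1-\gamma)$ rather than the factor-$2$ variant that shows up in the proof of Theorem~\ref{thm:reduction}. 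The one delicate point you flag, the state-wise (not just $\mu$-averaged) optimality of $\pi_\lambda^*$, is precisely the convention the paper itself invokes (citing \cite{bellman1959functional,yang2019regularized}), so your treatment is consistent with the paper.
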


\paragraph{Controlling the bias.} Regularized MDP often has a biased optimal policy as shown in Proposition~\ref{prop:bias}. In order to find an $\varepsilon$-optimal policy with respect to original unregularized MDP, many prior works propose to fix a sufficiently small $\lambda=O((1-\gamma)\varepsilon/\Cphi)$. However, in some cases, the convergence rate could be much slower than it when $\lambda=\Theta(1)$. Thus we propose an adaptive reduction method to help control the bias in the next section.

\section{Methodology}
\label{sec:reduction}

\begin{algorithm}[htb]
\caption{The \textsf{AdaptReduce}}
\label{alg:ar}
\begin{algorithmic}
	\STATE {\bfseries Input:}  $T$--- the number of epochs, $\lambda_0$--- an initial regularization parameter, an algorithm $\alg$ that tries to produce the optimal policy of $J(\pi, \lambda)$ for a given $\lambda$.
	\FOR{iteration $t=0$ {\bfseries to} $T-1$}
	\STATE $ \hpi_{t+1} \leftarrow \alg(\hpi_{t}, \lambda_t)$;
	\STATE $\lambda_{t+1} \leftarrow \frac{\lambda_{t}}{2}$;
	\ENDFOR
	\STATE {\bfseries Return: $\hpi_T$} 
\end{algorithmic}
\end{algorithm}

In this section we propose \textsf{AdaptReduce} (Algorithm~\ref{alg:ar}) to control the bias from $\lambda$. \textsf{AdaptReduce} works in the following way. At the beginning of \textsf{AdaptReduce}, we set $\hat{\pi}_0$ as any given initial policy.
At each iteration $t= 0 ,1, \cdots, T-1$, we first focus on finding the optimal policy in the regularized MDP $\mathcal{M}(\lambda_{t})$ and then update the value of regularization coefficient $\lambda_t$.
Specifically, we run $\alg$ with starting policy $\hat{\pi}_t$ in each iteration, and let the output be $\hat{\pi}_{t+1}$. 
After all $T$ iterations are finished, \textsf{AdaptReduce} simply outputs $\hat{\pi}_T$.
Here we do not aim to solve out the optimal policy in $\mathcal{M}(\lambda_{t})$ exactly, because our target is to find the original optimal policy $\pi^*$ and the reason we prefer to optimize in a regularized MDP is the benefit (like faster convergence~\cite{shani2019adaptive} and better exploration) it would offer.

\subsection{Convergence Analysis}

\begin{defn}
    \label{def:prop}
    We say an algorithm $\alg(\pi_0,\lambda, \varepsilon,\hat{\varepsilon})$ maximizing $J(\pi,\lambda)$ satisfies approximate convergence with given accurace $\varepsilon$ property  in time $\tm(\lambda)$ if, for every starting policy $\pi_0$, it produces $\pi_1\leftarrow\alg(\pi_0,\lambda,\varepsilon,
    \hat{\varepsilon})$ such that $V_{\lambda}^*-V_{\lambda}^{\pi_1}\le\varepsilon+\hat{\varepsilon}$.
\end{defn}

In this paper, we mainly consider two types of $\alg(\pi_0,\lambda,\varepsilon,\hat{\varepsilon})$: (i) $\varepsilon=\frac{1}{4}(V_\lambda^{*}(\mu)-V_\lambda^{\pi_0}(\mu))$; (ii) At timestep $t$, $\alg(\pi_t,\lambda_t,\varepsilon_t,\hat{\varepsilon})$ outputs $\pi_{t+1}$ with $\varepsilon_t=\frac{\lambda_0}{2^t}\frac{\Cphi}{1-\gamma}$. We denote each property as $\propi$ and $\propii$ respectively.

\begin{rem}
The requirement of the $\propi$ is mainly twofold: (a) the first part is homogenous contraction convergence, i.e., the value function of the output policy $\pi_i$ will be closer to that of the optimal policy $\pi_{\lambda}^*$ than the starting policy $\pi_0$ by at least a factor of $1/4$;\footnote{The factor $1/4$ has nothing special and can be replaced by any number in $(0, 1)$.} (b) the second is the relaxation for the closeness, an approximation error no larger than $\hat{\varepsilon}$ allowed. However, $\propi$ means $\alg$ is able to contract the initial error by a fixed factor in a given time, which, however, is barely met for accurate controlling. 
In the case, it is better to require $\alg$ solving $\pi_{\lambda}^*$ within an absolute error rather than contracting initial errors.
Thus, we are motivated to propose another type of stopping criteria ($\propii$) to deal with some sub-solvers that can't satisfy $\propi$ property.
\end{rem}

\begin{rem}
    In Definition~\ref{def:prop}, we give two types of error $\varepsilon$ and $\hat{\varepsilon}$. In this paper, $\varepsilon$ denotes the desired accuracy we would like to achieve. While $\hat{\varepsilon}$ denotes the computation and statistical error evolving from sub-solver.
\end{rem}

\begin{rem}
	$\tm(\lambda)$ denotes the time $\alg$ needs to produce an output policy satisfying the $\propi$ or $\propii$ property.
	In different contents, each unit of time may have different meanings. For example, a unit of time may mean a generated sample or an evaluation of policy gradient.
	In the following, we will specify the meaning of a time unit.
\end{rem}

\begin{asmp}[Bounded strongly-convex regularization]
	\label{ass:reg}
	Assume regularization function $\Omega: \Delta(\AM) \rightarrow \RB^+ \cup \{0\}$ is (i) $1$-strongly convex with respect to $\|\cdot\|$, that is, for any two policies $\pi', \pi$, 
		\[  \Omega(\pi') \ge  \Omega(\pi) + \langle \nabla  \Omega(\pi), \pi'-\pi \rangle + \frac{1}{2} \|\pi'-\pi\|^2.   \]
	and (ii) uniformly bounded, that is, $\max_{\pi \in \Delta(\AM)} \Omega(\pi) \le \Cphi$.
\end{asmp}

\begin{thm}
	\label{thm:reduction}
	Let Assumption~\ref{ass:reg} hold, $\lambda_0$ the initial regularization parameter and $D_0 = V_{\lambda_0}^*(\mu) - V_{\lambda_0}^{\hat{\pi}_0}(\mu).$ 
	For any $T$, \textsf{AdaptReduce} with any solver algorithm satisfying the $\propi$  property will produce a policy $\hat{\pi}_T$ satisfying 
	\[   V^*(\mu)  -V^{\hat{\pi}_T}(\mu) \le \frac{D_0}{4^T} + \frac{4}{3} \hat{\varepsilon} +  \frac{6\lambda_0\Cphi}{1-\gamma}  \frac{1}{2^T}.\]
\end{thm}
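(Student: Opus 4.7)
The plan is to track the regularized suboptimality $D_t := V_{\lambda_t}^*(\mu) - V_{\lambda_t}^{\hat{\pi}_t}(\mu)$ through the iterations and then convert the final gap $D_T$ into an unregularized gap using Proposition~\ref{prop:bias}. The \propi{} guarantee applied with starting policy $\hat{\pi}_t$ and regularizer $\lambda_t$ directly gives
\[
V_{\lambda_t}^*(\mu) - V_{\lambda_t}^{\hat{\pi}_{t+1}}(\mu) \le \tfrac{1}{4} D_t + \hat{\varepsilon},
\]
but this inequality lives in the $\lambda_t$-regularized MDP, whereas $D_{t+1}$ lives in the $\lambda_{t+1}$-regularized MDP. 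Bridging the two is the crux.

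For any policy $\pi$, the decomposition $V_\lambda^\pi = V^\pi - \lambda \Phi^\pi$ together with the bound $\Phi^\pi(\mu) \le \Cphi/(1-\gamma)$ coming from Assumption~\ref{ass:reg} yields $|V_{\lambda_t}^\pi(\mu) - V_{\lambda_{t+1}}^\pi(\mu)| \le \lambda_0 \Cphi /(2^{t+1}(1-\gamma))$. Taking a maximum over $\pi$ on the optimal-value side gives $V_{\lambda_{t+1}}^*(\mu) \le V_{\lambda_t}^*(\mu) + \lambda_0 \Cphi/(2^{t+1}(1-\gamma))$, while $\Phi^\pi \ge 0$ and $\lambda_{t+1} < \lambda_t$ imply $V_{\lambda_{t+1}}^{\hat{\pi}_{t+1}}(\mu) \ge V_{\lambda_t}^{\hat{\pi}_{t+1}}(\mu)$. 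Subtracting these and plugging in the \propi{} bound above produces the one-step recursion
\[
D_{t+1} \le \tfrac{1}{4} D_t + \hat{\varepsilon} + \frac{\lambda_0 \Cphi}{2^{t+1}(1-\gamma)}.
\]

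Unrolling from $t=0$ to $T-1$ gives three geometric contributions. The $D_0$ term decays as $1/4^T$; the $\hat{\varepsilon}$ terms form a geometric series in $1/4$ and sum to at most $\tfrac{4}{3}\hat{\varepsilon}$; the perturbation terms contribute $\tfrac{\lambda_0 \Cphi}{1-\gamma}\sum_{t=0}^{T-1} \tfrac{1}{4^{T-1-t} 2^{t+1}}$, which after reindexing equals $\tfrac{\lambda_0 \Cphi}{(1-\gamma) 2^T}\sum_{u=0}^{T-1} 2^{-u}$ and is at most $2\lambda_0 \Cphi/((1-\gamma) 2^T)$.

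The final step converts $D_T$ into the unregularized suboptimality. Writing
\[
V^*(\mu) - V^{\hat{\pi}_T}(\mu) = \bigl(V^*(\mu) - V_{\lambda_T}^*(\mu)\bigr) + D_T - \lambda_T \Phi^{\hat{\pi}_T}(\mu),
\]
dropping the nonpositive last term, and combining Proposition~\ref{prop:bias} with $\Phi^{\pi_{\lambda_T}^*}(\mu) \le \Cphi/(1-\gamma)$ gives $V^*(\mu) - V_{\lambda_T}^*(\mu) \le 2\lambda_T \Cphi/(1-\gamma)$. Summing everything delivers a bound of the claimed form. The main obstacle I expect is precisely this final unregularized-to-regularized conversion: Proposition~\ref{prop:bias} bounds $V^* - V^{\pi_{\lambda_T}^*}$, not $V^* - V_{\lambda_T}^*$, so one must insert and subtract $V^{\pi_{\lambda_T}^*}(\mu)$ and absorb the residual $\lambda_T\Phi^{\pi_{\lambda_T}^*}(\mu)$, contributing an extra $\lambda_T\Cphi/(1-\gamma)$ that, together with mild slack in the geometric sums, accounts for the constant $6$ appearing in the stated bound.
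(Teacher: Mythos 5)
Your proposal is correct and follows essentially the same route as the paper: track the regularized gap $D_t$, bridge consecutive regularization levels via $V_\lambda^\pi = V^\pi - \lambda\Phi^\pi$ with $0 \le \Phi^\pi(\mu) \le \Cphi/(1-\gamma)$, unroll the resulting contraction-plus-perturbation recursion, and finally convert $D_T$ to the unregularized gap at cost $2\lambda_T\Cphi/(1-\gamma)$. Your one-sided use of $\Phi^\pi \ge 0$ in the bridging step (and the detour through Proposition~\ref{prop:bias} at the end) even yields the slightly sharper constant $4$ in place of the paper's $6$, so the stated bound follows a fortiori.
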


\begin{cor}
	\label{cor:reduction}
	In the same setting of Theorem~\ref{thm:reduction}, \textsf{AdaptReduce} with any solver algorithm satisfying the $\propi$ property where $\hat{\varepsilon} = 
	O( 	 \epsilon )$ will produce an $\epsilon$-optimal policy in iteration $T = O(\log \frac{C_0}{\epsilon(1-\gamma)})$ and time $\sum_{t=0}^{T-1} \tm(\frac{\lambda_0}{2^t})$ where $C_0 = D_0 + \frac{\lambda_0 \Cphi}{(1-\gamma)}$.
\end{cor}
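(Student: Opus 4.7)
The plan is to read off the corollary directly from the three-term upper bound supplied by Theorem~\ref{thm:reduction}, namely
\[
V^*(\mu) - V^{\hat\pi_T}(\mu) \;\le\; \frac{D_0}{4^T} \;+\; \frac{4}{3}\,\hat\varepsilon \;+\; \frac{6\lambda_0 \Cphi}{1-\gamma}\cdot\frac{1}{2^T},
\]
and then separately drive each of the three summands below $\varepsilon/3$. The first step is the additive-noise term: pick the constant hidden in $\hat\varepsilon = O(\varepsilon)$ to be at most $1/4$, i.e.\ $\hat\varepsilon \le \varepsilon/4$, so $\frac{4}{3}\hat\varepsilon \le \varepsilon/3$. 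This is the assumption made on the sub-solver and requires nothing beyond substitution.

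Next I would handle the bias term $\frac{6\lambda_0\Cphi}{(1-\gamma)\,2^T} \le \varepsilon/3$, which forces
\[
T \;\ge\; \log_2\!\frac{18\,\lambda_0 \Cphi}{(1-\gamma)\,\varepsilon} \;=\; O\!\left(\log\frac{\lambda_0 \Cphi}{(1-\gamma)\,\varepsilon}\right),
\]
and the contraction term $D_0/4^T \le \varepsilon/3$, giving the weaker requirement $T \ge \tfrac{1}{2}\log_2(3D_0/\varepsilon)$. Taking the maximum and using $\log\max\{a,b\} \le \log(a+b)$ I get $T = O\!\big(\log\frac{D_0 + \lambda_0\Cphi/(1-\gamma)}{\varepsilon}\big)$. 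Finally, since $\frac{1}{1-\gamma}\ge 1$ I may harmlessly insert an extra $\frac{1}{1-\gamma}$ inside the logarithm's argument to land on the stated form $T = O\!\big(\log\frac{C_0}{\varepsilon(1-\gamma)}\big)$ with $C_0 = D_0 + \frac{\lambda_0\Cphi}{1-\gamma}$.

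For the time complexity, I would simply observe that the $t$-th outer iteration invokes $\alg$ with the regularization parameter $\lambda_t = \lambda_0/2^{t}$ (by the halving update of \textsf{AdaptReduce}) and, by the $\propi$ property, this invocation terminates within time $\tm(\lambda_0/2^t)$. Summing over the $T$ epochs yields the claimed $\sum_{t=0}^{T-1}\tm(\lambda_0/2^t)$.

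I do not expect any real obstacle here: the corollary is a direct algebraic unfolding of Theorem~\ref{thm:reduction}. The only mildly delicate point is book-keeping the three thresholds so that the final $T$ is expressed cleanly in terms of $C_0$ rather than its two constituents $D_0$ and $\lambda_0\Cphi/(1-\gamma)$; this is handled by the crude but convenient bound $\log\max\{a,b\}\le \log(a+b)$ mentioned above, together with the observation that the $1/2^T$ term dominates the $1/4^T$ term so that inverting the latter never worsens the overall scaling.
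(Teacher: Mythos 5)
Your proposal is correct and is essentially the argument the paper intends: the corollary is just the inversion of the three-term bound of Theorem~\ref{thm:reduction}, choosing the constant in $\hat{\varepsilon}=O(\epsilon)$ small enough, solving $\frac{D_0}{4^T}\le O(\epsilon)$ and $\frac{6\lambda_0\Cphi}{(1-\gamma)2^T}\le O(\epsilon)$ for $T$, and noting that each outer epoch costs $\tm(\lambda_0/2^t)$ by the $\propi$ property. Your bookkeeping via $\log\max\{a,b\}\le\log(a+b)$ and the harmless insertion of $\frac{1}{1-\gamma}$ inside the logarithm correctly yields the stated $T=O\bigl(\log\frac{C_0}{\epsilon(1-\gamma)}\bigr)$ with $C_0=D_0+\frac{\lambda_0\Cphi}{1-\gamma}$.
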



\begin{thm}
	\label{thm:reduction2}
	Let Assumption~\ref{ass:reg} hold, $\lambda_0$ the initial regularization parameter and $D_0 = V_{\lambda_0}^*(\mu) - V_{\lambda_0}^{\hat{\pi}_0}(\mu).$ 
	For any $T$, \textsf{AdaptReduce} with any solver algorithm satisfying $\propii$ will produce a policy $\hat{\pi}_T$ satisfying 
	\[   V^*(\mu)  -V^{\hat{\pi}_T}(\mu) \le  \frac{6\lambda_0}{2^T}\frac{C_\phi}{1-\gamma}+\hat{\varepsilon}.\]
\end{thm}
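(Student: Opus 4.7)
The plan is to bound $V^*(\mu)-V^{\hat\pi_T}(\mu)$ by splitting it into a regularization bias term and a regularized sub-optimality term, then applying Proposition~\ref{prop:bias} to the former and the \propii{} guarantee of $\alg$ to the latter. Only the behavior at the final iteration matters, since the intermediate iterates are effectively used just to provide a good warm start for the sub-solver (which is already absorbed into the $\hat\varepsilon$ slack by the definition of \propii{}).

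\textbf{Step 1 (decomposition).} Writing $\pi^\star_{\lambda_{T-1}}$ for the optimal policy in $\mathcal{M}(\lambda_{T-1})$, I would decompose
\begin{equation*}
V^*(\mu)-V^{\hat\pi_T}(\mu)
= \bigl[V^*(\mu)-V^{\pi^\star_{\lambda_{T-1}}}(\mu)\bigr]
+ \bigl[V^{\pi^\star_{\lambda_{T-1}}}(\mu)-V^{\hat\pi_T}(\mu)\bigr].
\end{equation*}
The first bracket is the bias of the regularized optimum; Proposition~\ref{prop:bias} gives the bound $\lambda_{T-1}\Cphi/(1-\gamma)$.

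\textbf{Step 2 (convert to regularized values).} For the second bracket I would use the decomposition $V^\pi = V_{\lambda_{T-1}}^\pi + \lambda_{T-1}\Phi^\pi$ from equation~\eqref{eq:decom}, obtaining
\begin{equation*}
V^{\pi^\star_{\lambda_{T-1}}}(\mu)-V^{\hat\pi_T}(\mu)
= \bigl[V_{\lambda_{T-1}}^{\pi^\star_{\lambda_{T-1}}}(\mu)-V_{\lambda_{T-1}}^{\hat\pi_T}(\mu)\bigr]
+ \lambda_{T-1}\bigl[\Phi^{\pi^\star_{\lambda_{T-1}}}(\mu)-\Phi^{\hat\pi_T}(\mu)\bigr].
\end{equation*}
The first summand is the regularized optimality gap produced by the last call to $\alg$, which by the \propii{} property with $\varepsilon_{T-1}=\lambda_{T-1}\Cphi/(1-\gamma)$ is at most $\lambda_{T-1}\Cphi/(1-\gamma)+\hat\varepsilon$. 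The second summand is bounded via Assumption~\ref{ass:reg}: since $0\le\Omega(\pi,s)\le\Cphi$, we have $0\le\Phi^\pi(\mu)\le\Cphi/(1-\gamma)$ for every $\pi$, yielding a contribution at most $\lambda_{T-1}\Cphi/(1-\gamma)$.

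\textbf{Step 3 (assemble).} Summing the three contributions gives $3\lambda_{T-1}\Cphi/(1-\gamma)+\hat\varepsilon$, and substituting $\lambda_{T-1}=\lambda_0/2^{T-1}$ produces exactly $6\lambda_0\Cphi/(2^T(1-\gamma))+\hat\varepsilon$, as claimed.

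There is no real obstacle: the theorem is essentially a bookkeeping argument combining Proposition~\ref{prop:bias}, the value-function decomposition \eqref{eq:decom}, and the stopping criterion baked into \propii{}. The only subtle point is recognizing that (unlike the \propi{} analysis in Theorem~\ref{thm:reduction}, which needs the geometric-contraction estimate $D_t\le\tfrac14 D_{t-1}+\hat\varepsilon$ telescoped across iterations) here the \propii{} guarantee already provides an absolute regularized sub-optimality bound at the last iteration, so no induction over $t$ is required and the warm-start effect of the earlier iterates is irrelevant to the final bound.
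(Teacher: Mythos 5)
Your proof is correct and uses essentially the same argument as the paper: only the final call to $\alg$ matters (no induction over $t$), and the bound is assembled from the decomposition \eqref{eq:decom}, the bound $0\le\Phi^{\pi}(\mu)\le \Cphi/(1-\gamma)$, and the $\propii$ guarantee with $\varepsilon_{T-1}=\lambda_{T-1}\Cphi/(1-\gamma)$, giving the same constant $3\lambda_{T-1}\Cphi/(1-\gamma)=6\lambda_0\Cphi/(2^T(1-\gamma))$. The only cosmetic difference is that the paper routes the estimate through $D_T=V^*_{\lambda_T}(\mu)-V^{\hat{\pi}_T}_{\lambda_T}(\mu)$ at the post-decay coefficient $\lambda_T$ and re-derives the bias inline via the optimality of $\pi^*_{\lambda_T}$, whereas you anchor everything at $\lambda_{T-1}$ and its optimizer $\pi^*_{\lambda_{T-1}}$ and cite Proposition~\ref{prop:bias}.
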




\subsection{Examples of $\alg$}
\label{sec:example}
There are many choices of the sub-solver $\alg$.
In the section, we specify two important classes of methods, one based on dynamic planning and the other on policy gradient.
There of course exist other powerful optimization methods that suit to do the task (for example, see~\cite{agarwal2019optimality} and reference therein). 
Here we just take those of most interest for example.

\subsubsection{Dynamic Planning Based $\alg$: Regularized Modified Policy Iteration}
\label{sec:egrmpi}
We define the regularized Bellman operator $\TM_{\lambda}^{\pi}: \RB^S \to \RB^S$ with $s \in \SM$ entry given by
\begin{equation}
\label{eq:bellman}
[\TM_{\lambda}^{\pi} V](s) =  \EB_{a \sim \pi(\cdot|s)}(r(s, a) + \gamma \EB_{s' \sim P(\cdot|s, a)} V(s')) - \lambda \Omega(\pi(\cdot|s))  
\end{equation}
One can show that the regularized Bellman operators have the same properties as the classical ones, so \cite{geist2019theory} apply classical dynamic programming to solve the regularized MDP problem.
They proposed regularized regularized MPI (RMPI) by modifying a classic dynamic programming method, Modified policy iteration (MPI)~\cite{puterman1978modified}.

\begin{algorithm}[htb]
	\caption{Regularized Modified Policy Iteration (RMPI)}
	\label{alg:RMPI}
	\begin{algorithmic}
		\STATE {\bfseries Input:} an initial policy $\alpha_0$, a regularization parameter $\lambda$ and $K$ the number of iteration.
		\FOR{iteration $k=0$ {\bfseries to} $K-1$}
		\STATE  find $\alpha_{k+1} $ that satisfies $ \max_{\pi \in \Delta(\AM)}\TM_{\lambda}^{\pi} V_k \le \TM_{\lambda}^{\alpha_{k+1}} V_k + \epsilon_01_S$ point-wisely;  
		\STATE  $V_{k+1} = \left( \TM_{\lambda}^{\alpha_{k+1}} \right)^m V_k + \epsilon_01_{S} $
		\ENDFOR
		\STATE {\bfseries Return: $\alpha_K$} 
	\end{algorithmic}
\end{algorithm}

\begin{thm}
	\label{thm:RMPI}
	Define the concentrability coefficients as
	\[
	C_{\infty}^{i}=\frac{1-\gamma}{\gamma^{i}} \sum_{j=i}^{\infty} \gamma^{j} \max _{\pi_{1}, \ldots, \pi_{j}}\left\|\frac{\nu P_{\pi_{1}} P_{\pi_{2}} \ldots P_{\pi_{j}}}{\mu}\right\|_{\infty}
	\ \text{and} \
	C = \max_{i \ge 0} C_{\infty}^{i},
	\]
	then RMPI (Algorithm~\ref{alg:RMPI}) satisfies the $\propi$ property with $\hat{\varepsilon}=\frac{(1+2\gamma)C}{(1-\gamma)^2}\epsilon_0$ and $  \tm(\lambda) = \ln\frac{8C}{1-\gamma}/ \ln\frac{1}{\gamma}$,
	where each unit time means one iteration processed to update the policy.
\end{thm}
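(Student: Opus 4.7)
The strategy is to port the approximate modified policy iteration (AMPI) error-propagation analysis of \cite{scherrer2015approximate} to the regularized Bellman operator. The key enabling observation is that $\TM_\lambda^\pi V = r^\pi - \lambda\Omega(\pi,\cdot) + \gamma P^\pi V$ is affine in $V$, is a $\gamma$-contraction in $\|\cdot\|_\infty$, and satisfies the same monotonicity and max-preserving properties as its unregularized counterpart; consequently the telescoping identities used to unroll AMPI carry over after the substitutions $V^\pi \mapsto V_\lambda^\pi$ and $V^* \mapsto V_\lambda^*$, a point already noted in the regularized-MDP analysis of \cite{geist2019theory}.

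First I would record the two per-iteration errors of Algorithm~\ref{alg:RMPI}: by construction both the greedy-improvement error $\max_\pi \TM_\lambda^\pi V_k - \TM_\lambda^{\alpha_{k+1}} V_k$ and the evaluation error $V_{k+1} - (\TM_\lambda^{\alpha_{k+1}})^m V_k$ are bounded pointwise by $\epsilon_0\,1_S$. Plugging these into Scherrer's one-step recursion yields a pointwise inequality
\[
V_\lambda^* - V_\lambda^{\alpha_{k+1}} \;\le\; \gamma P^{\pi_\lambda^*}\bigl(V_\lambda^* - V_\lambda^{\alpha_k}\bigr) + e_k,
\]
where $e_k \ge 0$ is dominated by a constant of order $\tfrac{1+2\gamma}{1-\gamma}$ times $\epsilon_0\,1_S$ after summing the geometric series induced by the $m$-step partial evaluation. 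Unrolling this $K$ times and integrating against $\mu$ gives
\[
V_\lambda^*(\mu) - V_\lambda^{\alpha_K}(\mu) \;\le\; \gamma^K \bigl\langle \mu(P^{\pi_\lambda^*})^K,\ V_\lambda^* - V_\lambda^{\alpha_0}\bigr\rangle + \sum_{i=0}^{K-1} \gamma^{K-1-i}\bigl\langle \mu(P^{\pi_\lambda^*})^{K-1-i},\ e_i\bigr\rangle.
\]

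Both terms are then controlled by the concentrability coefficient $C = \max_i C_\infty^i$. The definition of $C_\infty^j$ immediately yields $\max_{\pi_1,\ldots,\pi_j}\|\mu P_{\pi_1}\cdots P_{\pi_j}/\mu\|_\infty \le C/(1-\gamma)$, so every $j$-step pushforward of $\mu$ under $\pi_\lambda^*$ is absolutely continuous with respect to $\mu$ with density bounded by $C/(1-\gamma)$. Applied to the cumulative error, this collapses the sum to at most $\tfrac{(1+2\gamma)C}{(1-\gamma)^2}\epsilon_0 = \hat\varepsilon$, and applied to the initial-error term it gives a bound of $\gamma^K \cdot \tfrac{C}{1-\gamma}\cdot\bigl(V_\lambda^*(\mu)-V_\lambda^{\alpha_0}(\mu)\bigr)$. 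Choosing $K = \ln(8C/(1-\gamma))/\ln(1/\gamma)$ forces $\gamma^K \cdot \tfrac{C}{1-\gamma}\le \tfrac{1}{8}\le \tfrac{1}{4}$, which establishes the $\propi$ property with the advertised $\tm(\lambda)$.

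The main technical obstacle is verifying that Scherrer's telescoping goes through unchanged in the regularized setting: one must check (i) that $\GM_\lambda(V_\lambda^*) = \pi_\lambda^*$ replaces the ordinary greedy identity, (ii) that the $-\lambda\Omega(\pi,\cdot)$ offsets in successive applications of $\TM_\lambda^{\alpha_{k+1}}$ cancel correctly, since they depend only on the policy and not on $V$, and (iii) that the approximation errors $\epsilon_0\,1_S$ propagate through $(I-\gamma P^{\alpha_{k+1}})^{-1}$ with exactly the same constants as in the unregularized AMPI analysis. These are routine but require careful bookkeeping; the concentrability $C$ enters precisely where expectations over future state distributions under $\pi_\lambda^*$ must be exchanged for expectations under the initial distribution $\mu$.
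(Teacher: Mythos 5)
Your proposal follows essentially the same route as the paper: the paper simply invokes the AMPI error-propagation bound (Corollary 1 of \cite{geist2019theory}, i.e.\ the regularized form of Theorem 7 of \cite{scherrer2015approximate}) with $p=q'=1$, $q=\infty$ and uniform per-iteration errors $\epsilon_0 1_S$, bounds the accumulated error by $\frac{(1+2\gamma)C}{(1-\gamma)^2}\epsilon_0$, and chooses $k$ so that the contracted initial term $\frac{2C\gamma^k}{1-\gamma}\le\frac14$, giving $k\ge\ln\frac{8C}{1-\gamma}/\ln\frac{1}{\gamma}$ --- exactly your plan. One caveat: your displayed recursion $V_\lambda^*-V_\lambda^{\alpha_{k+1}}\le\gamma P^{\pi_\lambda^*}\left(V_\lambda^*-V_\lambda^{\alpha_k}\right)+e_k$ oversimplifies the actual AMPI propagation (which mixes the kernels of the generated policies and the $m$-step evaluation --- precisely why $C$ is a maximum over arbitrary policy sequences, and why the initial-error term carries the extra factor $2$), but since you ultimately defer to the Scherrer/Geist machinery this does not affect the conclusion or the stated constants.
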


Theorem~\ref{thm:RMPI} shows that RMPI have $\propi$ property; here $\hat{\epsilon}$ results from the numerical error $\epsilon_0$.
Besides, the time $ \tm(\lambda) = \ln\frac{8C}{1-\gamma}/ \ln\frac{1}{\gamma}$ has nothing to do with $\lambda$; indeed, for a sequence that is generated by a compressed mapping, it needs only a constant number of iterations to get a constant factor close to the fix point\footnote{Formally speaking, let $\mathcal{T}$ be a $\gamma$-compressed mapping on $\RB^S$, then it has a fix point $X^* = \TM X^* \in \RB^S$. Consider the sequence $X_k = \TM X_{k-1}$, then we have $\|X_k - X^*\| =\| \TM ( X_{k-1} - X^* ) \| \le \gamma \| X_{k-1} - X^* \|$. Hence as long as $k \ge \log 4/\log \gamma$, we have $\|X_k - X^*\| \le \frac{1}{4} \|X_0 - X^*\|$.}.
The fact that the regularized Bellman operator is a $\gamma$-contraction (see Proposition 2 in~\cite{geist2019theory}) makes $ \tm(\lambda)$ unrelated with $\lambda$.

\begin{cor}
	\label{cor:RMPI}
		Given an accuracy $\epsilon$, if $\epsilon_0$ is sufficiently small such that $\epsilon_0  \le  \frac{(1-\gamma)^2}{(1+2\gamma)C} \cdot \epsilon $, then $\alg$ with RMPI serving as the sub-solver will produce an $\epsilon$-optimal policy in outer iteration $T = O(\log\frac{C_0}{\epsilon})$ and in time (total iteration)
	\[  O\left(   \log \frac{8C}{1-\gamma} \log\frac{C_0}{\epsilon}/\log \frac{1}{\gamma} \right) = O\left( \frac{1}{1-\gamma} \log \frac{8C}{1-\gamma}  \log\frac{C_0}{\epsilon} \right)  \]
	with $ C_0 = V_{\lambda_0}^*(\mu) - V_{\lambda_0}^{\hat{\pi}_0}(\mu) +
	\frac{\lambda_0 \Cphi}{(1-\gamma)}$.
\end{cor}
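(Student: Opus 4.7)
The plan is to compose the two results that have already been established: the Prop-I certification for RMPI (Theorem~\ref{thm:RMPI}) and the generic reduction bound (Corollary~\ref{cor:reduction}). First I would check that the choice of $\epsilon_0$ makes the approximation error $\hat{\varepsilon}$ small enough to feed into Corollary~\ref{cor:reduction}. Theorem~\ref{thm:RMPI} gives $\hat{\varepsilon}=\frac{(1+2\gamma)C}{(1-\gamma)^{2}}\epsilon_{0}$, so the hypothesis $\epsilon_{0}\le\frac{(1-\gamma)^{2}}{(1+2\gamma)C}\cdot\epsilon$ is exactly calibrated to force $\hat{\varepsilon}\le\epsilon$, which matches the $\hat{\varepsilon}=O(\epsilon)$ requirement of Corollary~\ref{cor:reduction}.

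Next, I would invoke Corollary~\ref{cor:reduction} directly. The outer iteration count is $T=O(\log\frac{C_{0}}{\epsilon(1-\gamma)})=O(\log\frac{C_{0}}{\epsilon})$ after absorbing the $1/(1-\gamma)$ factor into the constant $C_{0}$, and the total cost is the telescoping sum $\sum_{t=0}^{T-1}\tm(\lambda_{0}/2^{t})$. The key observation that makes this sum collapse is that the bound $\tm(\lambda)=\ln\frac{8C}{1-\gamma}/\ln\frac{1}{\gamma}$ delivered by Theorem~\ref{thm:RMPI} is \emph{independent of $\lambda$}. This is a direct consequence of the fact that the regularized Bellman operator $\TM_{\lambda}^{\pi}$ is a $\gamma$-contraction (Proposition~2 of \citet{geist2019theory}), so the number of iterations needed to shrink the initial residual by a fixed factor $1/4$ depends only on $\gamma$ and on the concentrability constant $C$, not on the temperature.

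With $\lambda$-independence in hand, the sum simplifies to a product: total iterations $=T\cdot\tm=O\!\left(\log\frac{C_{0}}{\epsilon}\cdot\frac{\log(8C/(1-\gamma))}{\log(1/\gamma)}\right)$. The last step is the elementary bound $\log(1/\gamma)\ge 1-\gamma$ for $\gamma\in(0,1)$, which converts $1/\log(1/\gamma)$ into $O(1/(1-\gamma))$ and produces exactly the stated complexity $O\!\left(\frac{1}{1-\gamma}\log\frac{8C}{1-\gamma}\log\frac{C_{0}}{\epsilon}\right)$.

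There is essentially no obstacle here, since both ingredients are stated in the excerpt and the argument is a clean composition. The only mildly non-trivial step is recognizing that the contraction property of $\TM_{\lambda}^{\pi}$ renders $\tm(\lambda)$ uniform in $\lambda$, which is precisely what lets the geometric decay of $\lambda_{t}$ inside \textsf{AdaptReduce} not inflate the inner cost, yielding an overall rate that merely multiplies the single-call cost by the logarithmic number of outer epochs.
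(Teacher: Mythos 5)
Your proposal is correct and follows exactly the paper's own route: the paper proves this corollary simply by combining Theorem~\ref{thm:RMPI} (which gives $\hat{\varepsilon}=\frac{(1+2\gamma)C}{(1-\gamma)^2}\epsilon_0$ and a $\lambda$-independent $\tm(\lambda)=\ln\frac{8C}{1-\gamma}/\ln\frac{1}{\gamma}$) with Corollary~\ref{cor:reduction}, just as you do, including the final conversion $1/\log(1/\gamma)=O(1/(1-\gamma))$.
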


The combination of Corollary~\ref{cor:reduction} and Theorem~\ref{thm:RMPI} leads to Corollary~\ref{cor:RMPI} directly.
When $\lambda = 0$, RMPI is reduced to MPI, the latter still satisfying the $\propi$ property.
For sake of simplicity, let assume $\epsilon_0=0$ without loss of generality\footnote{Otherwise, we can set $\epsilon_0$ sufficiently small such that $\hat{\epsilon} = O(\epsilon)$, then the effect of $\hat{\epsilon}$ is ignorable.}.
In order to reach a $\epsilon$-optimal policy, MPI needs about $O(\frac{1}{1-\gamma}\log\frac{1}{\epsilon})$ iterations.
Interestingly, RMPI needs the same number of iterations up to only a constant factor, which implies there is no harm to use RMPI instead of MPI.

\begin{rem}
	Note that RMPI alone converges linearly, which means one step of RMPI is already able to improve the current policy considerably. 
	When applying RMPI as a sub-solver for \textsf{AdaptReduce}, we run $\widetilde{\OM}(1)$ steps of RMPI between consecutive $\lambda$ decays.
	We find that this can be relaxed into a more simplified form, where the policy $\pi$ and coefficient $\lambda$ can be updated alternatively as Algorithm~\ref{alg:RMPI-r} describes. 
	Under this scheme, we show in Theorem~\ref{thm:rmpi-r} that the convergence rate can be imporved to $O\left(\frac{1}{1-\gamma}\log\frac{C}{\varepsilon(1-\gamma)}\right)$, a logarithmic term removed.
\end{rem}
\begin{thm}
	\label{thm:rmpi-r}
	Given an accuracy $\varepsilon$ and assuming $\frac{1}{2}\le\gamma^2$, if $\varepsilon_0$ satisfies $\varepsilon_0\le\frac{(1-\gamma)^2}{6}$, the output $\pi_T$ of Algorithm~\ref{alg:RMPI-r} is an $\epsilon$-optimal policy after $O\left(\frac{1}{1-\gamma}\log\frac{C}{\varepsilon(1-\gamma)}\right)$ iterations.
\end{thm}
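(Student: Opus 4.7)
The plan is to track the regularized sub-optimality $e_t := V_{\lambda_t}^*(\mu) - V_{\lambda_t}^{\hat\pi_t}(\mu)$ along the alternating $(\pi,\lambda)$ updates of Algorithm~\ref{alg:RMPI-r} and show it decays like $\gamma^t$ despite the reference $V_{\lambda_t}^*$ moving at every step. First I would combine two one-step ingredients. (a) One application of the regularized greedy Bellman update is a $\gamma$-contraction toward $V_{\lambda_t}^*$ (Proposition~2 of \cite{geist2019theory}), so up to the $\varepsilon_0$-slack in the greedy maximization,
$$V_{\lambda_t}^*(\mu) - V_{\lambda_t}^{\hat\pi_{t+1}}(\mu) \;\le\; \gamma\,e_t + \tfrac{\varepsilon_0}{1-\gamma}.$$
(b) Halving $\lambda$ shifts the reference fixed point, but by the envelope identity $V_\lambda^* = \max_\pi(V^\pi - \lambda\Phi^\pi)$ with $\|\Phi^\pi\|_\infty \le C_\Phi/(1-\gamma)$, $\lambda\mapsto V_\lambda^*$ is Lipschitz of constant $C_\Phi/(1-\gamma)$; the same bound applies to $V_\lambda^{\hat\pi_{t+1}}$ via the decomposition~(\ref{eq:decom}). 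The triangle inequality yields the master recursion
$$e_{t+1} \;\le\; \gamma\,e_t + \frac{\lambda_t C_\Phi}{1-\gamma} + \frac{\varepsilon_0}{1-\gamma}.$$

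Unrolling with $\lambda_t = \lambda_0/2^t$ produces a convolution $\sum_{k=0}^{T-1}\gamma^{T-1-k}2^{-k}$ whose control is the key calculation. Under the hypothesis $\gamma^2 \ge 1/2$, equivalently $2\gamma \ge \sqrt{2}>1$, reindexing gives $(1/2)^{T-1}\sum_{j=0}^{T-1}(2\gamma)^j \le \tfrac{2\gamma^T}{2\gamma-1}$, which is $O(\gamma^T)$ with an absolute-constant prefactor. This is precisely where the balance condition $\gamma^2\ge 1/2$ is essential: without it the halving of $\lambda$ would outrun the $\gamma$-contraction and the accumulated drift would stall at a fixed size, destroying the logarithmic rate. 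I therefore obtain $e_T \le \gamma^T\bigl(D_0 + O\bigl(\tfrac{\lambda_0 C_\Phi}{1-\gamma}\bigr)\bigr) + O\bigl(\tfrac{\varepsilon_0}{(1-\gamma)^2}\bigr)$.

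Finally I would convert $e_T$ to unregularized sub-optimality. Since $\Phi^\pi \ge 0$ one has $V^{\hat\pi_T} \ge V_{\lambda_T}^{\hat\pi_T}$, and Proposition~\ref{prop:bias} bounds the bias $\|V^* - V_{\lambda_T}^*\|_\infty \le \lambda_T C_\Phi/(1-\gamma) = O(2^{-T})$; both error sources are dominated by the $\gamma^T$ envelope, giving $V^*(\mu) - V^{\hat\pi_T}(\mu) \le O(\gamma^T) + O\bigl(\tfrac{\varepsilon_0}{(1-\gamma)^2}\bigr)$. Choosing $T = O\bigl(\tfrac{1}{1-\gamma}\log\tfrac{C}{\varepsilon(1-\gamma)}\bigr)$ pushes the $\gamma^T$ term below $\varepsilon$, while the hypothesis $\varepsilon_0 \le (1-\gamma)^2/6$ keeps the residual noise within the same $\varepsilon$-budget.

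The chief obstacle I expect is the moving-fixed-point phenomenon: because each $e_t$ is measured against a different $V_{\lambda_t}^*$, the contraction inequality cannot be iterated naively, and absorbing the $\lambda$-drift into a net $\gamma^T$ rate requires both the Lipschitz bound on $\lambda \mapsto V_\lambda^*$ and the delicate balance $\gamma^2 \ge 1/2$. A secondary subtlety is that the $\varepsilon_0$-slack in the greedy step is amplified through the horizon factor $1/(1-\gamma)^2$, which is precisely why the hypothesis on $\varepsilon_0$ scales like $(1-\gamma)^2$.
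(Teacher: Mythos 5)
There is a genuine gap at the heart of your argument, namely your one-step inequality (a): $V_{\lambda_t}^*(\mu) - V_{\lambda_t}^{\hat\pi_{t+1}}(\mu) \le \gamma\,e_t + \varepsilon_0/(1-\gamma)$, where $e_t$ is a \emph{policy-value} sub-optimality gap. Algorithm~\ref{alg:RMPI-r} performs only a single (approximate) Bellman backup per iteration, $V_{t+1}=\TM_{\lambda_t}^{\pi_t}V_t+\varepsilon_0 1_S$, and then greedifies with respect to the \emph{iterate} $V_{t+1}$, not with respect to the true value function $V_{\lambda_t}^{\pi_t}$. The $\gamma$-contraction of the regularized Bellman operator (Proposition 2 of \cite{geist2019theory}) only gives contraction of iterate distances such as $\|V_{t+1}-V_{\lambda_t}^*\|_\infty \le \gamma\|V_t-V_{\lambda_t}^*\|_\infty + O(\varepsilon_0)$; it does not give a per-step $\gamma$-contraction of the gap between $V_\lambda^*$ and the value of the greedy policy. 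Converting ``the iterate $V$ is close to $V_\lambda^*$'' into ``the value of the policy greedy w.r.t.\ $V$ is close to $V_\lambda^*$'' costs the classical factor $\tfrac{2\gamma}{1-\gamma}$; if you paid that factor inside your recursion on $e_t$, the effective contraction coefficient would exceed $1$ and the unrolling would collapse. A per-step contraction of policy values is what exact policy iteration (full evaluation, $m=\infty$) provides, but it is not what this alternating $m=1$ scheme does, and your proposal never uses the iterates $V_t$ at all, which is also where the two $\varepsilon_0$ errors (evaluation and greedy) actually enter.

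This is precisely the point the paper's proof is organized around: it runs the drift-augmented recursion on the \emph{iterates}, $\|V^*-V_{t+1}\|_\infty \le \gamma\|V^*-V_t\|_\infty + \lambda_t C_\Phi + \varepsilon_0$ (Lemma~\ref{lem:rmpi-2}, with the unregularized $V^*$ as the fixed reference so no moving-fixed-point bookkeeping is needed), and converts iterate error to policy sub-optimality only \emph{once} at the end, via $\|V_{T+1}-V_{\lambda_T}^{\pi_T}\|_\infty \le \tfrac{\gamma}{1-\gamma}\|V_T-V_{T+1}\|_\infty + \tfrac{\varepsilon_0}{1-\gamma}$ (Lemma~\ref{lem:rmpi-1}); that single conversion is exactly where the $\tfrac{1}{(1-\gamma)^2}$ prefactors in the final bound come from. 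Your peripheral steps are fine and even slightly sharper in places: the geometric-convolution estimate under $\gamma^2\ge 1/2$ (your $\tfrac{2\gamma^T}{2\gamma-1}$ prefactor versus the paper's $\tfrac{\gamma^{T-1}}{1-\gamma}$ obtained from $2^{-k}\le\gamma^{2k}$), the final bias conversion via $\Phi^\pi\ge 0$ and Proposition~\ref{prop:bias}, and the scaling $\varepsilon_0 \lesssim (1-\gamma)^2$ are all consistent with the paper. To repair the proof, replace $e_t$ by an iterate-based quantity such as $\|V_t - V^*\|_\infty$ (or $\|V_t - V_{\lambda_t}^*\|_\infty$ with your Lipschitz-in-$\lambda$ drift bound), run your recursion and convolution argument there, and then apply a single greedy/evaluation conversion at the final step.
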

\begin{algorithm}[htb!]
	\caption{Regularized Modified Policy Iteration (RMPI) with Alternative Reduction}
	\label{alg:RMPI-r}
	\begin{algorithmic}
		\STATE {\bfseries Input:} an initial value function $V_0$ and policy $\pi_0$, a regularization parameter $\lambda_0$ and $T$ the number of iteration.
		\FOR{iteration $t=0$ {\bfseries to} $T-1$}
		\STATE $V_{t+1}=\TM_{\lambda_t}^{\pi_t}V_t+\varepsilon_0 1_{\S}$
		\STATE $\lambda_{t+1}=\frac{1}{2}\lambda_t$
		\STATE find $\pi_{t+1}$ that satisfies $\max_{\pi\in\Delta(\AM)}\TM_{\lambda_{t+1}}^{\pi}V_{t+1}\le\TM_{\lambda_{t+1}}^{\pi_{t+1}}V_{t+1}+\varepsilon_0 1_{S}$
		\ENDFOR
		\STATE {\bfseries Return: $\pi_T$} 
	\end{algorithmic}
\end{algorithm}

\subsubsection{Policy Gradient Based $\alg$: Projected Gradient Ascent}
Projected Gradient ascent method is a practical and popular RL method. 
\citet{agarwal2019optimality} showed that projected gradient ascent can converge to $\varepsilon$-optimal policy with $\widetilde{O}(\frac{1}{\varepsilon^2})$ iterations in tabular un-regularized MDP. 
By similar proof techniques, we argue that regularized MDP with projected gradient ascent (Algorithm~\ref{alg:pga}) can also converge with rate $\widetilde{O}(\frac{1}{\varepsilon^2})$ as Theorem~\ref{thm:pga} shows. Before we provide detailed convergence results, we have to clarify the policy optimization oracle. Even though we would like to obtain a near optimal policy w.r.t. initial distribution $\mu$, but we always only own access with $\nu$-restart model \cite{shani2019adaptive, kakade2002approximately}. Fortunately, for Tabular MDP without parameterization, the optimal policy $\pi_\lambda^*$ is simultaneously optimal for all starting $s$ \cite{bellman1959functional, yang2019regularized}. Thus, we have Theorem~\ref{thm:diffval} to control value difference under different initial distributions. To simplify, we always assume $\min_s\nu(s)>0$ while $\mu\ll\nu$ is already enough.

\begin{thm}
\label{thm:diffval}
    For any two initial distributions $\mu,\nu$ satisfying $\min_s \nu(s)>0$, the following inequality holds for any $\pi$:
    \begin{align}
        J_\mu(\pi_\lambda^*,\lambda)-J_\mu(\pi,\lambda)\le\left\|\frac{\mu}{\nu}\right\|_\infty\left(J_\nu(\pi_\lambda^*,\lambda)-J_\nu(\pi,\lambda)\right)
    \end{align}
    where $\left\|\frac{\mu}{\nu}\right\|_\infty=\max_s\frac{\mu(s)}{\nu(s)}$.
\end{thm}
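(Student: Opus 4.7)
The plan is to exploit the key fact already invoked in the paragraph preceding the theorem: in tabular MDPs without parameterization, the regularized optimal policy $\pi_\lambda^*$ is simultaneously optimal for every initial state $s$, not merely for the fixed starting distribution. Formally, $V_\lambda^{\pi_\lambda^*}(s) \ge V_\lambda^{\pi}(s)$ for every $s \in \SM$ and every policy $\pi$. This pointwise dominance is what turns the difference $V_\lambda^{\pi_\lambda^*}(s) - V_\lambda^{\pi}(s)$ into a nonnegative function of $s$, which is the only ingredient needed to switch the base distribution from $\mu$ to $\nu$.

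First I would expand each side of the desired inequality in terms of state-wise value differences. Writing $J_\mu(\pi,\lambda) = \sum_s \mu(s) V_\lambda^{\pi}(s)$, the left-hand side becomes
\begin{equation*}
J_\mu(\pi_\lambda^*,\lambda) - J_\mu(\pi,\lambda) = \sum_{s \in \SM} \mu(s)\bigl[V_\lambda^{\pi_\lambda^*}(s) - V_\lambda^{\pi}(s)\bigr].
\end{equation*}
Since $\nu(s) > 0$ for every $s$, I can multiply and divide by $\nu(s)$ to obtain
\begin{equation*}
J_\mu(\pi_\lambda^*,\lambda) - J_\mu(\pi,\lambda) = \sum_{s \in \SM} \nu(s)\cdot \frac{\mu(s)}{\nu(s)} \bigl[V_\lambda^{\pi_\lambda^*}(s) - V_\lambda^{\pi}(s)\bigr].
\end{equation*}

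The second step is the crucial use of pointwise optimality: because $V_\lambda^{\pi_\lambda^*}(s) - V_\lambda^{\pi}(s) \ge 0$ for all $s$, each summand is nonnegative, so the ratio $\mu(s)/\nu(s)$ can be bounded above by its supremum $\|\mu/\nu\|_\infty$ term by term without any sign concerns. This yields
\begin{equation*}
\sum_{s \in \SM} \nu(s)\cdot \frac{\mu(s)}{\nu(s)} \bigl[V_\lambda^{\pi_\lambda^*}(s) - V_\lambda^{\pi}(s)\bigr] \le \Bigl\|\frac{\mu}{\nu}\Bigr\|_\infty \sum_{s \in \SM} \nu(s)\bigl[V_\lambda^{\pi_\lambda^*}(s) - V_\lambda^{\pi}(s)\bigr],
\end{equation*}
and the right-hand side is exactly $\|\mu/\nu\|_\infty \bigl(J_\nu(\pi_\lambda^*,\lambda) - J_\nu(\pi,\lambda)\bigr)$, closing the argument.

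There is essentially no hard step here; the only point worth flagging is the justification of the pointwise optimality of $\pi_\lambda^*$, which is what permits the nonnegativity argument. For completeness I would cite the Bellman optimality principle carried over to regularized MDPs (e.g.\ \cite{bellman1959functional, yang2019regularized}), noting that the regularized Bellman operator $\TM_\lambda^\pi$ defined in \eqref{eq:bellman} is a $\gamma$-contraction and its unique greedy policy achieves the maximum state-wise, so the optimal policy does not depend on the initial distribution. Without this uniformity (for instance under restricted parameterization) the summands could change sign and the inequality would fail, so this is the sole structural hypothesis being used.
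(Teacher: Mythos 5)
Your proof is correct, and it is exactly the argument the paper intends: the text immediately preceding the theorem invokes the simultaneous (state-wise) optimality of $\pi_\lambda^*$ in the tabular regularized setting, and the paper gives no further proof, leaving precisely the importance-ratio computation you spell out (nonnegativity of $V_\lambda^{\pi_\lambda^*}(s)-V_\lambda^{\pi}(s)$ plus bounding $\mu(s)/\nu(s)$ by $\|\mu/\nu\|_\infty$). Your flagging of pointwise optimality as the sole structural hypothesis, and of $\min_s\nu(s)>0$ ensuring the ratio is finite, is the right emphasis; nothing further is needed.
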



\begin{asmp}[Bounded distribution mismatch]
	\label{ass:ubcc}
	Assume $0<\left\|\frac{\mu}{\nu}\right\|\le\rho$.
\end{asmp}

\begin{asmp}[Uniform bounded $\nu$-restart model]
    \label{ass:ubnu}
    Assume $\left\|\frac{d_{\pi^*_\lambda,\nu}}{d_{\pi,\nu}}\right\|_{\infty}\le\rho_\nu$ for all $0\le\lambda\le\lambda_0$ and $\pi\in\Delta(\AM)^\SM$.
\end{asmp}

In Assumption~\ref{ass:ubcc}, $\rho$ measures distribution mismatch between target initial distribution $\mu$ and behavior initial distribution $\nu$. In Assumption~\ref{ass:ubnu}, $\rho_\nu$ measures how uniform of $\nu$ is. If $\nu$ is a uniform distribution on $\SM$, $\rho_\nu=1$ is obtained. Also, $\rho_\nu$ can be upper bounded by $\frac{1}{(1-\gamma)\min_s \nu(s)}$ trivially. These two assumptions are slightly different than the usual distribution mismatch $\|d_{\pi^*_\lambda,\mu}/\nu\|_{\infty}$ mentioned in \cite{kakade2002approximately, bhandari2019global, shani2019adaptive, agarwal2019optimality}. In fact, the true distribution shift to be bounded is $\|d_{\pi_\lambda^*,\mu}/d_{\pi,\nu}\|_\infty$ for any policy $\pi$. \citet{kakade2002approximately} bound it with $\left\|\frac{d_{\pi^*_\lambda,\mu}}{(1-\gamma)\nu}\right\|_{\infty}$. We argue we can decompose this term into two parts:
\begin{align*}
    \left\|\frac{d_{\pi_\lambda^*,\mu}}{d_{\pi,\nu}}\right\|_\infty=\left\|\frac{d_{\pi_\lambda^*,\mu}}{d_{\pi_\lambda^*,\nu}}\cdot\frac{d_{\pi_\lambda^*, \nu}}{d_{\pi,\nu}}\right\|_\infty\le\left\|\frac{d_{\pi_\lambda^*,\mu}}{d_{\pi_\lambda^*,\nu}}\right\|_\infty\left\|\frac{d_{\pi_\lambda^*, \nu}}{d_{\pi,\nu}}\right\|_\infty\le\rho\rho_\nu
\end{align*}
where the last inequality holds by definition of $d_{\pi,\mu}$.

\begin{rem}
In \cite{agarwal2019optimality}, the convergence result of Projected Gradient Ascent is given for the minimizer of previous $T$ steps, i.e., $\min_{t=0,...,T-1}J_\mu(\pi^*,0)-J_\mu(\pi_{t+1},0)$.
However, we can't identify where the minimizer indexes, since when we have no knowledge about the target initial distribution $\mu$, the inconsistency between $\mu$ and actual initial distribution $\nu$ disables us to evaluate the value of $J_\mu(\pi_{t},0)$ for a given $t$.
With alternative assumptions~\ref{ass:ubcc} and~\ref{ass:ubnu}, Theorem~\ref{thm:diffval} guarantees that the optimality measured by initial distribution $\nu$ implies that measured by $\mu$.
Hence, an $\varepsilon/\rho$-optimal policy w.r.t. $\nu$ is also an $\varepsilon$-optimal policy w.r.t. $\mu$.
\end{rem}

\begin{thm}[Convergence of Projected Gradient Ascent]
	\label{thm:pga}
	Assume $\Omega(\pi)$ is smooth, $\lambda$ is fixed and Assumption~\ref{ass:ubnu} holds, we can obtain an policy $\pi_T$ satisfying:
	\begin{align}
		J_\nu(\pi_\lambda^*,\lambda)-J_\nu(\pi_T, \lambda)\le4\rho_\nu\sqrt{|\SM|}\left(\sqrt{\frac{2L_\lambda(J_\nu(\pi^*_\lambda,\lambda)-J_\nu(\pi_0,\lambda))}{T}}\right)
	\end{align}
	where $L_\lambda$ is the smoothness of $J_\nu(\pi,\lambda)$.
\end{thm}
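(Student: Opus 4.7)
The plan is to follow the non-convex projected-gradient framework of \citet{agarwal2019optimality}, extended to accommodate the regularization term.

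First, I would verify the smoothness claim. Since $\Omega$ is assumed smooth, a computation analogous to Lemma~E.2 of \citet{agarwal2019optimality} (which shows that $V^\pi(\nu)$ is $\beta$-smooth on $\Delta(\AM)^{\SM}$ with $\beta = O(|\AM|/(1-\gamma)^3)$) combined with propagating the smoothness of $\Omega$ through $\Phi^\pi(\nu)$ (an extra $1/(1-\gamma)$ factor) yields a finite smoothness constant $L_\lambda$ for $J_\nu(\pi,\lambda) = V^\pi(\nu) - \lambda\Phi^\pi(\nu)$. Since the theorem keeps $L_\lambda$ abstract, only its finiteness is needed.

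Second, with step size $\eta = 1/L_\lambda$ and gradient mapping $G_t := L_\lambda(\pi_{t+1}-\pi_t)$, the standard descent lemma for projected gradient ascent on an $L_\lambda$-smooth function gives
$$J_\nu(\pi_{t+1},\lambda) \geq J_\nu(\pi_t, \lambda) + \frac{1}{2L_\lambda}\|G_t\|_2^2.$$
Telescoping over $t=0,\dots,T-1$ and upper bounding the cumulative increase by $J_\nu(\pi_\lambda^*, \lambda) - J_\nu(\pi_0, \lambda)$ produces $\min_{t<T}\|G_t\|_2^2 \leq \frac{2L_\lambda (J_\nu(\pi_\lambda^*, \lambda) - J_\nu(\pi_0, \lambda))}{T}$. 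I let $\pi_T$ denote the iterate attaining this minimum.

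Third, I would convert the gradient-mapping bound into a suboptimality bound via two ingredients: (i) a first-order optimality argument based on non-expansiveness of projection onto the product simplex, showing that
$$\max_{\bar\pi \in \Delta(\AM)^{\SM}} \langle \nabla J_\nu(\pi_T, \lambda), \bar\pi - \pi_T\rangle \;\leq\; O\!\left(\sqrt{|\SM|}\,\|G_T\|_2\right),$$
where the $\sqrt{|\SM|}$ factor arises from bounding the $\ell_2$-diameter of $\Delta(\AM)^{\SM}$; and (ii) a regularized gradient-dominance inequality
$$J_\nu(\pi_\lambda^*,\lambda) - J_\nu(\pi, \lambda) \;\leq\; \left\|\frac{d_{\nu}^{\pi_\lambda^*}}{d_{\nu}^{\pi}}\right\|_\infty \max_{\bar\pi \in \Delta(\AM)^{\SM}}\langle \nabla J_\nu(\pi,\lambda), \bar\pi - \pi\rangle,$$
which follows from the regularized performance-difference identity together with concavity of $-\lambda\Omega$. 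Bounding the density ratio by $\rho_\nu$ via Assumption~\ref{ass:ubnu} and combining with (i) yields the stated bound; numerical constants are absorbed into the factor $4$.

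The main obstacle is step (ii), the gradient-dominance reduction in the presence of $\lambda\Omega$. In the unregularized case of \citet{agarwal2019optimality}, the identity holds cleanly because $V^\pi(s)$ is linear in $\pi(\cdot|s)$ when the other rows of $\pi$ are held fixed; the regularization breaks that linearity. However, $\Omega$ is convex, so $-\lambda\Omega$ is concave in each row, and a tangent-plane upper bound still controls $J_\nu(\pi_\lambda^*,\lambda)-J_\nu(\pi,\lambda)$ by a linear functional of $\pi_\lambda^* - \pi$, preserving the inequality with only the distribution-shift overhead $\rho_\nu$.
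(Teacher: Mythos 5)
Your proposal is correct and follows essentially the same route as the paper's proof: smoothness of $J_\nu(\cdot,\lambda)$, the ascent/gradient-mapping telescoping with $\eta=1/L_\lambda$, the projection-based stationarity conversion (the paper's Lemma~\ref{lem:eps-stationary} with the $\sqrt{|\SM|}$ diameter factor), and the regularized performance-difference identity plus convexity of $\Omega$ and Assumption~\ref{ass:ubnu} to get gradient dominance with overhead $\rho_\nu$. The only cosmetic difference is that you output the iterate minimizing the gradient-mapping norm, whereas the paper invokes monotone improvement (Lemma~\ref{lem:increase}) to conclude the same bound for the final iterate $\pi_T$; either is fine given your ascent lemma.
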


\begin{cor}
    \label{cor:pga}
    Under the same setting with Theorem~\ref{thm:pga} and Assumption~\ref{ass:ubcc} holds, we obtain an policy $\pi_T$ satisfying:
    \begin{align}
        J_\mu(\pi_\lambda^*,\lambda)-J_\mu(\pi_T, \lambda)\le4\rho\rho_\nu\sqrt{|\SM|}\left(\sqrt{\frac{2L_\lambda(J_\nu(\pi^*_\lambda,\lambda)-J_\nu(\pi_0,\lambda))}{T}}\right)
    \end{align}
\end{cor}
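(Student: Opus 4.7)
The plan is to simply chain Theorem~\ref{thm:diffval} with Theorem~\ref{thm:pga}, leveraging the fact that in the tabular setting the regularized optimum $\pi_\lambda^*$ is simultaneously optimal at every starting state (as noted before Theorem~\ref{thm:diffval}). Since the sub-solver is executed on a $\nu$-restart model, its guarantee is naturally stated in terms of $J_\nu$, and the job of the corollary is to transfer that guarantee to the target objective $J_\mu$ at the price of a $\|\mu/\nu\|_\infty$ factor bounded by $\rho$ under Assumption~\ref{ass:ubcc}.

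First, I would instantiate Theorem~\ref{thm:diffval} at $\pi=\pi_T$, yielding
\begin{equation*}
J_\mu(\pi_\lambda^*,\lambda)-J_\mu(\pi_T,\lambda)
\;\le\;
\Bigl\|\tfrac{\mu}{\nu}\Bigr\|_\infty
\Bigl(J_\nu(\pi_\lambda^*,\lambda)-J_\nu(\pi_T,\lambda)\Bigr).
\end{equation*}
Invoking Assumption~\ref{ass:ubcc} to bound $\|\mu/\nu\|_\infty\le\rho$ and then substituting the convergence bound of Theorem~\ref{thm:pga} on the $\nu$-suboptimality gap gives
\begin{equation*}
J_\mu(\pi_\lambda^*,\lambda)-J_\mu(\pi_T,\lambda)
\;\le\;
\rho\cdot 4\rho_\nu\sqrt{|\SM|}\sqrt{\frac{2L_\lambda\bigl(J_\nu(\pi_\lambda^*,\lambda)-J_\nu(\pi_0,\lambda)\bigr)}{T}},
\end{equation*}
which is exactly the claimed bound.

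There is essentially no obstacle: the work has already been done in Theorem~\ref{thm:diffval} (whose proof uses the per-state optimality of $\pi_\lambda^*$ together with the performance difference identity) and in Theorem~\ref{thm:pga}. The only point worth emphasizing in writing is that $\rho$ multiplies the entire right-hand side (it is outside the square root), because Theorem~\ref{thm:diffval} is applied after the projected gradient ascent output is produced, so the distribution-mismatch factor between $\mu$ and $\nu$ enters linearly rather than under the $\sqrt{\cdot}$.
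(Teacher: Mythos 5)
Your proof is correct and is exactly the intended argument: Corollary~\ref{cor:pga} follows by applying Theorem~\ref{thm:diffval} to $\pi=\pi_T$, bounding $\|\mu/\nu\|_\infty\le\rho$ via Assumption~\ref{ass:ubcc}, and substituting the $J_\nu$-gap bound from Theorem~\ref{thm:pga}, with $\rho$ entering linearly outside the square root just as you note. This matches the paper's (implicit) reasoning, so nothing further is needed.
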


\begin{algorithm}[htb]
	\caption{Projected Gradient Ascent}
	\label{alg:pga}
	\begin{algorithmic}
		\STATE {\bfseries Input:} an initial policy $\pi_0$, a regularization parameter $\lambda$ and $T$ the number of iteration.
		\FOR{iteration $t=0$ {\bfseries to} $T-1$}
		\STATE $\pi_{t+1} = \arg\min_{\pi \in {\Delta(\mathcal{\AM})}^{S}} 
		\left[ \langle - \nabla_{\pi} J_\nu(\pi_t,\lambda),\pi - \pi_{t}\rangle  
		+ \frac{1}{2\eta}\|\pi-\pi_t\|_{2}^{2}
		\right]$
		\ENDFOR
		\STATE {\bfseries Return: $\pi_T$} 
	\end{algorithmic}
\end{algorithm}

\begin{rem}
	\label{rem:lsmooth}
	We give an intuition for Theorem~\ref{thm:pga}. By \citet{agarwal2019optimality} and assumption that $\Omega(\pi)$ is $L$-smooth, it's guaranteed that $V_{\lambda}^{\pi}$ is also $L$-smooth. In this case, we can always find a first-order stationary point with rate $O(\frac{1}{\varepsilon^2})$. \citet{agarwal2019optimality} shows that stationary point is also a global optimal point, which concludes Theorem~\ref{thm:pga}. But note that the assumption $\Omega(\pi)$ is $L$-smooth is quite strong. For example, $\Omega(\pi)=\sum_{a}\pi(a|s)\log\pi(a|s)$ breaks the assumption while other `sparse' regularization satisfies this assumption. Can we remove the smoothness assumption and still obtain a convergence result? We conjecture it's possible and leave it as future work.
\end{rem}

Now Theorem~\ref{thm:pga} says the value function can be improved with rate $\widetilde{O}\left(\sqrt{\Delta/T}\right)$, where $\Delta=J_\nu(\pi_{\lambda}^*,\lambda)-J_\nu(\pi_0,\lambda)$. If we let the sub-solver satisfies $\propi$ property, then $\tm(\lambda)=\widetilde{O}\left(\frac{1}{\Delta}\right)$. But note that $\Delta$ could be exponentially small, thus we can't lower bound $\Delta$, which means the sub-solver can't solve it in time independent with $\Delta$. Thus we relax the $\propi$ property to argue the sub-solver satisfies $\propii$ property. By Corollary~\ref{cor:pga}, to obatin an $\varepsilon$-optimal policy, the iteration complexity could be $\widetilde{O}\left(\frac{|\SM||\AM|\rho^2\rho_\nu^2\widetilde{C}_\Phi}{\varepsilon^2(1-\gamma)^4}\right)$. But, with Algorithm~\ref{alg:pga} as a sub-solver in Algorithm~\ref{alg:ar}, the iteration complexity could reduce to $O\left(\frac{|\SM||\AM|\rho\rho_\nu^2\widetilde{C}_\Phi}{\varepsilon(1-\gamma)^3}\right)$ as Theorem~\ref{thm:pgared} implies.

\begin{cor}[Time of Projected Gradient Ascent]
    \label{cor:pgaahcc}
	Under the same setting in Theorem~\ref{thm:pga}, at timestep $t$ of Algorithm~\ref{alg:ar} with $\propii$ holding, the iteration number of Projected Gradient Ascent is at most:
	\begin{align}
		\tm(\lambda_t)=\frac{128|\SM|\rho_\nu^2 L_{\lambda_t}(1-\gamma)}{\lambda_0 C_\phi}2^t
	\end{align}
\end{cor}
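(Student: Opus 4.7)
The plan is to invoke Theorem~\ref{thm:pga} at step $t$ of Algorithm~\ref{alg:ar} and solve for the inner-loop iteration count $T$ required to meet the accuracy $\varepsilon_t = \lambda_0 C_\Phi/(2^t(1-\gamma))$ prescribed by $\propii$. Applying Theorem~\ref{thm:pga} with starting policy $\pi_t$, regularization $\lambda_t$, and initial gap $\Delta_t := J_\nu(\pi_{\lambda_t}^*, \lambda_t) - J_\nu(\pi_t, \lambda_t)$, the sub-optimality after $T$ PGA steps is at most $4\rho_\nu\sqrt{|\SM|}\sqrt{2 L_{\lambda_t} \Delta_t/T}$. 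Demanding this be at most $\varepsilon_t$ and rearranging gives $T \ge 32 \rho_\nu^2 |\SM| L_{\lambda_t} \Delta_t/\varepsilon_t^2$, so the whole statement reduces to establishing the bound $\Delta_t \le 4\varepsilon_t$.

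The key estimate is therefore the bound on $\Delta_t$, which I would obtain by exploiting the $\propii$ guarantee inherited from the previous outer iteration. First I would decompose
\[
\Delta_t = \underbrace{\bigl[V_{\lambda_t}^*(\nu) - V_{\lambda_{t-1}}^*(\nu)\bigr]}_{(\mathrm{I})} + \underbrace{\bigl[V_{\lambda_{t-1}}^*(\nu) - V_{\lambda_{t-1}}^{\pi_t}(\nu)\bigr]}_{(\mathrm{II})} + \underbrace{\bigl[V_{\lambda_{t-1}}^{\pi_t}(\nu) - V_{\lambda_t}^{\pi_t}(\nu)\bigr]}_{(\mathrm{III})}.
\]
Using the decomposition $V_\lambda^\pi = V^\pi - \lambda \Phi^\pi$ (equation~\ref{eq:decom}) together with $0 \le \Phi^\pi(\nu) \le C_\Phi/(1-\gamma)$ and the optimality of $\pi_{\lambda_{t-1}}^*$ in $\mathcal{M}(\lambda_{t-1})$, term (I) is bounded above by $(\lambda_{t-1}-\lambda_t)\Phi^{\pi_{\lambda_t}^*}(\nu) \le \lambda_0 C_\Phi/(2^t(1-\gamma)) = \varepsilon_t$; term (III) equals $-(\lambda_{t-1} - \lambda_t)\Phi^{\pi_t}(\nu) \le 0$; and term (II) is at most $\varepsilon_{t-1} + \hat{\varepsilon} = 2\varepsilon_t + \hat{\varepsilon}$ by applying the $\propii$ property at outer step $t-1$. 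Combining these three estimates gives $\Delta_t \le 3\varepsilon_t + \hat{\varepsilon} \le 4\varepsilon_t$, where the last inequality uses the mild condition $\hat{\varepsilon} \le \varepsilon_t$ (without which $\propii$ itself would be vacuous at this step).

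Substituting $\Delta_t \le 4\varepsilon_t$ back into the PGA complexity bound produces
\[
T \ge \frac{128 \rho_\nu^2 |\SM| L_{\lambda_t}}{\varepsilon_t} = \frac{128|\SM|\rho_\nu^2 L_{\lambda_t}(1-\gamma)}{\lambda_0 C_\Phi} 2^t,
\]
matching the claim. For the base case $t=0$, a direct bound $\Delta_0 \le D_0$ replaces the recursive estimate, and $D_0$ is already of order $\varepsilon_0 = \lambda_0 C_\Phi/(1-\gamma)$ up to an absolute constant provided $\lambda_0$ is not too small. The hard part is exactly the $\Delta_t$ estimate: a naive bound $\Delta_t \le (R + \lambda_t C_\Phi)/(1-\gamma)$ would inflate $T$ by roughly a factor of $1/\varepsilon_t$ and destroy the rate, so the crucial trick is to thread the $\propii$ guarantee from iteration $t-1$ through together with the linear $\lambda$-dependence of $V_\lambda^\pi$, which is what collapses $\Delta_t$ to $O(\varepsilon_t)$ and delivers the advertised bound.
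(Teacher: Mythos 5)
Your proposal is correct and follows essentially the same route as the paper: the paper plugs the bound $D_t\le \frac{4\lambda_0\Cphi}{2^t(1-\gamma)}$ (its Equation~\eqref{eq:dt}, obtained from the $\propii$ guarantee at step $t-1$ together with the decomposition $V_\lambda^\pi=V^\pi-\lambda\Phi^\pi$ and $\|\Phi^\pi\|_\infty\le\Cphi/(1-\gamma)$) into Theorem~\ref{thm:pga} and solves for $T$, exactly as you do; your three-term telescoping through $\lambda_{t-1}$ is just a re-derivation of that same estimate (in fact slightly tighter, $3\varepsilon_t+\hat{\varepsilon}$ versus $4\varepsilon_t+\hat{\varepsilon}$). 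Your explicit handling of $\hat{\varepsilon}\le\varepsilon_t$ and of the $t=0$ base case makes visible two points the paper silently glosses over, but the substance of the argument is the same.
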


\begin{thm}
	\label{thm:pgared}
	Under the same assumptions in Theorem~\ref{thm:pga}, $\alg$ with Projected Gradient Ascent serving as the sub-solver will produce an $\varepsilon$-optimal policy w.r.t. initial distribution $\mu$ in iteration $T=O\left(\log\frac{6\rho\lambda_0 \Cphi}{\varepsilon(1-\gamma)}\right)$ in total time $O\left(\frac{|\SM||\AM|\rho\rho_\nu^2\widetilde{C}_\Phi}{\varepsilon(1-\gamma)^3}\right)$, where $\widetilde{C}_\Phi$ is dependent with upper bounds of $|\Omega|, \|\nabla\Omega\|_\infty$ and $\|\nabla^2\Omega\|_\infty$.
\end{thm}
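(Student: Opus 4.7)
The plan is to combine three ingredients already established: Theorem~\ref{thm:reduction2} (which converts the $\propii$ property of a sub-solver into a global convergence guarantee for \textsf{AdaptReduce}), Corollary~\ref{cor:pgaahcc} (which quantifies the per-stage cost of PGA under $\propii$), and Theorem~\ref{thm:diffval} together with Assumption~\ref{ass:ubcc} (which lets us pay a $\rho$ factor to move from the behavior distribution $\nu$ to the target distribution $\mu$). The algorithm is run with $\nu$ as the initial state distribution, so all the intermediate guarantees will naturally live in the $\nu$-metric, and we only convert to $\mu$ at the very end.

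First, I would apply Theorem~\ref{thm:reduction2} in the $\nu$-metric (since PGA as a sub-solver satisfies $\propii$ with respect to $J_\nu$). This yields
\[
J_\nu(\pi^*, 0) - J_\nu(\hat{\pi}_T, 0) \;\le\; \frac{6\lambda_0}{2^T}\cdot\frac{C_\Phi}{1-\gamma} + \hat{\varepsilon}.
\]
Applying Theorem~\ref{thm:diffval} and Assumption~\ref{ass:ubcc} then gives
\[
V^*(\mu) - V^{\hat{\pi}_T}(\mu) \;\le\; \rho\left(\frac{6\lambda_0 C_\Phi}{2^T(1-\gamma)} + \hat{\varepsilon}\right).
\]
Choosing $T = \bigl\lceil\log_2\bigl(\tfrac{12\rho\lambda_0 C_\Phi}{\varepsilon(1-\gamma)}\bigr)\bigr\rceil = O\bigl(\log\tfrac{6\rho\lambda_0 C_\Phi}{\varepsilon(1-\gamma)}\bigr)$ makes the first term at most $\varepsilon/2$, and picking $\hat{\varepsilon} \le \varepsilon/(2\rho)$ (which is absorbed into the sub-solver's numerical/statistical tolerance) controls the second term, giving the claimed iteration count $T$ and overall $\varepsilon$-optimality of $\hat{\pi}_T$ with respect to $\mu$.

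The total time is $\sum_{t=0}^{T-1}\tm(\lambda_t)$. By Corollary~\ref{cor:pgaahcc},
\[
\sum_{t=0}^{T-1}\tm(\lambda_t) = \sum_{t=0}^{T-1}\frac{128|\SM|\rho_\nu^2 L_{\lambda_t}(1-\gamma)}{\lambda_0 C_\Phi}\,2^t.
\]
Since $\lambda_t\le\lambda_0$ is bounded along the whole run, a uniform smoothness estimate $L_{\lambda_t}\le L_{\max} = O\bigl(|\AM|\widetilde{C}_\Phi/(1-\gamma)^3\bigr)$ follows from the smoothness assumption on $\Omega$ together with the standard bounds on $V^\pi$ and $\Phi^\pi$ and their first and second derivatives w.r.t.\ $\pi$ (this is where $\widetilde{C}_\Phi$, combining $|\Omega|$, $\|\nabla\Omega\|_\infty$, $\|\nabla^2\Omega\|_\infty$, enters). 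Pulling $L_{\max}$ out, the geometric sum is dominated by its last term $2^T$, so the total number of inner iterations is
\[
O\!\left(\frac{|\SM|\rho_\nu^2 L_{\max}(1-\gamma)}{\lambda_0 C_\Phi}\cdot 2^T\right) = O\!\left(\frac{|\SM|\rho\rho_\nu^2 L_{\max}}{\varepsilon}\right) = O\!\left(\frac{|\SM||\AM|\rho\rho_\nu^2\widetilde{C}_\Phi}{\varepsilon(1-\gamma)^3}\right),
\]
which matches the statement.

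The main obstacle, in my view, is verifying the uniform smoothness bound $L_{\lambda_t}\le L_{\max}$ independent of $t$: the value function $J_\nu(\pi,\lambda) = V^\pi(\nu) - \lambda\Phi^\pi(\nu)$ has a smoothness contribution from the unregularized $V^\pi(\nu)$ that is already $O(|\AM|/(1-\gamma)^3)$, and a contribution from $\lambda\Phi^\pi(\nu)$ that depends on $\lambda$ and on the smoothness of $\Omega$; one has to check that across all $t$ the latter does not blow up (it shrinks as $\lambda_t$ decreases, which is favourable) and that $\widetilde{C}_\Phi$ indeed packages the relevant norms of $\Omega$, $\nabla\Omega$, and $\nabla^2\Omega$. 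Once that uniform bound is in place, the rest of the argument is a clean combination of Theorem~\ref{thm:reduction2}, Corollary~\ref{cor:pgaahcc}, and Theorem~\ref{thm:diffval} as sketched above.
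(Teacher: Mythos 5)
Your proposal is correct and follows essentially the same route as the paper: convert the $\mu$-guarantee to a $\nu$-guarantee via Theorem~\ref{thm:diffval}, invoke the $\propii$-based reduction (Theorem~\ref{thm:reduction2}) to fix $T=O\bigl(\log\frac{\rho\lambda_0\Cphi}{\varepsilon(1-\gamma)}\bigr)$, and sum the per-stage times from Corollary~\ref{cor:pgaahcc}. The only cosmetic difference is that you bound $L_{\lambda_t}$ uniformly by $O\bigl(|\AM|(1+\lambda_0\widetilde{C}_\Phi)/(1-\gamma)^3\bigr)$ before summing the geometric series, whereas the paper splits $L_{\lambda_t}$ into its $\lambda$-free and $\lambda$-proportional parts (using $\lambda_t 2^t=\lambda_0$) so that $\widetilde{C}_\Phi$ only enters a lower-order $T$ term; both yield the stated $O\bigl(\frac{|\SM||\AM|\rho\rho_\nu^2\widetilde{C}_\Phi}{\varepsilon(1-\gamma)^3}\bigr)$ bound.
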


In reality, it's unrealistic to obtain the exact value of gradient at each time-step. 
A more practical way is to estimate gradients with sufficient samples.
In this scenario, convergence is guaranteed with high probability when we have enough samples. 
With given $\nu$-restart model, we always start from an initial state $s_0\sim\nu$. 
Thus, by interacting with MDP under a given policy $\pi$, we can obtain independent trajectories and approximate the value of $J_\nu(\pi,\lambda)$. 
However, to obtain an estimation of gradient $\nabla J_\nu(\pi,\lambda)$, we need to obtain an initial state $s_0\sim d_{\pi,\nu}$ and start interacting with MDP, which is unattainable when transition probability is unknown. 
In \citet{kakade2002approximately, shani2019adaptive}, an approximate method is proposed.
In particular, we draw a start state $s$ from $\nu(s)$, and accept it as initial state with probabilty $1-\gamma$. 
Otherwise, we transits it to next state with probability $\gamma$. 
We repeat the process until an acceptance is made. 
In process of one trajectory sampling, we also stop interacting when the length of this trajectory achieves $K=\widetilde{O}\left(\frac{1}{1-\gamma}\right)$, which is known as effective horizon. The details of sampling scheme can be referred in Appendix~\ref{apx:sample}.

\begin{thm}[Convergence of Projected Gradient Ascent with Sampling]
\label{thm:pga_sampling}
Under the same setting in Theorem~\ref{thm:pga} and truncated length of trajectories being $K=\log\frac{12(1+\lambda\Cphi)}{\varepsilon(1-\gamma)^2}/\log\frac{1}{\gamma}$, with probability $1-\delta$ and number of trajectories $\widetilde{O}\left(\frac{|\SM||\AM|^2(1+\lambda\widetilde{C}_\Phi)^2}{\varepsilon^2(1-\gamma)^4}\right)$ at each time-step $t$, we have:
\begin{align}
\min_{t=0,...,T-1}J_\nu(\pi_{\lambda}^*,\lambda)-J_\nu(\pi_{t+1},\lambda)\le4\rho_\nu\sqrt{|\SM|}\sqrt{\frac{L_{\lambda}(J(\pi_\lambda^*,\lambda)-J(\pi_0, \lambda))}{T}+\varepsilon}
\end{align}
In order to make the LHS equaling with $\varepsilon'/\rho$, which guarantees $\varepsilon'$-optimal policy w.r.t. initial distribution $\mu$, the total iteration complexity could be $T=O\left(\frac{|\SM||\AM|\rho\rho_\nu^2(1+\lambda\widetilde{C}_\Phi)^2}{\varepsilon'^2(1-\gamma)^4}\right)$ and the total number of sampled trajectories is $\widetilde{O}\left(\frac{|\SM|^4|\AM|^3\rho^6\rho_\nu^6(1+\lambda\widetilde{C}_\Phi)^4}{\varepsilon'^6(1-\gamma)^{8}}\right)$.
\end{thm}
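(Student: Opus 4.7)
The plan is to port the deterministic analysis of Theorem~\ref{thm:pga} to a stochastic gradient oracle by controlling two distinct errors in the gradient estimate: the truncation bias arising from rolling out for only $K$ steps, and the Monte-Carlo sampling error coming from averaging only $N$ trajectories per update. If one can guarantee $\|\hat g_t-\nabla J_\nu(\pi_t,\lambda)\|_{\infty}\le\varepsilon_g$ with high probability, a standard inexact-gradient smoothness argument pushes an additive $O(\varepsilon_g^2)$ term inside the gradient-domination chain underlying Theorem~\ref{thm:pga}, which is precisely what produces the extra ``$+\varepsilon$'' under the square root in the statement.

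First, I pick $K$ so the truncation bias is negligible. Since the per-step regularized reward is bounded by $1+\lambda\Cphi$, any trajectory tail past step $K$ contributes at most $\gamma^K(1+\lambda\Cphi)/(1-\gamma)$ to each $Q$-estimate. Substituting $K=\log\frac{12(1+\lambda\Cphi)}{\varepsilon(1-\gamma)^2}/\log\frac{1}{\gamma}$ immediately yields a coordinate-wise truncation bias bounded by $\varepsilon(1-\gamma)/12$, which is of lower order than the stochastic error everywhere in the analysis and can be absorbed into constants.

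Second, I bound the Monte-Carlo error via the $\nu$-restart construction of \cite{kakade2002approximately,shani2019adaptive} described above. Each trajectory yields a (conditionally) unbiased single-sample estimator of the policy gradient whose magnitude is at most $O(|\AM|(1+\lambda\widetilde{C}_\Phi)/(1-\gamma)^2)$: the $|\AM|$ factor is the importance weight from sampling the first action uniformly, and the $1/(1-\gamma)^2$ is the product of the $Q$-magnitude and the $(1-\gamma)^{-1}$ appearing in the policy-gradient formula. Averaging $N$ such estimators and applying Hoeffding's inequality with a union bound over the $|\SM||\AM|$ coordinates gives $\|\hat g-\nabla J_\nu(\pi,\lambda)\|_\infty=\widetilde O(|\AM|(1+\lambda\widetilde{C}_\Phi)/((1-\gamma)^2\sqrt{N}))$ with probability $1-\delta$. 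Choosing $N=\widetilde O(|\SM||\AM|^2(1+\lambda\widetilde{C}_\Phi)^2/(\varepsilon^2(1-\gamma)^4))$ makes the aggregated $\ell_2$ error over all $|\SM||\AM|$ coordinates small enough to contribute only $\varepsilon$ under the square root in the eventual gradient-mapping bound.

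Feeding this inexact oracle into the PGA analysis, the usual smoothness-plus-telescoping argument with step size $1/L_\lambda$ gives
\[
\min_{t<T}\|G(\pi_t)\|^2\le\frac{2L_\lambda\bigl(J_\nu(\pi_\lambda^*,\lambda)-J_\nu(\pi_0,\lambda)\bigr)}{T}+O(\varepsilon),
\]
where $G$ denotes the projected-gradient mapping and the $O(\varepsilon)$ summarises the accumulated noise over the $T$ iterations. Invoking the gradient-domination step from the proof of Theorem~\ref{thm:pga}, which costs a factor of $4\rho_\nu\sqrt{|\SM|}$, then yields exactly the bound displayed in the statement. To translate the resulting $\nu$-suboptimality to $\mu$-suboptimality, I apply Theorem~\ref{thm:diffval} together with Assumption~\ref{ass:ubcc}: demanding the $\nu$-gap be at most $\varepsilon'/\rho$, solving $4\rho_\nu\sqrt{|\SM|}\sqrt{L_\lambda D/T+\varepsilon}\le\varepsilon'/\rho$ and using $L_\lambda=O(|\AM|/(1-\gamma)^3)$ gives $T=O(|\SM||\AM|\rho\rho_\nu^2(1+\lambda\widetilde{C}_\Phi)^2/(\varepsilon'^2(1-\gamma)^4))$. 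Multiplying $T$ by $N$ with the matching choice $\varepsilon\asymp\varepsilon'^2/(\rho^2\rho_\nu^2|\SM|)$ produces the claimed $\widetilde O(|\SM|^4|\AM|^3\rho^6\rho_\nu^6(1+\lambda\widetilde{C}_\Phi)^4/(\varepsilon'^6(1-\gamma)^8))$ total trajectories. The main obstacle is the inexact-gradient step: carefully tracking how a coordinate-wise $\ell_\infty$ error $\varepsilon_g$ in the stochastic gradient aggregates across the $|\SM||\AM|$ coordinates into an additive $\varepsilon$ inside the square root of the final bound, while ensuring that the $|\AM|$ and $|\SM|$ factors arising from uniform-action importance weighting and from the union bound combine correctly to reproduce the stated rates without inflating the polynomial dependence on $|\SM|$ or $|\AM|$.
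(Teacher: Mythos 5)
Your high-level plan (truncate at $K$ to kill bias, concentrate the stochastic gradient, feed an inexact-gradient term into the smoothness/gradient-domination chain of Theorem~\ref{thm:pga}, then convert $\nu$-suboptimality to $\mu$-suboptimality via Theorem~\ref{thm:diffval}) matches the paper's skeleton, but the central concentration step as you set it up does not deliver the stated sample complexity. You bound the gradient error coordinate-wise in $\ell_\infty$ with a union bound over the $|\SM||\AM|$ coordinates; each single-trajectory coordinate estimate is of size up to $|\AM|(1+\lambda\widetilde{C}_\Phi)/(1-\gamma)^2$, so with your $N=\widetilde{O}\bigl(|\SM||\AM|^2(1+\lambda\widetilde{C}_\Phi)^2/(\varepsilon^2(1-\gamma)^4)\bigr)$ you get $\varepsilon_g\approx\varepsilon/\sqrt{|\SM|}$ per coordinate. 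But the quantity that actually enters the inexact descent lemma is the inner product $\langle \nabla J_\nu(\pi_t,\lambda)-\hat g_t,\ \pi_{t+1}-\pi_t\rangle$, and $\|\pi_{t+1}-\pi_t\|_1$ can be as large as $2|\SM|$ (and $\|\pi_{t+1}-\pi_t\|_2$ as large as $2\sqrt{|\SM|}$). Aggregating your coordinate-wise bound therefore yields an error of order $|\SM|\varepsilon_g\approx\sqrt{|\SM|}\,\varepsilon$ (or $\varepsilon\sqrt{|\AM|}$ already at the level of the $\ell_2$ norm of the error vector), not $\varepsilon$; to force it down to $\varepsilon$ you would need $N$ larger by at least a factor of $|\SM|$, which breaks both the per-step trajectory count and the claimed total $\widetilde{O}\bigl(|\SM|^4|\AM|^3\rho^6\rho_\nu^6(1+\lambda\widetilde{C}_\Phi)^4/\varepsilon'^6(1-\gamma)^8\bigr)$.

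The paper avoids this loss by never passing through a coordinate-wise bound: it applies Hoeffding directly to the scalar $\langle\widetilde{\nabla}J_\nu(\pi,\lambda)-\mathbb{E}\widetilde{\nabla}J_\nu(\pi,\lambda),\ \widetilde{\pi}-\pi\rangle$ for a fixed comparison policy $\widetilde{\pi}$. Because each sampled gradient estimate is supported on the single state--action pair $(s_{0,n},a_{0,n})$, this scalar is bounded by $2|\AM|(1+\lambda\widetilde{C}_\Phi)/(1-\gamma)^2$ \emph{independently of} $|\SM|$. The adaptivity of $\pi_{t+1}$ is then handled by noting that $|\langle e_t,\pi_{t+1}-\pi_t\rangle|\le\max_{\widetilde{\pi}}|\langle e_t,\widetilde{\pi}-\pi_t\rangle|$ and that the maximizer of this linear functional is a deterministic policy, so a union bound over the $|\AM|^{|\SM|}$ deterministic policies costs only an additive $|\SM|\log|\AM|$ in the Hoeffding sample requirement --- exactly the single factor of $|\SM|$ appearing in the stated $N$. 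This structural use of the single-state support of each trajectory's estimator, together with the vertex/union-bound argument, is the missing idea in your proposal. (Two smaller points: your truncation discussion only covers the tail of the $Q$-estimate and omits the discrepancy between $d_{\pi,\nu}$ and the truncated restart distribution $d_{\pi,\nu,K}$, which the paper bounds by an additional $O(\gamma^K)$ term with the same $K$; and the per-iteration errors $\varepsilon_t$ must be controlled conditionally on the past and union-bounded over $t=0,\dots,T-1$, which the paper does explicitly and you only gesture at.)
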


In Theorem~\ref{thm:pga_sampling}, the convergence is guaranteed with high probability. Suppose we could obtain $\pi_{\widehat{t}}$ attaining minimum of RHS of Theorem~\ref{thm:pga_sampling}, we can take Projected Gradient Ascent with Sampling as a sub-solver in Algorithm~\ref{alg:ar}, which also accelerates iteration and sample complexity by a factor of $\frac{1}{\varepsilon(1-\gamma)}$ as following Theorem~\ref{thm:pgared_sampling} shows.

\begin{thm}
    \label{thm:pgared_sampling}
    Suppose we can obtain $\widehat{\pi}_{t+1}$ attaining minimum of $J_\nu(\pi_{\lambda_t}^*,\lambda_t)-J_\nu(\pi_{k+1}, \lambda_t)$ over $k=0,...,T_t-1$ at each time-step $t$ in Algorithm~\ref{alg:ar}, where $T_t=\tm(\lambda_t)$. Under the same setting in Theorem~\ref{thm:pga_sampling}, $\alg$ taking Projected Gradient Ascent with sampling as sub-solver will produce an $\varepsilon$-optimal policy w.r.t initial distribution $\mu$, with probability $1-\delta$, the total iteration complexity is $O\left(\frac{|\SM||\AM|\rho\rho_\nu^2\widetilde{C}_\Phi}{\varepsilon(1-\gamma)^3}\right)$ and the number of trajectories is $\widetilde{O}\left(\frac{|\SM|^4|\AM|^3\rho^5\rho_\nu^6(1+\lambda_0\widetilde{C}_\Phi)^3}{\varepsilon^5(1-\gamma)^{7}}\right)$.
\end{thm}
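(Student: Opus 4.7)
The overall plan is to combine the outer adaptive-reduction scheme from Theorem~\ref{thm:reduction2} with the sampled projected gradient ascent guarantee in Theorem~\ref{thm:pga_sampling}, treating the latter as the sub-solver $\alg$. First I fix the outer budget $T = \lceil \log_2(12\lambda_0 \Cphi/((1-\gamma)\varepsilon)) \rceil$, which by Theorem~\ref{thm:reduction2} forces the bias term $6\lambda_0 \Cphi/(2^T(1-\gamma))$ to be at most $\varepsilon/2$; I then argue that sampled PGA realizes $\propii$ with $\hat\varepsilon = \varepsilon/2$, so that a union bound over the $T$ epochs (with per-epoch failure probability $\delta/T$) delivers an $\varepsilon$-optimal policy with probability at least $1-\delta$.

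At epoch $t$ the hypothesis that we can identify the best iterate $\widehat\pi_{t+1}$ among $k=0,\dots,T_t-1$ lets me invoke Theorem~\ref{thm:pga_sampling} directly. Using Theorem~\ref{thm:diffval} to translate $\mu$-accuracy into $\nu$-accuracy, the sub-solver must achieve
\[
J_\nu(\pi^*_{\lambda_t},\lambda_t) - J_\nu(\widehat\pi_{t+1},\lambda_t) \;\le\; (\varepsilon_t + \hat\varepsilon)/\rho,
\]
where $\varepsilon_t = \lambda_0\Cphi/(2^t(1-\gamma))$. Balancing the optimization and statistical terms inside Theorem~\ref{thm:pga_sampling} so that each contributes at most half of the target, I obtain per-epoch inner iterations $T_t = O(|\SM|\rho_\nu^2 L_{\lambda_t} D_t \rho^2/\varepsilon_t^2)$ and per-iteration trajectory count $N_t = \widetilde O(|\SM|^3|\AM|^2\rho^4\rho_\nu^4(1+\lambda_t\widetilde C_\Phi)^2/(\varepsilon_t^4(1-\gamma)^4))$, where $D_t = V^*_{\lambda_t}(\nu) - V^{\widehat\pi_t}_{\lambda_t}(\nu)$ is the warm-start gap inherited from the previous epoch.

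The third step is the crux, and the part I expect to be the main obstacle: I need $D_t = O(\varepsilon_t/\rho)$ rather than an a priori $O(1)$ constant, otherwise the total iteration complexity blows up to order $1/\varepsilon^2$ and the whole point of the reduction is lost. The previous epoch guarantees $\nu$-accuracy $(\varepsilon_{t-1}+\hat\varepsilon)/\rho$ for $\widehat\pi_t$ at regularization $\lambda_{t-1}$; switching to $\lambda_t = \lambda_{t-1}/2$ shifts both $V^*_\lambda(\nu)$ and $V^{\widehat\pi_t}_\lambda(\nu)$ by at most $(\lambda_{t-1}-\lambda_t)\Cphi/(1-\gamma) = \varepsilon_{t-1}$, via the decomposition $V_\lambda = V - \lambda\Phi$ from~\eqref{eq:decom} together with the uniform bound $\Phi^\pi(\nu) \le \Cphi/(1-\gamma)$. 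Hence $D_t = O(\varepsilon_{t-1}) = O(\varepsilon_t)$, and after absorbing $\rho$ the effective warm-start gap is $O(\varepsilon_t/\rho)$, which is what promotes $1/\varepsilon_t^2$ to $1/\varepsilon_t$ in $T_t$.

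Finally I collect like terms: using $L_{\lambda_t} = O(|\AM|\widetilde C_\Phi/(1-\gamma)^3)$ uniformly in $t\le T-1$ (since $\lambda_t\le\lambda_0$), the per-epoch workloads collapse to $T_t = O(|\SM||\AM|\rho\rho_\nu^2\widetilde C_\Phi/(\varepsilon_t(1-\gamma)^3))$ and $T_t N_t = \widetilde O(|\SM|^4|\AM|^3\rho^5\rho_\nu^6(1+\lambda_0\widetilde C_\Phi)^3/(\varepsilon_t^5(1-\gamma)^7))$. Both sequences are geometric in $t$ and dominated by their last term, where $\varepsilon_{T-1} = \Theta(\varepsilon)$, producing exactly the stated iteration and trajectory complexities. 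The logarithmic overhead from the union bound and from the truncation length $K$ in Theorem~\ref{thm:pga_sampling} is absorbed into the $\widetilde O(\cdot)$.
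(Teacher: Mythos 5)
Your overall architecture (outer $\propii$ reduction via Theorem~\ref{thm:reduction2}, sampled PGA from Theorem~\ref{thm:pga_sampling} as sub-solver, Theorem~\ref{thm:diffval} for the $\nu\to\mu$ translation, union bound over epochs) is the same as the paper's, but there is a genuine gap at exactly the step you flag as the crux. From the previous epoch plus the $\lambda$-halving you correctly derive $D_t \le (\varepsilon_{t-1}+\hat\varepsilon)/\rho + 2(\lambda_{t-1}-\lambda_t)\Cphi/(1-\gamma)$, and the second term equals $2\varepsilon_t$ (incidentally it is $\varepsilon_t$, not $\varepsilon_{t-1}$): it comes from the uniform bound $\|\Phi^\pi\|_\infty\le\Cphi/(1-\gamma)$ and has nothing to do with the distribution mismatch, so it cannot be ``absorbed'' into a factor $1/\rho$. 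Since $\rho\ge 1$, the honest conclusion is $D_t=O(\varepsilon_t)$, not $O(\varepsilon_t/\rho)$. Plugging $D_t=O(\varepsilon_t)$ into your own formula $T_t=O(|\SM|\rho_\nu^2 L_{\lambda_t} D_t\rho^2/\varepsilon_t^2)$ (which carries $\rho^2$ because you shrink every per-epoch target to $(\varepsilon_t+\hat\varepsilon)/\rho$ while keeping only $T=\lceil\log_2(12\lambda_0\Cphi/((1-\gamma)\varepsilon))\rceil$ epochs) gives total iteration complexity $O\bigl(|\SM||\AM|\rho^2\rho_\nu^2\widetilde{C}_\Phi/(\varepsilon(1-\gamma)^3)\bigr)$ and total trajectories of order $\rho^6/\varepsilon^5$, i.e.\ one factor of $\rho$ worse than the theorem in both counts. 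So the stated bounds are only recovered through the unjustified claim $D_t=O(\varepsilon_t/\rho)$.

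The paper sidesteps this by never shrinking the per-epoch tolerances by $\rho$: the $\propii$ criterion and the warm-start recursion (Eq.~\eqref{eq:dt}) are run entirely in the $\nu$-measure with targets $\varepsilon_t+\hat\varepsilon$, Lemma~\ref{lem:condition_sample} then gives $\tm(\lambda_t)=\frac{128|\SM|\rho_\nu^2 L_{\lambda_t}(1-\gamma)}{\lambda_0\Cphi}2^t$ with no $\rho$, and the conversion to $\mu$ is done once at the end via Theorem~\ref{thm:diffval} by taking $T=O\bigl(\log_2\frac{12\rho\lambda_0\Cphi}{\varepsilon(1-\gamma)}\bigr)$ (i.e.\ $\log_2\rho$ extra epochs) and the statistical floor $\hat\varepsilon=\varepsilon/(2\rho)$. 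That way the warm start and the target shrink together down to $\Theta(\varepsilon/\rho)$, so $\rho$ enters the iteration count only linearly through $2^T$ and enters the per-iteration sample size as $\rho^4$ through $\hat\varepsilon^{-4}$, which is precisely how the claimed $\rho$ and $\rho^5$ exponents arise. To repair your argument, keep the epoch-wise tolerances in $\nu$ and push the single factor of $\rho$ into the epoch count and into $\hat\varepsilon$, rather than into each epoch's target.
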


Theorem~\ref{thm:pgared_sampling} argues we can find a $\widehat{\pi}_{t+1}$ attaining minimum difference value in sub-solver. In practice, we can use $\{\pi_{i+1}\}_{i=0}^{T_t-1}$ to simulate another trajectories to evaluate $J(\pi_{i+1},\lambda_t)$, which can be referred in Appendix~\ref{apx:sample}. As long as we have enough samples, it's guaranteed that $\widehat{J}(\pi_{i+1},\lambda_t)\approx J(\pi_{t+1},\lambda_t)$ as Theorem~\ref{thm:minimum} shows. Compared with gradient estimation, which requires $\widetilde{O}\left(\frac{1}{\varepsilon^4(1-\gamma)^4}\right)$ trajectories for each policy, value estimation requires less samples $\widetilde{O}\left(\frac{1}{\varepsilon^2(1-\gamma)^2}\right)$ than it to achieve same statistical error level. Thus, we can select an policy $\widehat{\pi}_{t+1}$ attaining maximum estimated value $\widehat{J}(\pi_{k+1},\lambda_t)$ over $k=0,...,T_t-1$ and take it as initial policy in next time-step $t+1$.

\begin{thm}
\label{thm:minimum}
At time-step $t$, with probability $1-2\delta$ and number of i.i.d. trajectories being $N=\frac{2(1+\lambda_t \Cphi)^2}{\varepsilon^2(1-\gamma)^2}\log\frac{T_t}{\delta}$ for each policy $\pi_i$,
\begin{align}
    J_\nu(\pi_{\lambda_t}^*,\lambda_t)-J_\nu(\pi_{\widehat{i}},\lambda_t)\le \min_{i=0,...,T_t-1}J_\nu(\pi_{\lambda_t}^*,\lambda_t)- J_\nu(\pi_{i+1},\lambda_t)+2\varepsilon
\end{align}
where $\widehat{i}=\argmax_{i=0,...,T_t-1} \widehat{J}_\nu(\pi_{i+1},\lambda_t )$.
\end{thm}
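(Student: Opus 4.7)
The plan is to combine a uniform Hoeffding concentration bound with a standard ``best-of-the-best'' argmax selection argument. The key quantity is the per-trajectory estimator of $J_\nu(\pi_{i+1},\lambda_t)$: a trajectory generated by rolling out $\pi_{i+1}$ from $s_0\sim\nu$ and accumulating the discounted regularized reward $\sum_{k\ge 0}\gamma^k(r(s_k,a_k)-\lambda_t\Omega(\pi_{i+1},s_k))$ is a bounded random variable with range at most $(1+\lambda_t C_\Phi)/(1-\gamma)$, since $r\in[0,1]$ and $\Omega\le C_\Phi$. I would first argue that $\widehat{J}_\nu(\pi_{i+1},\lambda_t)$, being an average of $N$ such i.i.d.\ trajectories, concentrates around $J_\nu(\pi_{i+1},\lambda_t)$ by Hoeffding's inequality, so that for any fixed $i$
\[
\mathbb{P}\!\left(|\widehat{J}_\nu(\pi_{i+1},\lambda_t)-J_\nu(\pi_{i+1},\lambda_t)|>\varepsilon\right)\le 2\exp\!\left(-\frac{2N\varepsilon^2(1-\gamma)^2}{(1+\lambda_t C_\Phi)^2}\right).
\]

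Next, I would apply a union bound across the $T_t$ candidate policies $\pi_1,\ldots,\pi_{T_t}$. Choosing $N=\frac{2(1+\lambda_t C_\Phi)^2}{\varepsilon^2(1-\gamma)^2}\log\frac{T_t}{\delta}$ makes each tail at most $\delta/T_t$ (up to absolute constants absorbed into the $2$), so with probability at least $1-2\delta$ the event
\[
\mathcal{E}\colon\quad \max_{i=0,\ldots,T_t-1}\bigl|\widehat{J}_\nu(\pi_{i+1},\lambda_t)-J_\nu(\pi_{i+1},\lambda_t)\bigr|\le\varepsilon
\]
holds. This uniform concentration is what the sample complexity in the statement is calibrated to deliver.

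On the event $\mathcal{E}$, I would run the standard selection argument. Let $i^\star\in\arg\min_{i}\bigl(J_\nu(\pi_{\lambda_t}^*,\lambda_t)-J_\nu(\pi_{i+1},\lambda_t)\bigr)$, equivalently $i^\star\in\arg\max_{i} J_\nu(\pi_{i+1},\lambda_t)$. Since $\widehat{i}=\arg\max_i \widehat{J}_\nu(\pi_{i+1},\lambda_t)$, we obtain the chain
\[
J_\nu(\pi_{\widehat{i}+1},\lambda_t)\ge \widehat{J}_\nu(\pi_{\widehat{i}+1},\lambda_t)-\varepsilon \ge \widehat{J}_\nu(\pi_{i^\star+1},\lambda_t)-\varepsilon \ge J_\nu(\pi_{i^\star+1},\lambda_t)-2\varepsilon,
\]
where the first and third inequalities use $\mathcal{E}$ and the middle one uses the definition of $\widehat{i}$. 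Subtracting from $J_\nu(\pi_{\lambda_t}^*,\lambda_t)$ yields the claimed bound.

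I do not expect a major obstacle here: the only subtle point is that the cleanly stated Hoeffding bound requires either exact trajectories or sufficiently long truncations so that the truncation bias is absorbed into $\varepsilon$; given the truncation length $K=\log\tfrac{12(1+\lambda_t C_\Phi)}{\varepsilon(1-\gamma)^2}/\log\tfrac{1}{\gamma}$ used in Theorem~\ref{thm:pga_sampling}, the tail of the discounted sum is $O(\varepsilon)$ and can be folded into the same $\varepsilon$ by doubling constants, which matches the factor of $2$ appearing in $N$ and in the probability $1-2\delta$.
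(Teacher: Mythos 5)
Your proposal is correct and follows essentially the same route as the paper: Hoeffding concentration of the truncated Monte-Carlo estimates $\widehat{J}_\nu(\pi_{i+1},\lambda_t)$ (with the truncation bias absorbed by splitting $\varepsilon$), a union bound over the $T_t$ candidate policies, and the standard argmax selection chain. The only cosmetic difference is that you package everything into one two-sided uniform concentration event, whereas the paper proves two one-sided statements (Lemmas~\ref{lem:min} and~\ref{lem:minall}) and combines them to reach the same $1-2\delta$ guarantee.
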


\subsubsection{Policy Gradient Based $\alg$: Gradient Ascent with Softmax Parameterization}
\label{sec:softpg}
In Remark~\ref{rem:lsmooth}, it's mentioned that $\Omega(\pi)=\sum_a\pi(a|s)\log(\pi(a|s))$ breaks the $L$-smooth assumption in tabular MDP setting. However, if policy is parameterized by softmax, the $L$-smooth property holds. In this scenario, \citet{mei2020global} showed that the iteration complexity of policy gradient (Algorithm~\ref{alg:softpga}) is $O(\log\frac{1}{\varepsilon})$ for a fixed $\lambda$ (Theorem~\ref{thm:pgsoft}). However, in this section, we show that the iteration complexity is inefficient (exponentially dependent on $1/\varepsilon$) for general MDPs. In such situation, can \textsf{AdaptReduce} help make the algorithm efficient (polynominally dependent on $1/\varepsilon$)? We have a negative answer.
\begin{algorithm}[htb]
	\caption{Policy Gradient Ascent}
	\label{alg:softpga}
	\begin{algorithmic}
		\STATE {\bfseries Input:} an initial parameter $\theta_0$, a regularization parameter $\lambda$ and $T$ the number of iteration.
		\FOR{iteration $t=0$ {\bfseries to} $T-1$}
		\STATE $\theta_{t+1} = \theta_t + \eta \nabla_\theta J(\pi_{\theta_t}, \lambda)$
		\ENDFOR
		\STATE {\bfseries Return: $\pi_T$} 
	\end{algorithmic}
\end{algorithm}

\begin{defn}[Softmax Parameterization]
	\label{def:softmax}
	Given a vector $\theta\in\mathbb{R}^{\SM\times\AM}$, the softmax parameterization of policy $\pi$ is defined as:
	\begin{align}
		\pi_\theta(a|s)=\frac{e^{\theta(s,a)}}{\sum_{a'\in\AM}e^{\theta(s,a')}}\notag
	\end{align}
\end{defn}

\begin{thm}[Theorem 6 in \cite{mei2020global}]
	\label{thm:pgsoft}
	Suppose the initial distribution $\mu(s)>0$ for all $s\in\SM$ and policy $\pi$ is parameterized by Definition~\ref{def:softmax}, policy gradient with learning rate $\eta=\frac{(1-\gamma)^3}{8+\lambda(4+8\log|\AM|)}$ outputs a policy $\pi_t$ satisfying:
	\begin{align}
		J(\pi_{\lambda}^*,\lambda)-J(\pi_t, \lambda)\le \left\|\frac{1}{\mu}\right\|_\infty\frac{J(\pi_\lambda^*,\lambda)-J(\pi_0,\lambda)}{1-\gamma}e^{-C_\lambda t}
	\end{align}
	where $C_\lambda=\frac{c_\lambda(1-\gamma)^4\min_s \mu(s)}{(8/\lambda+4+8\log|\AM|)|\SM|}\left\|\frac{d_{\pi_\lambda^*, \mu}}{\mu}\right\|_\infty^{-1}$.
\end{thm}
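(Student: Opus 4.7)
The plan is to reproduce the three-step argument of \cite{mei2020global}: (i) show $\theta\mapsto J(\pi_\theta,\lambda)$ is $L_\lambda$-smooth; (ii) establish a non-uniform Polyak--{\L}ojasiewicz (PL) inequality $\|\nabla_\theta J(\pi_\theta,\lambda)\|_2^2 \ge 2\mu(\theta,\lambda)(J(\pi_\lambda^*,\lambda) - J(\pi_\theta,\lambda))$; and (iii) show that $\min_{s,a}\pi_{\theta_t}(a|s)$ stays uniformly bounded below along the gradient trajectory so that $\mu(\theta_t,\lambda)$ does not collapse to zero. Combining (i) and the uniform version of (ii) via the standard PL--smoothness coupling then delivers the advertised linear contraction, after which a distribution-change argument yields the stated prefactor.

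For step (i), differentiating the softmax identity $\partial\pi_\theta(a|s)/\partial\theta(s,a')=\pi_\theta(a|s)(\mathbf{1}[a=a']-\pi_\theta(a'|s))$ twice and applying it to the regularized Bellman recursion for $V_\lambda^{\pi_\theta}$ yields $\|\nabla_\theta^2 J(\pi_\theta,\lambda)\|_{\mathrm{op}}\le L_\lambda$ with $L_\lambda=(8+\lambda(4+8\log|\AM|))/(1-\gamma)^3$. The $8/(1-\gamma)^3$ summand is the unregularized softmax smoothness constant, while the $\lambda$-summand reflects that $\Omega(\pi(\cdot|s))=\sum_a\pi(a|s)\log\pi(a|s)$ is $\log|\AM|$-bounded and smooth in $\pi$. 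Choosing $\eta=1/L_\lambda$ gives the descent inequality $J(\pi_\lambda^*,\lambda) - J(\pi_{t+1},\lambda) \le J(\pi_\lambda^*,\lambda) - J(\pi_t,\lambda) - \frac{1}{2L_\lambda}\|\nabla_\theta J(\pi_{\theta_t},\lambda)\|_2^2$. For step (ii), I would apply the regularized policy-gradient theorem and the regularized performance-difference lemma, then exploit the strong concavity (in the simplex coordinates) of the per-state objective $\sum_a \pi(a|s)Q_\lambda^{\pi}(s,a) - \lambda\Omega(\pi(\cdot|s))$ plus the softmax chain rule to obtain $\mu(\theta,\lambda)\gtrsim \lambda\,\min_s\mu(s)\,(\min_{s,a}\pi_\theta(a|s))^2/(|\SM|\,\|d_\mu^{\pi_\lambda^*}/\mu\|_\infty)$, where the distribution-ratio factor arises from re-weighting the gradient (which carries the occupancy $d_\mu^{\pi_\theta}$) against the suboptimality (which carries $d_\mu^{\pi_\lambda^*}$).

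The main obstacle is step (iii): proving $\inf_{t\ge 0}\min_{s,a}\pi_{\theta_t}(a|s)\ge c^*_\lambda>0$. The mechanism is that the entropy regularizer contributes an additive $-\lambda\log\pi_\theta(a|s)$ into $\partial J(\pi_\theta,\lambda)/\partial\theta(s,a)$, which diverges to $+\infty$ as $\pi_\theta(a|s)\to 0^+$ and therefore acts as a soft barrier; once $\pi_\theta(a|s)$ dips below a $\lambda$-dependent threshold, the corresponding coordinate of $\theta$ is monotonically increased by the gradient step, preventing any action probability from vanishing along the trajectory. Quantifying this barrier argument yields a uniform lower bound $c^*_\lambda$ that can be absorbed into the absolute constant $c_\lambda$ of the statement. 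Plugging (iii) into the bound from (ii) gives a uniform PL inequality $\|\nabla_\theta J(\pi_{\theta_t},\lambda)\|_2^2 \ge 2\tilde c_\lambda\bigl(J(\pi_\lambda^*,\lambda) - J(\pi_t,\lambda)\bigr)$, which combined with the descent lemma yields $J(\pi_\lambda^*,\lambda) - J(\pi_{t+1},\lambda) \le (1 - \tilde c_\lambda/L_\lambda)\bigl(J(\pi_\lambda^*,\lambda) - J(\pi_t,\lambda)\bigr)$. Iterating produces the geometric rate $e^{-C_\lambda t}$ with $C_\lambda = \tilde c_\lambda/L_\lambda$ of the stated form. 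The remaining prefactor $\|1/\mu\|_\infty/(1-\gamma)$ comes from translating the bound -- which is most naturally stated at the uniform-on-support weighting appearing inside the PL constant -- back to the user-specified $\mu$: a factor $1/\min_s\mu(s)=\|1/\mu\|_\infty$ from transferring the initial suboptimality $J(\pi_\lambda^*,\lambda)-J(\pi_0,\lambda)$ across measures, and a factor $1/(1-\gamma)$ from the reward-scale normalization implicit in the performance-difference lemma.
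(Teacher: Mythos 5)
The paper does not actually prove this statement: Theorem~\ref{thm:pgsoft} is imported verbatim as Theorem~6 of \citet{mei2020global}, so your outline has to be measured against that source's proof. Your steps (i) and (ii) do reproduce its architecture: the smoothness constant $L_\lambda=(8+\lambda(4+8\log|\AM|))/(1-\gamma)^3$ matching $\eta=1/L_\lambda$, and a non-uniform {\L}ojasiewicz/PL inequality whose constant scales like $\lambda\,\min_s\mu(s)\,(\min_{s,a}\pi_\theta(a|s))^2/\bigl(|\SM|\cdot(\text{distribution mismatch})\bigr)$, coupled through the standard descent--PL recursion, with $d_{\pi_\theta,\mu}\ge(1-\gamma)\mu$ supplying the extra $(1-\gamma)$ and converting the mismatch into $\|d_{\pi_\lambda^*,\mu}/\mu\|_\infty$. (A minor caveat: carried through, your route produces $c_\lambda^2$ and an explicit factor $\lambda$ in the exponent; the displayed $C_\lambda$ is this paper's paraphrase of the source, so do not force your constant to match it literally.)

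The genuine gap is step (iii). First, $\partial J(\pi_\theta,\lambda)/\partial\theta(s,a)$ does \emph{not} diverge as $\pi_\theta(a|s)\to 0^+$: by the softmax policy-gradient formula it equals $\tfrac{1}{1-\gamma}d_{\pi_\theta,\mu}(s)\,\pi_\theta(a|s)\bigl(Q_\lambda^{\pi_\theta}(s,a)-\lambda\log\pi_\theta(a|s)-V_\lambda^{\pi_\theta}(s)\bigr)$, so the diverging $-\lambda\log\pi_\theta(a|s)$ is multiplied by $\pi_\theta(a|s)$ and the coordinate derivative tends to zero; all that survives is a sign statement (it is positive once $\pi_\theta(a|s)$ falls below an $e^{-c/\lambda}$-type threshold). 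Second, even granting positivity, a monotone increase of $\theta(s,a)$ does not prevent $\pi_\theta(a|s)$ from vanishing: under softmax normalization $\pi_\theta(a|s)$ can keep shrinking whenever the other logits $\theta(s,a')$ grow faster, and the per-step decrease of $\log\pi_\theta(a|s)$ is controlled by the \emph{other} coordinates' increments, not by the barrier coordinate itself. So a per-coordinate barrier argument does not yield $\inf_t\min_{s,a}\pi_{\theta_t}(a|s)>0$ on its own; one must bound the cumulative growth of the competing logits, which in effect requires convergence of the iterates. That is how the cited source actually closes the argument: it first establishes asymptotic convergence $\pi_{\theta_t}\to\pi_\lambda^*$ (with $\mu(s)>0$, any stationary point of the regularized softmax objective forces the soft advantage to vanish and is therefore the global optimum), and then uses that $\pi_\lambda^*\propto e^{Q_\lambda^*/\lambda}$ has strictly positive entries, so $c_\lambda=\inf_t\min_{s,a}\pi_{\theta_t}(a|s)>0$ without any quantitative threshold. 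This is also why $c_\lambda$ is left unquantified---and, as Remark~\ref{rem:ineffi} of this paper emphasizes, can be exponentially small in $1/(\lambda(1-\gamma))$---whereas your barrier heuristic would misleadingly suggest a clean $\lambda$-dependent floor. Either adopt the convergence-based argument for (iii) or supply a genuinely quantitative control of the normalizing sum; as written, step (iii) does not go through.
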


\begin{rem}
    \label{rem:ineffi}
    If we consider $C_\lambda$ as a constant, the iteration could be improved upon $\widetilde{O}(\log\frac{1}{\varepsilon})$. However, it's not the whole story. For a fixed $\lambda$, in order to obtain an $\varepsilon$-optimal policy by Theorem~\ref{thm:pgsoft} in polynominal time, we need to investigate how large $C_\lambda^{-1}$ could be, which is equivalent to investigate how large $1/c_\lambda$ could be.
    Note that $c_{\lambda}=\inf_{t\ge1}\min_{s,a}\pi_t(a|s)\approx \min_{s,a}\pi_\infty(a|s)\approx\min_{s,a}\pi_\lambda^*(a|s)$ by \cite{mei2020global}. However, we have the close form of $\pi_\lambda^*$:
    \begin{align}
        \pi_\lambda^*(\cdot|s)=\frac{e^{Q_\lambda^*(s,\cdot)/\lambda}}{\sum_{a\in\AM}e^{Q_\lambda^*(s,a)/\lambda}}
    \end{align}
    Thus, for each state $s$ and assuming $Q_{\lambda}^*(s,a_1)\le...\le Q_{\lambda}^*(s,a_{|\AM|})$, we can lower and upper bound $\min_a \pi_\lambda^*(a|s)$ as follows:
    \begin{align}
        \frac{1}{|\AM|e^{\left(Q_{\lambda}^*(s,a_{|\AM|})-Q_{\lambda}^*(s,a_1)\right)/\lambda}}\le\min_a \pi_\lambda^*(a|s)\le\frac{1}{e^{\left(Q_{\lambda}^*(s,a_{|\AM|})-Q_{\lambda}^*(s,a_1)\right)/\lambda}}
    \end{align}
    Note that $\Delta_\lambda^*:\overset{\Delta}{=}Q_{\lambda}^*(s,a_{|\AM|})-Q_{\lambda}^*(s,a_1)\le\frac{1+\lambda\log|\AM|}{1-\gamma}$. 
    When $\Delta_\lambda^*\approx\frac{1+\lambda\log|\AM|}{1-\gamma}$, we obtain $\min_a \pi_\lambda^*(a|s)$ is of order $\Theta(e^{-\frac{1}{\lambda(1-\gamma)}}|\AM|^{-\frac{1}{1-\gamma}})$, which is even exponentially dependent on $\frac{1}{1-\gamma}$. 
    In order to obtain an $\varepsilon$-optimal policy by Theorem~\ref{thm:pgsoft} for a fixed $\lambda$ in polynominal time, we have to assume $\Delta_\lambda^*\approx \lambda\log\left(\poly(\frac{1}{\lambda}, \frac{1}{1-\gamma})\right)$. How to reduce the exponential dependence on other term still remains open. We consider it as future work.
\end{rem}

By setting $\lambda=O(\varepsilon(1-\gamma)/\Cphi)$, the iteration complexity for Algorithm~\ref{alg:softpga} is $\widetilde{O}\left(e^{\frac{1}{\varepsilon(1-\gamma)}}\right)$, as $\Delta_\lambda^*\rightarrow\Delta^*>0$ while $\lambda\rightarrow 0$. Thus, the upper bound is not efficient. Though, we can also obtain that policy gradient method satisfies $\propi$ property in Corollary~\ref{cor:softahcc}. Negatively, we can't upper bound total time $\sum_{t=0}^{T-1}\tm(\lambda_t)$ polynominally dependent on $\frac{1}{\varepsilon}$ and $\frac{1}{1-\gamma}$ as the sub-solver is inefficient when $\lambda$ is sufficiently small. In fact, the total time is of order:
\begin{align}
    \sum_{t=0}^{T-1}\frac{1}{\lambda_t c_{\lambda_t}}\approx\sum_{t=0}^{T-1}\frac{1}{\lambda_t}e^{\frac{1}{\lambda_t}}
\end{align}
Note that $\lambda_T=O((1-\gamma)\varepsilon/\Cphi)$, thus no matter how to adjust the decaying rate of $\lambda_t$ we always obtain $e^{1/\varepsilon(1-\gamma)}$ in total time. In conclusion, we argue that efficient sub-solver algorithm is a sufficient condition for \textsf{AdaptReduce} leveraging an efficient algorithm. Besides, it's not saying that vanilla policy gradient method with softmax parameterization is inefficient at all for regularized MDP. We leave it as future work to derive an efficient upper bound in this situation.
\begin{cor}[Policy Gradient satisfies Prop-I(0)]
    \label{cor:softahcc}
	Under the same setting in Theorem~\ref{thm:pgsoft}, Policy Gradient satisfies the $\propi$ property with $\hat{\varepsilon}=0$ and:
	\begin{align}
		\tm(\lambda)=\frac{1}{C_\lambda}\log\left(\frac{4\left\|\frac{1}{\mu}\right\|_\infty}{1-\gamma}\right)
	\end{align}
\end{cor}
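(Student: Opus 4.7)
The plan is to verify the definition of $\propi$ directly from the convergence guarantee in Theorem~\ref{thm:pgsoft}. Recall that $\propi$ requires, for an arbitrary starting policy $\pi_0$, that after $\tm(\lambda)$ iterations the output policy $\pi_1$ satisfies $V_\lambda^*(\mu) - V_\lambda^{\pi_1}(\mu) \le \tfrac{1}{4}(V_\lambda^*(\mu) - V_\lambda^{\pi_0}(\mu)) + \hat\varepsilon$. Since $J(\pi,\lambda) = V_\lambda^{\pi}(\mu)$, the statement we want reduces to showing that the contraction factor in Theorem~\ref{thm:pgsoft}, namely $\|1/\mu\|_\infty \frac{1}{1-\gamma} e^{-C_\lambda t}$, can be driven below $1/4$ using exactly $t = \tm(\lambda)$ iterations, with no residual error so that $\hat\varepsilon = 0$.

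First I would restate Theorem~\ref{thm:pgsoft} with the identification $J(\pi,\lambda) = V_\lambda^\pi(\mu)$, which gives
\[
V_\lambda^*(\mu) - V_\lambda^{\pi_t}(\mu) \le \left\|\frac{1}{\mu}\right\|_\infty \frac{V_\lambda^*(\mu) - V_\lambda^{\pi_0}(\mu)}{1-\gamma}\, e^{-C_\lambda t}.
\]
Next I would set the prefactor $\|1/\mu\|_\infty \tfrac{1}{1-\gamma} e^{-C_\lambda t} \le \tfrac{1}{4}$ and solve for $t$. Taking logarithms yields $t \ge \tfrac{1}{C_\lambda}\log\left(\tfrac{4\|1/\mu\|_\infty}{1-\gamma}\right)$, which is exactly the definition of $\tm(\lambda)$ in the statement. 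Plugging this back into the linear convergence bound delivers $V_\lambda^*(\mu) - V_\lambda^{\pi_{\tm(\lambda)}}(\mu) \le \tfrac{1}{4}\bigl(V_\lambda^*(\mu) - V_\lambda^{\pi_0}(\mu)\bigr)$ exactly, i.e. the $\propi$ condition with $\hat\varepsilon = 0$.

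There is no genuine obstacle here; the corollary is a clean restatement of Theorem~\ref{thm:pgsoft} under a specific choice of the iteration count. The only mild subtlety is verifying that the bound of Theorem~\ref{thm:pgsoft} holds \emph{uniformly} over any starting policy $\pi_0$ (as $\propi$ requires starting from an arbitrary $\pi_0$, not only a fixed initializer), but since the statement of Theorem~\ref{thm:pgsoft} quantifies over general $\pi_0$ with $\mu(s) > 0$ everywhere, this is automatic. Thus the proof is essentially a one-line substitution: choose $t = \tm(\lambda)$ so that the exponential factor cancels the $\|1/\mu\|_\infty/(1-\gamma)$ prefactor down to $1/4$, and no approximation slack is introduced.
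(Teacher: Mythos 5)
Your derivation is correct and is exactly the intended argument: the corollary is an immediate consequence of Theorem~\ref{thm:pgsoft}, obtained by choosing $t$ so that $\left\|\frac{1}{\mu}\right\|_\infty\frac{e^{-C_\lambda t}}{1-\gamma}\le\frac{1}{4}$, and the paper provides no separate proof beyond this substitution. The only caveat (present in the paper's own statement as well, so not a gap on your part) is that $C_\lambda$ contains $c_\lambda=\inf_{t\ge1}\min_{s,a}\pi_t(a|s)$, which depends on the iterates and hence implicitly on the starting policy, so the ``uniformity over $\pi_0$'' is only as clean as Theorem~\ref{thm:pgsoft} itself makes it.
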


\subsubsection{Other Algorithms} 
\label{sec:dissother}
\citet{shani2019adaptive} proposed another type of policy gradient to solve regularized MDP as follows:
\begin{align}
    \pi_{t+1}=\argmin_{\pi\in\Delta(\AM)^{\SM}} -\langle\nabla J(\pi_t,\lambda),\pi-\pi_t\rangle+\frac{1}{\eta}\mathbb{E}_{s\sim d^{\pi_t}}D_{\Omega}(\pi(\cdot|s)||\pi_t(\cdot|s))
    \label{eq:trpo}
\end{align}
When $\Omega(\pi(\cdot|s))=\sum_{a}\pi(a|s)\log(\pi(a|s))$, we can obtain a close form of update from equation~(\ref{eq:trpo}):
\begin{align}
    \pi_{t+1}(\cdot|s)\propto \pi_t^{1-\lambda\eta}(\cdot|s)\odot e^{\eta Q_\lambda^{\pi_t}(s,\cdot)}
\end{align}

\noindent However, by definition of $\nabla J(\pi,\lambda)$, we can also re-write the update rule~(\ref{eq:trpo}) into:
\begin{align}
    \pi_{t+1}=\argmin_{\pi\in\Delta(\AM)^{\SM}}\mathbb{E}_{s\sim d_{\pi_t}}\left(-\langle Q_\lambda^{\pi_t}(s, \cdot)-\lambda\nabla \Omega(\pi_{t}(\cdot|s)), \pi(\cdot|s)-\pi_t(\cdot|s)\rangle+\frac{1}{\eta}D_\Omega(\pi(\cdot|s)||\pi_t(\cdot|s))\right)
\end{align}
As $D_\Omega(\pi||\pi_t)=\Omega(\pi)-\Omega(\pi_t)-\langle\nabla\Omega(\pi_t), \pi-\pi_t\rangle$ and we assume $\lambda\eta=1$, the update rule can be simplified to as:
\begin{align}
    \pi_{t+1}=\argmin_{\pi\in\Delta(\AM)^{\SM}}\mathbb{E}_{d_{\pi_t}}-\langle Q_\lambda^{\pi_t}(s, \cdot)-\lambda \Omega(\pi(\cdot|s)), \pi(\cdot|s)\rangle\label{eq:trpo_final}
\end{align}
which is exactly RMPI with exact evaluation version ($m=\infty$) by solving above problem for each state $s\in\SM$. \citet{cen2020fast} also showed near the same update rule as equation~(\ref{eq:trpo_final}) while $\pi$ is parameterized by softmax and optimized by natural gradient. Thus, the analysis of convergence rate with $\lambda$ reduction can be covered by Section~\ref{sec:egrmpi}.
\begin{rem}
    In fact, we can also set $\eta=\Theta(1)$ and controlling $\lambda$ in \cite{shani2019adaptive, cen2020fast}. Thus the iteration complexity for fixed small $\lambda=O(\varepsilon(1-\gamma)/\Cphi)$ is $O\left(\frac{1}{\varepsilon(1-\gamma)}\log\frac{1}{\varepsilon}\right)$. Besides, combining with Algorithm~\ref{alg:ar}, the final iteration complexity is also $O\left(\frac{1}{\varepsilon(1-\gamma)}\right)$ (a logarithm term can be removed) in terms of $\varepsilon$. Either methods is slower than the case $\lambda\eta=1$. 
\end{rem}

\section{On discussion with \textsf{AdaptReduce} and Primal-Dual}
\label{sec:minmax}
In this section, we illustrate the motivation of Algorithm~\ref{alg:ar} by primal dual perspective. In fact, solving unregularized MDP equals with solving regularized MDP by decaying $\lambda$ as Theorem~\ref{thm:maxmax} describes.

\begin{thm}[Strong Duality]
	\label{thm:maxmax}	
	Under Assumption~\ref{ass:reg}, $ | \max_{\pi}  J(\pi, \lambda)| < \infty$ for any fixed $\lambda$.
	Then
	\begin{equation}
	\label{eq:maxmax}
	J(\pi^*) =
	 \max_{\pi} \min_{\lambda \ge 0} J(\pi, \lambda) 
	=  \min_{\lambda \ge 0}  \max_{\pi} J(\pi, \lambda),
	\end{equation}
	where $J(\pi, \lambda)$ is defined in~\eqref{eq:goal}, $J(\pi) := J(\pi, 0)$ and $\pi^* = \argmax_{\pi} J(\pi)$ is the optimal policy.
\end{thm}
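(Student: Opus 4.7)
The plan is to decompose the stated double equality into two pieces and establish each via weak duality plus a tight computation of the dual value. The starting point is the elementary inequality $\min_{\lambda'\ge 0}J(\pi,\lambda')\le J(\pi,\lambda)\le \max_{\pi'}J(\pi',\lambda)$ for every $(\pi,\lambda)$ with $\lambda\ge 0$, which after outer operations gives the weak-duality inequality $\max_\pi\min_{\lambda\ge 0}J(\pi,\lambda)\le \min_{\lambda\ge 0}\max_\pi J(\pi,\lambda)$. It therefore suffices to pin both sides to $V^*(\mu)=J(\pi^*)$.

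For the min--max side, $\max_\pi J(\pi,\lambda)=V_\lambda^*(\mu)$ by the value-function decomposition~\eqref{eq:decom}, and $\Omega\ge 0$ under Assumption~\ref{ass:reg} forces $\Phi^\pi(\mu)\ge 0$, so $V_\lambda^*(\mu)=V^{\pi_\lambda^*}(\mu)-\lambda\Phi^{\pi_\lambda^*}(\mu)\le V^{\pi_\lambda^*}(\mu)\le V^*(\mu)$ for every $\lambda\ge 0$, with equality at $\lambda=0$ since $J(\cdot,0)=V^{(\cdot)}(\mu)$. Hence $\min_{\lambda\ge 0}\max_\pi J(\pi,\lambda)=V^*(\mu)$, attained at $\lambda=0$; alternatively, Proposition~\ref{prop:bias} supplies the quantitative identity $V^*(\mu)-V_\lambda^*(\mu)\le \lambda\Cphi/(1-\gamma)\downarrow 0$ as $\lambda\downarrow 0$. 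For the max--min side, fix any policy $\pi$ and compute $\min_{\lambda\ge 0}J(\pi,\lambda)=V^\pi(\mu)-\sup_{\lambda\ge 0}\lambda\Phi^\pi(\mu)$, which equals $V^\pi(\mu)$ when $\Phi^\pi(\mu)=0$ and $-\infty$ when $\Phi^\pi(\mu)>0$; exhibiting a policy $\bar\pi$ with $V^{\bar\pi}(\mu)=V^*(\mu)$ and $\Phi^{\bar\pi}(\mu)=0$ then gives $\max_\pi\min_\lambda J(\pi,\lambda)\ge V^*(\mu)$, and weak duality with the previous step closes the chain.

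The main obstacle is producing such a $\bar\pi$, since strict positivity of $\Omega$ would preclude $\Phi^\pi(\mu)=0$ for any policy. I would handle this by working under the natural normalization $\min_{p\in\Delta(\AM)}\Omega(p)=0$ (to which any bounded strongly convex $\Omega$ can be reduced by an additive shift), and invoking the existence of an optimal policy of the unregularized MDP whose state-wise conditional on the support of $d_\mu^{\bar\pi}$ can be arranged to coincide with a minimizer of $\Omega$, thereby zeroing out $\Phi^{\bar\pi}(\mu)$ without degrading $V^{\bar\pi}(\mu)$. Failing a direct selector, an equivalent route is to argue by continuity in $\lambda$: $\lambda\mapsto\max_\pi J(\pi,\lambda)$ is a decreasing convex function of $\lambda$ that is upper-semicontinuous at $0$, so the value $V^*(\mu)$ in the outer optimization is realized as the $\lambda\downarrow 0$ limit, and combined with weak duality this collapses all three quantities to $V^*(\mu)=J(\pi^*)$.
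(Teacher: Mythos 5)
Your route hinges on the max--min side producing a policy $\bar\pi$ with $V^{\bar\pi}(\mu)=V^*(\mu)$ and $\Phi^{\bar\pi}(\mu)=0$, and this is exactly where the argument breaks. Under your sign convention ($\Omega\ge 0$, normalized so that $\min_{p\in\Delta(\AM)}\Omega(p)=0$), strong convexity makes the minimizer of $\Omega$ a single point $p_0\in\Delta(\AM)$, so $\Phi^{\bar\pi}(\mu)=0$ forces $\bar\pi(\cdot|s)=p_0$ at every state visited with positive discounted probability; there is no freedom to ``arrange'' an unregularized-optimal policy to do this, since the unregularized optimum is generically a deterministic greedy policy, not $p_0$. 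Concretely, take a single-state MDP with two actions, rewards $1$ and $0$, and $\Omega(p)=\tfrac12\|p\|_2^2-\tfrac{1}{2|\AM|}$: the only policy with $\Phi^\pi(\mu)=0$ is the uniform one, with value $\tfrac{1}{2(1-\gamma)}<V^*(\mu)=\tfrac{1}{1-\gamma}$, while every other policy has $\min_{\lambda\ge0}J(\pi,\lambda)=-\infty$; hence $\max_\pi\min_{\lambda\ge0}J(\pi,\lambda)<J(\pi^*)$ and the first equality in \eqref{eq:maxmax} is simply false under that convention, so no selector or normalization can rescue the route. The additive shift to $\min_p\Omega(p)=0$ is also not ``without loss of generality'': it changes $J(\pi,\lambda)$ by the $\lambda$-dependent amount $\lambda\min_p\Omega(p)/(1-\gamma)$ and therefore changes both minimax values. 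Your continuity fallback has the monotonicity backwards as well: with $\Omega\ge0$ the map $\lambda\mapsto\max_\pi J(\pi,\lambda)$ is non-increasing, so its infimum over $\lambda\ge0$ is the $\lambda\to\infty$ limit rather than the value at $\lambda=0$; and even a correct identity for $\min_\lambda\max_\pi$ cannot, via weak duality alone, supply the lower bound you need on $\max_\pi\min_\lambda$.

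The paper's proof uses the opposite sign: as emphasized in the discussion right after the theorem (and in tension with the codomain written in Assumption~\ref{ass:reg}), it takes $\Omega$ non-positive, so that $J(\pi,\lambda)=V^\pi(\mu)-\lambda\Phi^\pi(\mu)$ is non-decreasing in $\lambda$ and $\min_{\lambda\ge0}J(\pi,\lambda)=J(\pi,0)=J(\pi)$ pointwise in $\pi$; the max--min equality then follows immediately, with no special policy needed. The min--max equality follows from weak duality combined with $\max_\pi J(\pi,\lambda)=J(\pi_\lambda^*,\lambda)\le V^{\pi^*}(\mu)+\lambda\Cphi/(1-\gamma)$, whose minimum over $\lambda\ge0$ is $J(\pi^*)$ at $\lambda=0$. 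If you keep your decomposition, you must first adopt the sign convention the theorem actually requires; with $\Omega\le0$ your max--min step collapses to the paper's one-line monotonicity observation and the problematic $\bar\pi$ disappears.
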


The non-positivity of $\Omega$ and uniformly bounded property are crucial for the establishment of  Theorem~\ref{thm:maxmax}, which makes sure that the optimal $\lambda^*$ solving the minimax problem will be exactly zero.
In this case, the corresponding optimal policy will be our target $\pi^*$.
Theorem~\ref{thm:maxmax} bridges regularized MDPs and the optimal policy $\pi^*$.
It indicates that we could make use of the exchangeability of maximum and minimum to design efficient algorithms.
It also enriches our tools to analyze the effect of $\lambda$ decay.    

The adaptive reduction algorithm introduced in Section~\ref{sec:reduction}
decays $\lambda$ exponentially fast and require the sub-solver algorithm satisfies the $\propi$ or $\propii$ property.
Based on the discussion of Section~\ref{sec:example}, intuitively, as long as we run a proper solver algorithm for a enough long time, $\propi$ or $\propii$ property can be met, which implies between two consecutive lambda decay happens many steps of policy gradient descent. However, it's not saying that $\lambda$ decaying exponentially is the only scheme. Compare with other decaying scheme, such as $\lambda_t=\frac{1}{t^\alpha}$ or $\lambda_t=\frac{1}{\log(t+1)}$, $\lambda_t=\frac{1}{2^t}$ is the best we can find for sub-solvers we have analyzed. We have shown that the iteration complexity for obtaining the $\varepsilon$-optimal policy is $\widetilde{O}\left(\frac{1}{\varepsilon}\right)$ in terms of first order gradient method without parameterization. We still don't know whether there exists a better decaying scheme to obtain a better convergence result. But we conjecture that the best we can hope is $\widetilde{O}\left(\frac{1}{\varepsilon}\right)$ in terms of $\varepsilon$ (Conjecture~\ref{conj:1}). For example, if we let projected gradient ascent satisfies $\alg(\pi_t,\lambda_t,\varepsilon_t,0)$ with $\varepsilon_t=\lambda_t\frac{\Cphi}{1-\gamma}$, we have the following proposition holds, where we only focus on order of $\lambda_t$ (let $\lambda_{-1}=\lambda_0$):
\begin{align}
    D_t\le2\lambda_{t-1}\frac{\Cphi}{1-\gamma}\\
    \tm(\lambda_t)=\Theta(\frac{D_t}{\lambda_t^2})
\end{align}
Thus, the total time could be $\sum_{t=0}^{T-1}\tm(\lambda_t)=\Theta(\sum_{t=0}^{T-1}\frac{\lambda_{t-1}}{\lambda_t^2})$. If $\lambda_t/\lambda_{t-1}=o(1)$, which is faster than $\lambda_t=1/2^t$, then the total time could be larger than $1/\varepsilon$ by the fact $\lambda_T=O(\varepsilon)$.

 Though, \citet{bhandari2020note} conjectures that projected gradient ascent for tabular unregularized MDP can achieve linear convergence rate, we reckon linear convergence rate can only be met with more information (e.g. $d_{\pi}(s)$) than just gradient. For other terms such as $\frac{1}{1-\gamma}$, we argue that it's still loose. Based on Theorem~\ref{thm:maxmax}, how to design a more efficient way to reduce the dependence with $\frac{1}{1-\gamma}$, we leave it as future work. 
Note that current works (including ours) on regularized MDP are based on optimization perspective, it still remains open how regularization term effects exploration in RL from theoretical perspective.


\begin{conj}[Lower bound]
\label{conj:1}
There exist MDPs, given learning rate $\eta_t$ satisfies $\eta_t\le\eta$, where $\eta$ is a constant,  and oracle:
\begin{align}
    \pi_{t+1}\in \argmin_{\pi\in\Delta(\AM)^{\SM}}-\langle\nabla J(\pi_t, \lambda_t),\pi-\pi_t\rangle+\frac{1}{2\eta_t}\|\pi-\pi_t\|_{2}^{2}
\end{align}
For any $\{\lambda_t\}_{t=1}^{T}$ such that $\lambda_{t+1}\le\lambda_t$ and $\lim_{t\rightarrow\infty}\lambda_t=0$, we have $V^{\pi^*}-V^{\pi_{T}}=\widetilde{\Omega} (\frac{1}{T})$, where we ignore parameters polynominally dependent on MDP.
\end{conj}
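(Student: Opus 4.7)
The plan is to exhibit a concrete MDP instance, together with a regularizer $\Omega$ satisfying Assumption~\ref{ass:reg}, on which every monotone schedule $\{\lambda_t\}$ with $\lambda_t\downarrow 0$ forces $V^{\pi^*}(\mu)-V^{\pi_T}(\mu)=\widetilde{\Omega}(1/T)$. A natural starting point is a single-state $K$-armed bandit whose unregularized optimum $\pi^*$ is a vertex of $\Delta(\AM)$, paired with a smooth, $1$-strongly convex $\Omega\colon\Delta(\AM)\to\RB^+\cup\{0\}$ whose regularized optima $\pi_\lambda^*$ (i) remain in the relative interior for every $\lambda>0$, (ii) approach $\pi^*$ at a controlled rate $\|\pi_\lambda^*-\pi^*\|=\Theta(\lambda^\beta)$ for some $\beta>0$, and (iii) move smoothly with $\lambda$. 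A mollified Shannon entropy, or a logarithmic barrier blended with a quadratic term, are natural candidates; the algebra then reduces to a two-dimensional toy problem.

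The first step is to split the suboptimality around the regularized optimum as a pivot,
\[
V^{\pi^*}(\mu)-V^{\pi_T}(\mu) \;=\; \bigl(V^{\pi^*}(\mu)-V^{\pi_{\lambda_T}^*}(\mu)\bigr) \;+\; \bigl(V^{\pi_{\lambda_T}^*}(\mu)-V^{\pi_T}(\mu)\bigr),
\]
and to complement the upper bias bound of Proposition~\ref{prop:bias} with a matching \emph{lower} bias bound $V^{\pi^*}(\mu)-V^{\pi_{\lambda_T}^*}(\mu)\ge c\lambda_T^{\beta}$, obtained by expanding the regularized optimality conditions around $\pi^*$ and using strong convexity of $\Omega$. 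I also need a quadratic-growth lower bound on the tracking term in terms of $\|\pi_T-\pi_{\lambda_T}^*\|$, which follows from local smoothness and strong concavity of $J(\cdot,\lambda_T)$ near its maximizer.

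The second step bounds per-iteration progress. Under the oracle $\|\pi_{t+1}-\pi_t\|_2\le\eta\|\nabla_\pi J(\pi_t,\lambda_t)\|_2\le\eta M$, where $M$ depends only on $\|r\|_\infty/(1-\gamma)$, $\sup_\pi\|\nabla\Omega\|$, and $\Cphi$, so the total iterate displacement after $T$ steps is at most $\eta M T$. Meanwhile, the moving target satisfies $\sum_t\|\pi_{\lambda_{t+1}}^*-\pi_{\lambda_t}^*\|\asymp\|\pi_{\lambda_0}^*-\pi_{\lambda_T}^*\|$ under monotonicity. A potential function of the form $\Phi_t := V^{\pi_{\lambda_t}^*}(\mu)-V^{\pi_t}(\mu)+\alpha\lambda_t^{\beta}$, with $\alpha$ chosen from the two bias bounds, should yield a dichotomy at time $T$: either $\lambda_T \ge c'/T^{1/\beta}$, so the bias alone contributes $\Omega(1/T)$ suboptimality, or $\lambda_T$ is smaller, in which case the iterate has not had enough cumulative travel budget to approach $\pi_{\lambda_T}^*$ and the tracking term contributes $\Omega(1/T)$.

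The hardest part will be ruling out clever non-uniform schedules that hold $\lambda_t$ moderately large for most iterations (shrinking the tracking error) and then collapse $\lambda_t$ only near the final step (killing the bias). The key quantitative ingredient should be an AM--GM or Cauchy--Schwarz style inequality balancing the total iterate budget $\eta M T$ against the distance $\|\pi_0-\pi_{\lambda_T}^*\|$, showing that the tracking gap at time $T$ is at least $\Omega(\|\pi_0-\pi_{\lambda_T}^*\|/T)$ in a weighted sense, and hence that the bias and tracking terms cannot simultaneously be $o(1/T)$. A secondary technical concern is that if $\Omega$ has unbounded gradient near the boundary the step-size bound does not by itself prevent iterates from leaving the relative interior; however, since $\pi_{\lambda_t}^*$ stays strictly inside and is attracting under the oracle's descent direction, a careful initialization and step-size choice keeps the iterates in $\mathrm{relint}\,\Delta(\AM)$ throughout, so the simplex projection is inactive and the standard smooth analysis applies.
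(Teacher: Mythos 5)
First, note that the statement you are attacking is stated in the paper only as Conjecture~\ref{conj:1}; the paper supplies no proof, so your proposal cannot be measured against an existing argument and must stand on its own. As it stands it is a plan rather than a proof, and its two load-bearing steps both have genuine gaps. The central quantitative step --- ruling out schedules that keep $\lambda_t$ moderate for most iterations and collapse it at the end --- is exactly the content of the conjecture, and your sketch leaves it at the level of ``a potential function \ldots should yield a dichotomy'' and ``the key quantitative ingredient should be an AM--GM or Cauchy--Schwarz style inequality.'' Moreover, the specific mechanism you propose for the tracking half of the dichotomy, a total-displacement budget $\eta M T$ compared against $\|\pi_0-\pi_{\lambda_T}^*\|$, cannot produce an $\Omega(1/T)$ bound: the simplex has bounded diameter while $\eta M T\to\infty$, so for large $T$ the travel budget is never the binding constraint. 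A genuine lower bound has to come from a per-step progress obstruction (e.g.\ gradients that vanish as the iterate or the regularized optimum approaches the boundary, or the vanishing state-visitation factor $d^{\pi}(s)$ in $\nabla_\pi J$), and your choice of a single-state bandit removes precisely that mechanism: for a bandit with a smooth regularizer and small $\lambda$, the projected gradient is bounded away from zero near the vertex, and projected gradient ascent with constant step size reaches the optimal vertex in finitely many steps, so no $\widetilde{\Omega}(1/T)$ tracking lower bound can hold on that instance.

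The bias half of the dichotomy is also problematic. For a regularizer that is strongly convex, smooth, and has bounded gradient on $\Delta(\AM)$ (which is what you need to keep the iterates in the relative interior and to invoke the ``standard smooth analysis''), once $\lambda\,\sup_\pi\|\nabla\Omega(\pi)\|_\infty$ drops below the reward gap at the optimal vertex, the regularized maximizer coincides exactly with $\pi^*$, so $V^{\pi^*}(\mu)-V^{\pi_{\lambda}^*}(\mu)=0$ and no lower bias bound of the form $c\lambda^\beta$ is available. To force a strictly interior $\pi_\lambda^*$ with bias $\Theta(\lambda)$ you need an entropy-like regularizer with gradient blowing up at the boundary, which contradicts the smoothness and the interior-iterate control your argument leans on (and is exactly the case the paper flags in Remark~\ref{rem:lsmooth} as violating its smoothness assumption). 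So the instance and regularizer you propose cannot simultaneously support the lower bias bound, the quadratic-growth tracking bound, and the smooth-descent analysis; a correct proof of the conjecture, if one exists, will need a different hardness mechanism (most plausibly a multi-state construction exploiting vanishing visitation probabilities or the flatness of $J$ near deterministic policies) rather than a displacement-budget argument on a bandit.
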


\begin{rem}
In Conjecture~\ref{conj:1}, we argue that $\eta_t$ is bounded is important. If we let $\eta_t$ could be arbitrary positive, we can always do exact line search such that the iteration complexity could be $O(\log\frac{1}{\varepsilon})$ \cite{bhandari2020note}.
\end{rem}

\section{Conclusion}
In this paper, we propose an adaptive reduction method to decay coefficient of regularization term. It's shown that the iteration complexity for obtaining an $\varepsilon$-optimal policy can be accelerated by our reduction method compared with setting $\lambda$ sufficiently small. However, when sub-solver is inefficient, our method may also become inefficient. Besides, we also connect our method with primal dual problem. We argue that reduction scheme can be regarded as solving dual problem. Thus, how to design a more efficient decaying scheme for some other sub-solvers in regularized RL is left as future work.

\bibliography{bib/refer}

\begin{thebibliography}{32}
\providecommand{\natexlab}[1]{#1}
\providecommand{\url}[1]{\texttt{#1}}
\expandafter\ifx\csname urlstyle\endcsname\relax
  \providecommand{\doi}[1]{doi: #1}\else
  \providecommand{\doi}{doi: \begingroup \urlstyle{rm}\Url}\fi

\bibitem[Agarwal et~al.(2019)Agarwal, Kakade, Lee, and
  Mahajan]{agarwal2019optimality}
Alekh Agarwal, Sham~M Kakade, Jason~D Lee, and Gaurav Mahajan.
\newblock Optimality and approximation with policy gradient methods in markov
  decision processes.
\newblock \emph{arXiv preprint arXiv:1908.00261}, 2019.

\bibitem[Ahmed et~al.(2019)Ahmed, Le~Roux, Norouzi, and
  Schuurmans]{ahmed2019understanding}
Zafarali Ahmed, Nicolas Le~Roux, Mohammad Norouzi, and Dale Schuurmans.
\newblock Understanding the impact of entropy on policy optimization.
\newblock In \emph{International Conference on Machine Learning}, pages
  151--160. PMLR, 2019.

\bibitem[Azar et~al.(2012)Azar, G{\'o}mez, and Kappen]{azar2012dynamic}
Mohammad~Gheshlaghi Azar, Vicen{\c{c}} G{\'o}mez, and Hilbert~J Kappen.
\newblock Dynamic policy programming.
\newblock \emph{The Journal of Machine Learning Research}, 13\penalty0
  (1):\penalty0 3207--3245, 2012.

\bibitem[Bellman and Dreyfus(1959)]{bellman1959functional}
Richard Bellman and Stuart Dreyfus.
\newblock Functional approximations and dynamic programming.
\newblock \emph{Mathematical Tables and Other Aids to Computation}, pages
  247--251, 1959.

\bibitem[Bhandari and Russo(2019)]{bhandari2019global}
Jalaj Bhandari and Daniel Russo.
\newblock Global optimality guarantees for policy gradient methods.
\newblock \emph{arXiv preprint arXiv:1906.01786}, 2019.

\bibitem[Bhandari and Russo(2020)]{bhandari2020note}
Jalaj Bhandari and Daniel Russo.
\newblock A note on the linear convergence of policy gradient methods.
\newblock \emph{arXiv preprint arXiv:2007.11120}, 2020.

\bibitem[Cen et~al.(2020)Cen, Cheng, Chen, Wei, and Chi]{cen2020fast}
Shicong Cen, Chen Cheng, Yuxin Chen, Yuting Wei, and Yuejie Chi.
\newblock Fast global convergence of natural policy gradient methods with
  entropy regularization.
\newblock \emph{arXiv preprint arXiv:2007.06558}, 2020.

\bibitem[Dai et~al.(2018)Dai, Shaw, Li, Xiao, He, Liu, Chen, and
  Song]{dai2018sbeed}
Bo~Dai, Albert Shaw, Lihong Li, Lin Xiao, Niao He, Zhen Liu, Jianshu Chen, and
  Le~Song.
\newblock Sbeed: Convergent reinforcement learning with nonlinear function
  approximation.
\newblock In \emph{International Conference on Machine Learning}, pages
  1133--1142, 2018.

\bibitem[Fox et~al.(2015)Fox, Pakman, and Tishby]{fox2015taming}
Roy Fox, Ari Pakman, and Naftali Tishby.
\newblock Taming the noise in reinforcement learning via soft updates.
\newblock \emph{arXiv preprint arXiv:1512.08562}, 2015.

\bibitem[Geist et~al.(2019)Geist, Scherrer, and Pietquin]{geist2019theory}
Matthieu Geist, Bruno Scherrer, and Olivier Pietquin.
\newblock A theory of regularized markov decision processes.
\newblock \emph{arXiv preprint arXiv:1901.11275}, 2019.

\bibitem[Haarnoja et~al.(2018)Haarnoja, Zhou, Abbeel, and
  Levine]{haarnoja2018soft}
Tuomas Haarnoja, Aurick Zhou, Pieter Abbeel, and Sergey Levine.
\newblock Soft actor-critic: Off-policy maximum entropy deep reinforcement
  learning with a stochastic actor.
\newblock \emph{arXiv preprint arXiv:1801.01290}, 2018.

\bibitem[Kakade and Langford(2002)]{kakade2002approximately}
Sham Kakade and John Langford.
\newblock Approximately optimal approximate reinforcement learning.
\newblock In \emph{ICML}, volume~2, pages 267--274, 2002.

\bibitem[Kakade(2002)]{kakade2002natural}
Sham~M Kakade.
\newblock A natural policy gradient.
\newblock In \emph{Advances in neural information processing systems}, pages
  1531--1538, 2002.

\bibitem[Lee et~al.(2018)Lee, Choi, and Oh]{lee2018sparse}
Kyungjae Lee, Sungjoon Choi, and Songhwai Oh.
\newblock Sparse markov decision processes with causal sparse tsallis entropy
  regularization for reinforcement learning.
\newblock \emph{IEEE Robotics and Automation Letters}, 3\penalty0 (3):\penalty0
  1466--1473, 2018.

\bibitem[Mei et~al.(2020)Mei, Xiao, Szepesvari, and Schuurmans]{mei2020global}
Jincheng Mei, Chenjun Xiao, Csaba Szepesvari, and Dale Schuurmans.
\newblock On the global convergence rates of softmax policy gradient methods.
\newblock \emph{arXiv preprint arXiv:2005.06392}, 2020.

\bibitem[Mnih et~al.(2015)Mnih, Kavukcuoglu, Silver, Rusu, Veness, Bellemare,
  Graves, Riedmiller, Fidjeland, Ostrovski, et~al.]{mnih2015human}
Volodymyr Mnih, Koray Kavukcuoglu, David Silver, Andrei~A Rusu, Joel Veness,
  Marc~G Bellemare, Alex Graves, Martin Riedmiller, Andreas~K Fidjeland, Georg
  Ostrovski, et~al.
\newblock Human-level control through deep reinforcement learning.
\newblock \emph{Nature}, 518\penalty0 (7540):\penalty0 529, 2015.

\bibitem[Mnih et~al.(2016)Mnih, Badia, Mirza, Graves, Lillicrap, Harley,
  Silver, and Kavukcuoglu]{mnih2016asynchronous}
Volodymyr Mnih, Adria~Puigdomenech Badia, Mehdi Mirza, Alex Graves, Timothy
  Lillicrap, Tim Harley, David Silver, and Koray Kavukcuoglu.
\newblock Asynchronous methods for deep reinforcement learning.
\newblock In \emph{International conference on machine learning}, pages
  1928--1937, 2016.

\bibitem[Neu et~al.(2017)Neu, Jonsson, and G{\'o}mez]{neu2017unified}
Gergely Neu, Anders Jonsson, and Vicen{\c{c}} G{\'o}mez.
\newblock A unified view of entropy-regularized markov decision processes.
\newblock \emph{arXiv preprint arXiv:1705.07798}, 2017.

\bibitem[Puterman and Shin(1978)]{puterman1978modified}
Martin~L Puterman and Moon~Chirl Shin.
\newblock Modified policy iteration algorithms for discounted markov decision
  problems.
\newblock \emph{Management Science}, 24\penalty0 (11):\penalty0 1127--1137,
  1978.

\bibitem[Scherrer et~al.(2015)Scherrer, Ghavamzadeh, Gabillon, Lesner, and
  Geist]{scherrer2015approximate}
Bruno Scherrer, Mohammad Ghavamzadeh, Victor Gabillon, Boris Lesner, and
  Matthieu Geist.
\newblock Approximate modified policy iteration and its application to the game
  of tetris.
\newblock \emph{J. Mach. Learn. Res.}, 16:\penalty0 1629--1676, 2015.

\bibitem[Schulman et~al.(2015)Schulman, Levine, Abbeel, Jordan, and
  Moritz]{schulman2015trust}
John Schulman, Sergey Levine, Pieter Abbeel, Michael Jordan, and Philipp
  Moritz.
\newblock Trust region policy optimization.
\newblock In \emph{International Conference on Machine Learning}, pages
  1889--1897, 2015.

\bibitem[Schulman et~al.(2017)Schulman, Chen, and
  Abbeel]{schulman2017equivalence}
John Schulman, Xi~Chen, and Pieter Abbeel.
\newblock Equivalence between policy gradients and soft q-learning.
\newblock \emph{arXiv preprint arXiv:1704.06440}, 2017.

\bibitem[Shani et~al.(2019)Shani, Efroni, and Mannor]{shani2019adaptive}
Lior Shani, Yonathan Efroni, and Shie Mannor.
\newblock Adaptive trust region policy optimization: Global convergence and
  faster rates for regularized mdps.
\newblock \emph{arXiv preprint arXiv:1909.02769}, 2019.

\bibitem[Silver et~al.(2016)Silver, Huang, Maddison, Guez, Sifre, Van
  Den~Driessche, Schrittwieser, Antonoglou, Panneershelvam, Lanctot,
  et~al.]{silver2016mastering}
David Silver, Aja Huang, Chris~J Maddison, Arthur Guez, Laurent Sifre, George
  Van Den~Driessche, Julian Schrittwieser, Ioannis Antonoglou, Veda
  Panneershelvam, Marc Lanctot, et~al.
\newblock Mastering the game of go with deep neural networks and tree search.
\newblock \emph{nature}, 529\penalty0 (7587):\penalty0 484--489, 2016.

\bibitem[Smirnova and Dohmatob(2019)]{smirnova2019convergence}
Elena Smirnova and Elvis Dohmatob.
\newblock On the convergence of approximate and regularized policy iteration
  schemes.
\newblock \emph{arXiv preprint arXiv:1909.09621}, 2019.

\bibitem[Sutton et~al.(2000)Sutton, McAllester, Singh, and
  Mansour]{sutton2000policy}
Richard~S Sutton, David~A McAllester, Satinder~P Singh, and Yishay Mansour.
\newblock Policy gradient methods for reinforcement learning with function
  approximation.
\newblock In \emph{Advances in neural information processing systems}, pages
  1057--1063, 2000.

\bibitem[Tsallis(1988)]{tsallis1988possible}
Constantino Tsallis.
\newblock Possible generalization of boltzmann-gibbs statistics.
\newblock \emph{Journal of statistical physics}, 52\penalty0 (1-2):\penalty0
  479--487, 1988.

\bibitem[Vamplew et~al.(2017)Vamplew, Dazeley, and Foale]{vamplew2017softmax}
Peter Vamplew, Richard Dazeley, and Cameron Foale.
\newblock Softmax exploration strategies for multiobjective reinforcement
  learning.
\newblock \emph{Neurocomputing}, 263:\penalty0 74--86, 2017.

\bibitem[Vieillard et~al.(2020)Vieillard, Kozuno, Scherrer, Pietquin, Munos,
  and Geist]{vieillard2020leverage}
Nino Vieillard, Tadashi Kozuno, Bruno Scherrer, Olivier Pietquin, R{\'e}mi
  Munos, and Matthieu Geist.
\newblock Leverage the average: an analysis of regularization in rl.
\newblock \emph{arXiv preprint arXiv:2003.14089}, 2020.

\bibitem[Wang et~al.(2019)Wang, Cai, Yang, and Wang]{wang2019neural}
Lingxiao Wang, Qi~Cai, Zhuoran Yang, and Zhaoran Wang.
\newblock Neural policy gradient methods: Global optimality and rates of
  convergence.
\newblock \emph{arXiv preprint arXiv:1909.01150}, 2019.

\bibitem[Williams(1992)]{williams1992simple}
Ronald~J Williams.
\newblock Simple statistical gradient-following algorithms for connectionist
  reinforcement learning.
\newblock \emph{Machine learning}, 8\penalty0 (3-4):\penalty0 229--256, 1992.

\bibitem[Yang et~al.(2019)Yang, Li, and Zhang]{yang2019regularized}
Wenhao Yang, Xiang Li, and Zhihua Zhang.
\newblock A regularized approach to sparse optimal policy in reinforcement
  learning.
\newblock In \emph{Advances in Neural Information Processing Systems}, pages
  5938--5948, 2019.

\end{thebibliography}
\bibliographystyle{plainnat}
\appendix
\begin{appendix}
	\onecolumn
	\begin{center}
		{\huge \textbf{Appendix}}
	\end{center}

\section{Proof of Section~\ref{sec:reduction}}

\subsection{Proof of Theorem~\ref{thm:reduction}}

\begin{lem}[Boundness of $\Phi^{\pi}$]
	\label{lem:bound_phi}
	For any regularization function $\Omega$ satisfying Assumption~\ref{ass:reg}, it follows that for any policy $\pi$,
	\[  \| \Phi^{\pi}\|_{\infty} \le \frac{\Cphi}{1-\gamma}
	\  \text{and} \
	|\Phi^{\pi}(\mu) | \le \frac{\Cphi}{1-\gamma}.
	   \]
\end{lem}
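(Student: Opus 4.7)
The plan is straightforward: this is essentially a geometric series bound on a non-negative quantity, so the proof reduces to two observations together with the definition of $\Phi^\pi$.

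First, I would invoke Assumption~\ref{ass:reg}, which states that $\Omega: \Delta(\mathcal{A}) \to \mathbb{R}^+ \cup \{0\}$ is both non-negative and uniformly upper bounded by $C_\Phi$. Consequently, for every policy $\pi$ and every state $s_t$ appearing along a trajectory, $0 \le \Omega(\pi, s_t) \le C_\Phi$. Substituting this into the definition of $\Phi^\pi(s)$ in~\eqref{eq:Phi} gives
\begin{equation*}
0 \;\le\; \Phi^\pi(s) \;=\; \mathbb{E}_{\tau \sim \pi} \left[ \sum_{t=0}^{\infty} \gamma^t \Omega(\pi, s_t) \,\Big|\, s_0 = s \right] \;\le\; \sum_{t=0}^{\infty} \gamma^t \, C_\Phi \;=\; \frac{C_\Phi}{1-\gamma},
\end{equation*}
where the last step uses $\gamma \in [0,1)$. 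Taking the supremum over $s$ yields the first claim $\|\Phi^\pi\|_\infty \le \tfrac{C_\Phi}{1-\gamma}$.

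For the second claim, I would use $\Phi^\pi(\mu) = \mathbb{E}_{s_0 \sim \mu}[\Phi^\pi(s_0)]$, which is an average of the state-wise quantities just bounded. Since each $\Phi^\pi(s) \in [0, C_\Phi/(1-\gamma)]$, its expectation under any distribution $\mu$ lies in the same interval, so $|\Phi^\pi(\mu)| \le \tfrac{C_\Phi}{1-\gamma}$.

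There is no real obstacle here: the argument does not need strong convexity of $\Omega$, only the non-negativity and uniform upper bound guaranteed by Assumption~\ref{ass:reg}, combined with the geometric sum. The only thing to be careful about is distinguishing the $\ell^\infty$ norm over states from the scalar $|\Phi^\pi(\mu)|$; both, however, are controlled by the same elementary inequality.
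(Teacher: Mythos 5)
Your proof is correct and is essentially the same argument as the paper's: the paper merely rewrites the discounted sum as $\frac{1}{1-\gamma}\sum_{s'} d_s^{\pi}(s')\,\Omega(\pi,s')$ and then bounds it by $\Cphi/(1-\gamma)$, which is just your geometric-series bound expressed through the visitation distribution. Your handling of $|\Phi^{\pi}(\mu)|$ as an average of the state-wise bounds matches the paper's (implicit) step as well, so nothing is missing.
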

\begin{proof}
	It is easy to verify that for any given $\pi$ and for any $s \in \SM$,
	\[  \Phi^{\pi}(s) 
	=  \EB_{\tau \sim \pi} \left[  \sum_{t=0}^{\infty}\gamma^t
	\Omega(\pi, s_t)|s_0 = s \right]
	=  \frac{1}{1-\gamma} \sum_{s' \in \SM} d_s^{\pi}(s') \Phi(\pi, s')   \]
	which directly implies that $ |\Phi^{\pi}(s) | \le \frac{\Cphi}{1-\gamma}$ uniformly in $s$.
\end{proof}

With Lemma~\ref{lem:bound_phi}, we can prove that Algorithm~\ref{alg:ar} with any $\alg$ satisfying $\propi$ will find the optimal policy in an exponentially fast speed.

\begin{proof}[Proof of Theorem~\ref{thm:reduction}]
	For any given $s \in \SM$, define $D_t(s) = V_{\lambda_t}^*(s) - V_{\lambda_t}^{\hat{\pi}_t}(s)$ to be the initial value difference between $\hat{\pi}_t$ and the optimal policy of $\mathcal{M}(\lambda_{t})$ before we call $\alg$ in iteration $t$.
	Let $D_t = \Emu D_t(s_0) = V_{\lambda_t}^*(\mu) - V_{\lambda_t}^{\hat{\pi}_t}(\mu)$ for simplicity (which is always non-negative).
	Then,
	\begin{align*}
	0 &\le  D_t = 
	V_{\lambda_t}^*(\mu) - V_{\lambda_t}^{\hat{\pi}_{t}}(\mu)
	\overset{(a)}{=}  V_{\lambda_t}^{\pi_{\lambda_t}^*}(\mu) - V_{\lambda_t}^{\hat{\pi}_{t}}(\mu) \\
	&\overset{(b)}{=} V_{\lambda_{t-1}}^{\pi_{\lambda_{t}}^*}(\mu) - V_{\lambda_{t-1}}^{\hat{\pi}_{t}}(\mu) + (\lambda_{t-1} - \lambda_t) \Phi^{\pi_{\lambda}^*}(\mu) -  (\lambda_{t-1} - \lambda_t) \Phi^{\hat{\pi}_{t}}(\mu) \\
	&\overset{(c)}{\le}  V_{\lambda_{t-1}}^{\pi_{\lambda_{t}}^*}(\mu) - V_{\lambda_{t-1}}^{\hat{\pi}_{t}}(\mu) + 2(\lambda_{t-1} - \lambda_t) \frac{\Cphi}{1-\gamma} \\
	&\overset{(d)}{\le}  V_{\lambda_{t-1}}^{*}(\mu) - V_{\lambda_{t-1}}^{\hat{\pi}_{t}}(\mu) + 2(\lambda_{t-1} - \lambda_t) \frac{\Cphi}{1-\gamma}\\
	&\overset{(e)}{=}   V_{\lambda_{t-1}}^{*}(\mu) - V_{\lambda_{t-1}}^{\hat{\pi}_{t}}(\mu)  +  \frac{\lambda_0}{2^{t-1}} \frac{\Cphi}{1-\gamma}\\
	&\overset{(f)}{\le}  \frac{1}{4} D_{t-1} + \hat{\varepsilon} +  \frac{\lambda_0}{2^{t-1}} \frac{\Cphi}{1-\gamma}
	\end{align*}
	where (a) follows from the notation of the optimal policy $V_{\lambda_t}^* = V_{\lambda_{t}}^{\pi_{\lambda_{t}}^*}$; (b) uses the value function decomposition~\eqref{eq:decom}; (c) uses the boundness of $\Omega$ (which is $\|\Phi^{\pi}\|_{\infty} \le \frac{\Cphi}{1-\gamma}$); (d) uses the the optimality of $\pi_{\lambda_{t-1}}^*$, i.e., $ V_{\lambda_{t-1}}^{\pi_{\lambda_{t}}^*}(\mu) \le V_{\lambda_{t-1}}^{*}(\mu)  := V_{\lambda_{t-1}}^{\pi_{\lambda_{t-1}}^*}(\mu)$;
	(e) uses the fact that $\lambda_{t} = \frac{\lambda_0}{2^t}$; and 
	(f) uses the $\propi$ property of $\alg$, which ensures that $  V_{\lambda_{t-1}}^{*}(\mu) - V_{\lambda_{t-1}}^{\hat{\pi}_{t}}(\mu)
	\le  \frac{1}{4} ( V_{\lambda_{t-1}}^{*}(\mu) - V_{\lambda_{t-1}}^{\hat{\pi}_{t-1}}(\mu))  + \hat{\varepsilon} 
	= \frac{1}{4} D_{t-1} + \hat{\varepsilon}$.
	Recursively applying the above inequality, we have
	\begin{equation}
	\label{eq:D_T}
	D_T \le \frac{D_0}{4^T} +  \frac{4}{3}\hat{\varepsilon} +   \frac{\lambda_0\Cphi}{1-\gamma}  \sum_{i=0}^{T-1} \frac{1}{2^{T-1-i}4^i}  
	\le \frac{D_0}{4^T} +  \frac{4}{3} \hat{\varepsilon} +   \frac{4\lambda_0\Cphi}{1-\gamma}  \frac{1}{2^T} .
	\end{equation}
	
	In sum, we obtain a policy $\hat{\pi}_T$ satisfying
	\begin{align*}
	0 
	&\le V^{*}(\mu) - V^{\hat{\pi}_{T}}(\mu) 
	\overset{(a)}{=} V^{\pi^*}(\mu) - V^{\hat{\pi}_{T}}(\mu)\\
	&\overset{(b)}{=} V_{\lambda_{T}}^{\pi^*}(\mu) - V_{\lambda_{T}}^{\hat{\pi}_{T}}(\mu) + \lambda_{T}\Phi^{\hat{\pi}_T}(\mu) -\lambda_{T} \Phi^{\pi^*}(s)\\
	&
	\overset{(c)}{\le} V_{\lambda_{T}}^{\pi^*}(\mu) - V_{\lambda_{T}}^{\hat{\pi}_{T}}(\mu) + 2\lambda_{T} \frac{\Cphi}{1-\gamma} \\
	&	\overset{(d)}{\le} V_{\lambda_{T}}^{*}(\mu) - V_{\lambda_{T}}^{\hat{\pi}_{T}}(\mu) + 2\lambda_{T} \frac{\Cphi}{1-\gamma} 
	\end{align*}
	where (a) follows from the notation of the optimal policy $V^{*} = V^{\pi^*}$; (b) uses the value decomposition of $V_{\lambda_T}^{*}$ and $V_{\lambda_T}^{\hat{\pi}_{T}}$ (defined in~\eqref{eq:decom}); (c) uses the boundness of $\Phi^{\pi}$ shown in Lemma~\ref{lem:bound_phi}; and (d) uses the the optimality of $\pi_{\lambda_{T}}^*$, i.e., $ V_{\lambda_{T}}^{\pi^*}(\mu) \le V_{\lambda_{T}}^{*}(\mu)  := V_{\lambda_{T}}^{\pi_{\lambda_T}^*}(\mu)$.
	
	Since the bound holds uniformly for all $s \in \SM$, we have
	\begin{align*}
		V^{*}(\mu) - V^{\hat{\pi}_{T}}(\mu) 
	&\le V_{\lambda_{T}}^{*}(\mu)- V_{\lambda_{T}}^{\hat{\pi}_{T}}(\mu)
	+ 2\lambda_{T} \frac{\Cphi}{1-\gamma} \\
	&\overset{(a)}{=} D_T + \frac{2\lambda_0\Cphi}{1-\gamma}  \frac{1}{2^T} \overset{(b)}{\le} \frac{D_0}{4^T} + \frac{4}{3} \hat{\varepsilon} +  \frac{6\lambda_0\Cphi}{1-\gamma}  \frac{1}{2^T}.
	\end{align*}

	Above, (a) uses the definition of $D_T$ and $\lambda_{T}$ and (b) applies~\eqref{eq:D_T}.
\end{proof}

\subsubsection{Proof of Theorem~\ref{thm:reduction2}}
\begin{proof}
    Similar with Theorem~\ref{thm:reduction}, we denote $D_t(s)=V^*_{\lambda_t}(s)-V^{\hpi_t}_{\lambda_t}(s)$ to be the initial value difference and $D_t=V^*_{\lambda_t}(\mu)-V^{\hpi_t}_{\lambda_t}(\mu)$. By proof of Theorem~\ref{thm:reduction},
    \begin{align}
        0\le D_t&\le V^*_{\lambda_{t-1}}(\mu)-V^{\hpi_t}_{\lambda_{t-1}}(\mu)+\frac{\lambda_0}{2^{t-1}}\frac{C_\phi}{1-\gamma}\notag\\
        &\overset{(a)}{\le}\frac{4\lambda_0}{2^t}\frac{C_\phi}{1-\gamma}
        \label{eq:dt}+\hat{\varepsilon}
    \end{align}
    where (a) follows from $\propii$.
    
    Thus, for the output $\hpi_T$, we have:
    \begin{align}
        0\le V^*(\mu)-V^{\hpi_T}(\mu)&\le D_T+\frac{2\lambda_0}{2^T}\frac{C_\phi}{1-\gamma}\notag\\
        &\le \frac{6\lambda_0}{2^T}\frac{C_\phi}{1-\gamma}+\hat{\varepsilon}
    \end{align}
\end{proof}

\subsection{Proof related to RMPI}
\subsubsection{Proof of Theorem~\ref{thm:RMPI}}
\begin{proof}
	Here we will make use of Corollary 1 in~\cite{geist2019theory} or Theorem 7 in~\cite{scherrer2015approximate} to prove Theorem~\ref{thm:RMPI}. 
	Let $\nu, \mu$ be two state distributions with $\nu$ used for initial state distribution and $\mu$ for performance measure.
	Let $p, q$ and $q'$ such that $\frac{1}{q}+\frac{1}{q^{\prime}}=1$.
	We define a norm by $\|V\|_{p, \mu} := \left[\EB_{s_0 \sim \mu} |V(s_0)|^p\right]^{\frac{1}{p}}$ and the concentrability coefficients as:
	\[
	C_{q}^{i}=\frac{1-\gamma}{\gamma^{i}} \sum_{j=i}^{\infty} \gamma^{j} \max _{\pi_{1}, \ldots, \pi_{j}}\left\|\frac{\nu P_{\pi_{1}} P_{\pi_{2}} \ldots P_{\pi_{j}}}{\mu}\right\|_{q, \mu}.
	\]
	Then, by Corollary 1 in~\cite{geist2019theory}, we have
	\begin{equation}
	\label{eq:e1}
		\left\| V_{\lambda}^* -V_\lambda^{\alpha_k}  \right\|_{p, \rho} \leq 2 \sum_{i=1}^{k-1} \frac{\gamma^{i}}{1-\gamma}\left(C_{q}^{i}\right)^{\frac{1}{p}}\left\|\epsilon_{k-i}\right\|_{p q^{\prime}, \mu}+\sum_{i=0}^{k-1} \frac{\gamma^{i}}{1-\gamma}\left(C_{q}^{i}\right)^{\frac{1}{p}}\left\|\epsilon_{k-i}^{\prime}\right\|_{p q^{\prime}, \mu}+g(k)
	\end{equation}
	with 
	\[
	g(k)=\frac{2 \gamma^{k}}{1-\gamma}\left(C_{q}^{k}\right)^{\frac{1}{p}} \min \left(\left\| V_{\lambda}^* - V_\lambda^{\alpha_{0}}  \right\|_{p q^{\prime}, \mu},\left\|V_\lambda^{\alpha_{0}} -  \mathcal{T}_{\lambda}^{\alpha_1} V_\lambda^{\alpha_{0}}  \right\|_{p q^{\prime}, \mu}\right).
	\]
	Here we set $p=q'=1$, which implies $q = \infty$, and $\epsilon_k = \epsilon_k' = \epsilon_0 1_S$ for $k \ge 0$.
	Under the choice of parameters, we can simplify $C_q^i$ by noting that 
	\begin{equation}
	\label{eq:C_q^i}
	\left\|\frac{\rho P_{\pi_{1}} P_{\pi_{2}} \ldots P_{\pi_{j}}}{\mu}\right\|_{\infty, \mu}  \le
	\left\|\frac{\rho P_{\pi_{1}} P_{\pi_{2}} \ldots P_{\pi_{j}}}{\mu}\right\|_{\infty} \le
	\frac{\max_{s \in \SM} \nu(s)}{\min_{s \in \SM} \mu(s)}, \
	\text{for any} \ j \ge 1 \ \text{and} \ \pi_1, \cdots, \pi_j.
	\end{equation}
	Therefore, $C_{\infty}^i \le \frac{\max_{s \in \SM} \nu(s)}{\min_{s \in \SM} \mu(s)}$ for all $i \ge 0$.

	If follows that
	\begin{align*}
	V_{\lambda}^*(\mu) -V_\lambda^{\alpha_k} (\mu)
	&=  | \EB_{s_0 \sim \mu } \left[V_{\lambda}^*(s_0) -V_\lambda^{\alpha_k} (s_0)  \right] |\\
	&\overset{(a)}{=}  \EB_{s_0 \sim \mu }| V_{\lambda}^*(s_0) -V_\lambda^{\alpha_k} (s_0)  | \\
	&= 	\left\| V_{\lambda}^* -V_\lambda^{\alpha_k}  \right\|_{1, \mu}\\
	&\overset{(b)}{\le} g(k) + \frac{2(\gamma - \gamma^k) \epsilon_0 }{(1-\gamma)^2}\max_{0\le i \le k} C_{\infty}^i  + \frac{(1-\gamma^k)\epsilon_0}{(1-\gamma)^2}\max_{0\le i \le k} C_{\infty}^i \\
	&\overset{(c)}{\le} \frac{2C\gamma^k}{1-\gamma} \left( V_{\lambda}^*(\mu) -V_\lambda^{\alpha_0} (\mu) \right) + \frac{(1+2\gamma)C}{(1-\gamma)^2}\epsilon_0
	\end{align*}
	where (a) follows the fact that $V_{\lambda}^*(s) \ge V_{\lambda}^{\pi}(s)$ for any $\pi$ and $s \in \SM$ satisfying $\mu(s) > 0$; (b) follows from~\eqref{eq:e1}; (c) uses the same argument for (a) that is $	V_{\lambda}^*(\mu) -V_\lambda^{\alpha_k} (\mu) = \left\| V_{\lambda}^* -V_\lambda^{\alpha_k}  \right\|_{1, \mu}$ and the notation $C = \max_{i \ge 0} C_{\infty}^i $.
	Then when the $k$ is large enough such that $ \frac{2C\gamma^k}{1-\gamma} \le \frac{1}{4}$, which implies $k \ge  \ln\frac{1-\gamma}{8C}/ \ln\frac{1}{\gamma}$, RMPI satisfy Prop-I$(\frac{(1+2\gamma)C}{(1-\gamma)^2}\epsilon_0)$ in times $\ln\frac{1-\gamma}{8C}/ \ln\frac{1}{\gamma}$.
\end{proof}
\subsubsection{Proof of Theorem~\ref{thm:rmpi-r}}
In this part, we denote $V_{T+1}=\TM_{\lambda_T}^{\pi_{T}}V_{T}+\varepsilon_0 1_{S}$.
\begin{lem}
	\label{lem:rmpi-1}
	$\|V_{T+1}-V_{\lambda_T}^{\pi_T}\|_{\infty}\le\frac{\gamma}{1-\gamma}\|V_T-V_{T+1}\|_{\infty}+\frac{1}{1-\gamma}\varepsilon_0$
\end{lem}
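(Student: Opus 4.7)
The plan is to exploit the fact that the regularized Bellman operator $\TM_{\lambda_T}^{\pi_T}$ is a $\gamma$-contraction in sup-norm (Proposition~2 of~\cite{geist2019theory}) whose unique fixed point is the true value $V_{\lambda_T}^{\pi_T}$, and then combine this contraction with the triangle inequality to convert a bound involving the unknown $\|V_T - V_{\lambda_T}^{\pi_T}\|_\infty$ into one involving the observable one-step change $\|V_T - V_{T+1}\|_\infty$.

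First I would write $V_{T+1} - V_{\lambda_T}^{\pi_T} = \TM_{\lambda_T}^{\pi_T} V_T - \TM_{\lambda_T}^{\pi_T} V_{\lambda_T}^{\pi_T} + \varepsilon_0 1_S$, using the defining update $V_{T+1} = \TM_{\lambda_T}^{\pi_T} V_T + \varepsilon_0 1_S$ together with the fixed-point identity $\TM_{\lambda_T}^{\pi_T} V_{\lambda_T}^{\pi_T} = V_{\lambda_T}^{\pi_T}$. Taking sup-norms and applying the $\gamma$-contraction property yields
\[
\|V_{T+1} - V_{\lambda_T}^{\pi_T}\|_\infty \le \gamma \|V_T - V_{\lambda_T}^{\pi_T}\|_\infty + \varepsilon_0.
\]

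Next I would insert the triangle inequality $\|V_T - V_{\lambda_T}^{\pi_T}\|_\infty \le \|V_T - V_{T+1}\|_\infty + \|V_{T+1} - V_{\lambda_T}^{\pi_T}\|_\infty$ into the right-hand side. After collecting the $\|V_{T+1} - V_{\lambda_T}^{\pi_T}\|_\infty$ term on the left, one obtains $(1-\gamma)\|V_{T+1} - V_{\lambda_T}^{\pi_T}\|_\infty \le \gamma \|V_T - V_{T+1}\|_\infty + \varepsilon_0$, and dividing by $1-\gamma$ gives the claim.

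There is no real obstacle here: the only ingredient beyond algebra is the $\gamma$-contraction of $\TM_{\lambda_T}^{\pi_T}$, which is already established in the cited literature and used implicitly throughout Section~\ref{sec:egrmpi}. The lemma is essentially the standard ``a posteriori'' error bound for approximate value iteration, adapted to the single-step perturbed backup used inside Algorithm~\ref{alg:RMPI-r}.
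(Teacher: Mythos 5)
Your proof is correct and rests on exactly the same two ingredients as the paper's argument: the update/fixed-point identity $V_{T+1}-V_{\lambda_T}^{\pi_T}=\TM_{\lambda_T}^{\pi_T}V_T-\TM_{\lambda_T}^{\pi_T}V_{\lambda_T}^{\pi_T}+\varepsilon_0 1_S$ and the $\gamma$-contraction of $\TM_{\lambda_T}^{\pi_T}$ in the sup-norm. The only (cosmetic) difference is the finishing step: the paper bounds $\|V_T-V_{\lambda_T}^{\pi_T}\|_\infty$ by telescoping $(\TM_{\lambda_T}^{\pi_T})^nV_T$ into a geometric series, whereas you use the triangle inequality and move the $\|V_{T+1}-V_{\lambda_T}^{\pi_T}\|_\infty$ term to the left before dividing by $1-\gamma$; both yield the identical constants.
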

\begin{proof}
	By definition of Bellman equation, we have:
	\begin{align}
		V_{T+1}-V_{\lambda_T}^{\pi_T}&=\TM_{\lambda_{T}}^{\pi_T}V_T-V_{\lambda_T}^{\pi_T}+\varepsilon_0 1_S\\
		&=\TM_{\lambda_{T}}^{\pi_T}V_T-\TM_{\lambda_T}^{\pi_T}(\TM_{\lambda_T}^{\pi_T})^{\infty}V_T+\varepsilon_0 1_S
	\end{align}
	Taking infinite norm, we have:
	\begin{align}
		\|V_{T+1}-V_{\lambda_T}^{\pi_T}\|_{\infty}&\le\gamma\|V_T-(\TM_{\lambda_T}^{\pi_T})^{\infty}V_T\|_{\infty}+\varepsilon_0\\
		&\le\gamma\sum_{n=0}^{\infty}\|(\TM_{\lambda_T}^{\pi_T})^{n}V_T - (\TM_{\lambda_T}^{\pi_T})^{n+1}V_T\|_{\infty}+\varepsilon_0\\
		&\le\gamma\sum_{n=0}^{\infty}\gamma^{n}\|V_T-\TM_{\lambda_T}^{\pi_T}V_T\|_{\infty}+\varepsilon_0\\
		&\le\frac{\gamma}{1-\gamma}\|V_T-V_{T+1}\|_{\infty}+\frac{1}{1-\gamma}\varepsilon_0
	\end{align}
\end{proof}

\begin{lem}
	\label{lem:rmpi-2}
	For any $t=0,1,...,T$, $\|V^*-V_{t+1}\|_{\infty}\le\lambda_t C_\phi +\gamma \|V^{*}-V_t\|_{\infty}+\varepsilon_0$
\end{lem}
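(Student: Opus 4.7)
The plan is to sandwich $V^{*}-V_{t+1}$ componentwise from both sides using the two elementary comparisons between the regularized and unregularized Bellman operators implied by Assumption~\ref{ass:reg} ($0\le\Omega\le C_\phi$), together with the standard $\gamma$-contraction of the unregularized Bellman optimality operator $\mathcal{T}^{*}$.

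For the upper bound on $V^{*}-V_{t+1}$, I would exploit the approximate-greedy policy update from the previous iteration of Algorithm~\ref{alg:RMPI-r}: the policy $\pi_t$ satisfies $\max_{\pi}\mathcal{T}_{\lambda_t}^{\pi}V_t\le\mathcal{T}_{\lambda_t}^{\pi_t}V_t+\varepsilon_0 1_S$, so writing $\mathcal{T}_{\lambda_t}^{*}V:=\max_{\pi}\mathcal{T}_{\lambda_t}^{\pi}V$ for the regularized Bellman optimality operator, one gets $V_{t+1}=\mathcal{T}_{\lambda_t}^{\pi_t}V_t+\varepsilon_0 1_S\ge\mathcal{T}_{\lambda_t}^{*}V_t$. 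Next, evaluating the max defining $\mathcal{T}_{\lambda_t}^{*}V_t$ at the (Dirac) unregularized greedy action and using $\Omega\le C_\phi$ yields $\mathcal{T}_{\lambda_t}^{*}V_t\ge\mathcal{T}^{*}V_t-\lambda_t C_\phi 1_S$. Combining these with $V^{*}=\mathcal{T}^{*}V^{*}$ and the $\gamma$-contraction of $\mathcal{T}^{*}$ gives the componentwise bound $V^{*}-V_{t+1}\le\bigl(\gamma\|V^{*}-V_t\|_\infty+\lambda_t C_\phi\bigr)1_S$.

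For the matching lower bound on $V^{*}-V_{t+1}$, i.e., an upper bound on $V_{t+1}-V^{*}$, I would use that $\Omega\ge 0$ implies $\mathcal{T}_{\lambda_t}^{\pi_t}V_t\le\mathcal{T}^{\pi_t}V_t\le\mathcal{T}^{*}V_t$, so $V_{t+1}\le\mathcal{T}^{*}V_t+\varepsilon_0 1_S$. Invoking $\gamma$-contraction of $\mathcal{T}^{*}$ once more produces $V_{t+1}-V^{*}\le\bigl(\gamma\|V^{*}-V_t\|_\infty+\varepsilon_0\bigr)1_S$ componentwise.

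Combining the two one-sided bounds gives $\|V^{*}-V_{t+1}\|_\infty\le\gamma\|V^{*}-V_t\|_\infty+\max\{\lambda_t C_\phi,\varepsilon_0\}$, which is dominated by the advertised $\gamma\|V^{*}-V_t\|_\infty+\lambda_t C_\phi+\varepsilon_0$. The only technical care needed is to keep the inequality directions straight when interchanging $\mathcal{T}_{\lambda_t}^{*}$ and $\mathcal{T}^{*}$: the slack $\lambda_t C_\phi$ comes in on the ``regularized-is-smaller'' side (so it upper-bounds $V^{*}-V_{t+1}$), while the $\varepsilon_0$ slack sits on the opposite side of the sandwich, so the two slacks appear additively in the $\ell_\infty$ bound rather than doubling one of them. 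No other obstacle arises; once the two Bellman comparisons are in place, the conclusion is a routine contraction argument.
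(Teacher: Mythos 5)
Your proof is correct and takes essentially the same route as the paper's: sandwich $V^*-V_{t+1}$ componentwise using the approximate greediness of $\pi_t$ together with $0\le\Omega\le C_\Phi$, then conclude by the $\gamma$-contraction of the Bellman optimality operator. The only cosmetic difference is that you compare against the unregularized optimality operator separately on each side of the sandwich (the paper bounds $\|\mathcal{T}V^*-\mathcal{T}_{\lambda_t}V_t\|_\infty$ in one stroke), which in fact yields the slightly sharper slack $\max\{\lambda_t C_\Phi,\varepsilon_0\}$ in place of the sum, so the stated bound follows a fortiori.
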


\begin{proof}
	By definition, we have:
	\begin{align}
		\TM V^* -\TM_{\lambda_t}V_t - \varepsilon_0 1_S\le V^*-V_{t+1}=\TM V^* - \TM_{\lambda_t}^{\pi_t}V_t -\varepsilon_0 1_S\le \TM V^* - \TM_{\lambda_t}V_t
	\end{align}
	Taking infinity norm both sides, we have:
	\begin{align}
		\|V^*-V_{t+1}\|_{\infty}&\le\|\TM V^*-\TM_{\lambda_t}V_t\|_{\infty}+\varepsilon_0\\
		&\le \max_{\pi}\lambda_t\|\langle\phi(\pi),\pi\rangle\|_{\infty}+\gamma\|P^{\pi}(V^*-V_t)\|_{\infty}+\varepsilon_0\\
		&\le\lambda_t C_\phi +\gamma \|V^{*}-V_t\|_{\infty}+\varepsilon_0
	\end{align}
\end{proof}

\begin{proof}[Proof of Theorem~\ref{thm:rmpi-r}]
	Denote $V_{T+1}=\TM_{\lambda_T}^{\pi_{T}}V_{T}+\varepsilon_0 1_{S}$, we have:
	\begin{align}
		0\le V^*-V^{\pi_T}&= V^*- V^{\pi_T}_{\lambda_T}-\lambda_T\Phi(\pi_T)\\
		&\le V^*- V^{\pi_T}_{\lambda_T}+\frac{\lambda_T}{1-\gamma}C_\phi\\
		&= V^* - V_{T+1} + V_{T+1} - V_{\lambda_T}^{\pi_T}+\frac{\lambda_T}{1-\gamma}C_\phi
	\end{align}
	By Lemma~\ref{lem:rmpi-2} and assuming $\frac{1}{2}\le\gamma^2$, we obtain the following inequality by recursion and $\lambda_{t+1}=\frac{1}{2}\lambda_t$:
	\begin{align}
		\|V^*-V_{T}\|_{\infty}\le\frac{\gamma^{T-1}}{1-\gamma}\lambda_0 C_\phi + \frac{1}{1-\gamma}\varepsilon_0+\gamma^{T}\|V^*-V_0\|_{\infty}
	\end{align}
	By Lemma~\ref{lem:rmpi-1}, we obtain the final bound of $\|V^*-V^{\pi_T}\|_{\infty}$:
	\begin{align}
		\|V^*-V^{\pi_T}\|_{\infty}&\le\|V^*-V_{T+1}\|_{\infty}+\|V_{T+1}-V_{\lambda_T}^{\pi_T}\|_{\infty}+\frac{\lambda_T}{1-\gamma}C_{\phi}\\
		&\le\|V^*-V_{T+1}\|_{\infty}+\frac{\gamma}{1-\gamma}\|V^*-V_{T+1}\|_{\infty}+\frac{\gamma}{1-\gamma}\|V^*-V_{T}\|_{\infty}+\frac{\varepsilon_0}{1-\gamma}+\frac{\lambda_T}{1-\gamma}C_{\phi}\\
		&=\frac{1}{1-\gamma}\|V^*-V_{T+1}\|_{\infty}+\frac{\gamma}{1-\gamma}\|V^*-V_{T}\|_{\infty}+\frac{\varepsilon_0}{1-\gamma}+\frac{\lambda_T}{1-\gamma}C_{\phi}\\
		&\le\frac{2\gamma}{1-\gamma}\|V^*-V_{T}\|_{\infty}+\frac{2\varepsilon_0}{1-\gamma}+\frac{2\lambda_T}{1-\gamma}C_{\phi}\\
		&\le\frac{2\gamma^T}{(1-\gamma)^2}\lambda_0 C_\phi+\frac{2}{(1-\gamma)^2}\varepsilon_0+\frac{2\gamma^{T+1}}{1-\gamma}\|V^*-V_0\|_{\infty}+\frac{2\lambda_T}{1-\gamma}C_\phi\\
		&\le\frac{4\gamma^T}{(1-\gamma)^2}\lambda_0 C_\phi+\frac{2}{(1-\gamma)^2}\varepsilon_0+\frac{2\gamma^{T+1}}{1-\gamma}\|V^*-V_0\|_{\infty}
	\end{align}
	Let $\varepsilon_0\le\frac{(1-\gamma)^2}{6}$ and $T\ge\max\{\frac{\log\frac{\varepsilon(1-\gamma)^2}{12\lambda_0 C_\phi}}{\log\gamma}, \frac{\log\frac{\varepsilon(1-\gamma)}{6\|V^*-V_0\|_{\infty}}}{\log\gamma}-1\}$, we have $\|V^*-V^{\pi_T}\|_{\infty}\le\varepsilon$
\end{proof}

\subsection{Proof related to Projected Gradient Ascent}
\subsection{Derivative of $J(\pi,\lambda)$ w.r.t. $\pi$}
\begin{lem}
    For any $(s,a)\in\SM\times\AM$,
    \begin{align}
        \frac{\partial J(\pi,\lambda)}{\pi(a|s)}=\frac{1}{1-\gamma}d_{\pi,\rho}(s)\left(Q_\lambda^{\pi}(s,a)-\lambda\nabla_{s,a}\Omega(\pi(\cdot|s))\right)
    \end{align}
    where $\rho$ is the initial distribution.
\end{lem}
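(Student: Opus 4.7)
The plan is to carry out the standard policy gradient theorem derivation, adapted to include the regularization term $-\lambda\Omega$. First I would invoke the regularized Bellman identity
$V_\lambda^\pi(s_0) = \sum_{a'} \pi(a'|s_0)\, Q_\lambda^\pi(s_0, a') - \lambda\, \Omega(\pi(\cdot|s_0)),$
where $Q_\lambda^\pi(s,a) := r(s,a) + \gamma\sum_{s'} P(s'|s,a)\, V_\lambda^\pi(s')$, and differentiate both sides with respect to the tabular coordinate $\pi(a|s)$. The resulting expression splits into three kinds of terms: a boundary term $\1[s_0=s]\, Q_\lambda^\pi(s,a)$ coming from the explicit $\pi(a'|s_0)$ factor (nonzero only when $s_0=s$ and $a'=a$), a regularization contribution $-\lambda\,\1[s_0=s]\,\nabla_{s,a}\Omega(\pi(\cdot|s))$, and a recursive contribution $\gamma \sum_{s'} P^\pi(s'|s_0)\, \partial V_\lambda^\pi(s')/\partial \pi(a|s)$, which arises because $\pi$ enters $Q_\lambda^\pi$ only through the downstream value $V_\lambda^\pi$ (the reward $r(s_0,a')$ being independent of $\pi$).

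Second, setting $G_\lambda(s,a) := Q_\lambda^\pi(s,a) - \lambda \nabla_{s,a}\Omega(\pi(\cdot|s))$, the preceding identity can be written compactly as
$\partial V_\lambda^\pi(s_0)/\partial \pi(a|s) = \1[s_0=s]\, G_\lambda(s,a) + \gamma \sum_{s'} P^\pi(s'|s_0)\, \partial V_\lambda^\pi(s')/\partial \pi(a|s).$
I would then iterate this identity $T$ times, accumulating the coefficient $\sum_{t=0}^T \gamma^t P^\pi(s_t=s|s_0)\, G_\lambda(s,a)$ in front of the indicator term and leaving a remainder $\gamma^{T+1}\sum_{s_{T+1}} P^\pi(s_{T+1}|s_0)\,\partial V_\lambda^\pi(s_{T+1})/\partial\pi(a|s)$ that tends to zero as $T\to\infty$, since $V_\lambda^\pi$ and hence its partial derivative are uniformly bounded under Assumption~\ref{ass:reg} (because $r$ is bounded and $\Omega\le\Cphi$). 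Substituting the definition $d_{\pi,s_0}(s) = (1-\gamma)\sum_{t\ge 0}\gamma^t P^\pi(s_t=s|s_0)$ yields $\partial V_\lambda^\pi(s_0)/\partial \pi(a|s) = (1-\gamma)^{-1}\, d_{\pi,s_0}(s)\, G_\lambda(s,a).$

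Finally, since $J(\pi,\lambda) = \EB_{s_0\sim\rho} V_\lambda^\pi(s_0)$, averaging both sides over $s_0\sim\rho$ and using the definition $d_{\pi,\rho}(s) = \EB_{s_0\sim\rho} d_{\pi,s_0}(s)$ delivers the claim. The only nontrivial step is justifying the convergence of the $T\to\infty$ unrolling, but this is immediate from the geometric factor $\gamma^{T+1}$ combined with the uniform boundedness established above, so no new ideas beyond the classical policy gradient theorem are required.
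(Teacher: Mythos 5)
Your proposal is correct and follows essentially the same route as the paper: differentiate the regularized Bellman decomposition $V_\lambda^\pi(s_0)=\sum_{a'}\pi(a'|s_0)Q_\lambda^\pi(s_0,a')-\lambda\Omega(\pi(\cdot|s_0))$, obtain the boundary plus recursive terms, and unroll to accumulate the discounted visitation distribution before averaging over $s_0\sim\rho$. Your added justification of the $T\to\infty$ unrolling via boundedness of $V_\lambda^\pi$ is a minor extra detail the paper leaves implicit ("by recursively expanding"), but it is not a different argument.
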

\begin{proof}
    For given $s,a$, we have:
        \begin{align}
        \frac{\partial V^\pi_\lambda(s)}{\partial \pi(a|s)}&=\frac{\partial}{\partial \pi(a|s)}\sum_{a'}\pi(a'|s)Q_\lambda^{\pi}(s, a') - \lambda\Omega(\pi(\cdot|s))\\
        &=Q_\lambda^{\pi}(s, a)-\lambda\nabla_{s,a}\Omega(\pi(\cdot|s))+\sum_{a'}\pi(a'|s)\frac{\partial Q^{\pi}_\lambda(s, a')}{\partial \pi(a|s)}\\
        &=Q_\lambda^{\pi}(s, a)-\lambda\nabla_{s,a}\Omega(\pi(\cdot|s))+\gamma\sum_{a', s'}\pi(a'|s)P(s'|s, a')\frac{\partial V^{\pi}_\lambda(s')}{\partial \pi(a|s)}\\
        \frac{\partial V^\pi_\lambda(\tilde{s})}{\partial \pi(a|s)}&=\gamma\sum_{a', s'}\pi(a'|\tilde{s})P(s'|\tilde{s}, a')\frac{\partial V^{\pi}_\lambda(s')}{\partial \pi(a|\tilde{s})}
    \end{align}
    By recursively expanding $\frac{\partial V^\pi_\lambda(s')}{\partial \pi(a|s)}$, we get:
    \begin{align}
        \frac{\partial J(\pi,\lambda)}{\pi(a|s)}=\frac{1}{1-\gamma}d_{\pi,\rho}(s)\left(Q_\lambda^{\pi}(s,a)-\lambda\nabla_{s,a}\Omega(\pi(\cdot|s))\right)
    \end{align}
\end{proof}

\subsubsection{Smoothness of $J(\pi,\lambda)$ w.r.t. $\pi$}
The proof technique in this section is the same with \cite{agarwal2019optimality}. 
The difference is that we additionally consider the smoothness of regularization rather than the value function alone.

Define $\pi_\alpha(\cdot|s) = \pi(\cdot|s)+\alpha u_{s}$, where $u_s$ is an arbitrary direction with unit $\ell_2$ norm. Our aim is to bound 
\begin{align}
    \max_{\|u_{s}\|_{2}=1,\forall s}\left| \left. \frac{d^2 V^{\pi{_\alpha}}_\lambda(s)}{d\alpha^2}\right|_{\alpha=0}\right|
\end{align}

To that end, we have:
\begin{align}
    \frac{d^2 V^{\pi{_\alpha}}_\lambda(s)}{d\alpha^2}&=\sum_{a}\frac{d^2\pi_{\alpha}(a|s)}{d\alpha^2}\left(Q_{\lambda}^{\pi_{\alpha}}(a|s)-\lambda\Omega(\pi_{\alpha}(\cdot|s))\right)+2\sum_a \frac{d\pi_{\alpha}(a|s)}{d\alpha}\frac{d\left(Q_{\lambda}^{\pi_{\alpha}}(a|s)-\lambda\Omega(\pi_{\alpha}(\cdot|s))\right)}{d\alpha} \nonumber \\
    &+\sum_a\pi_\alpha(a|s)\frac{d^2\left(Q_{\lambda}^{\pi_{\alpha}}(a|s)-\lambda\Omega(\pi_{\alpha}(\cdot|s))\right)}{d\alpha^2},
\end{align}
and we are going to bound the above three terms separately.
By our definition, the first term is equal to zero because $\frac{d^2\pi_{\alpha}}{d\alpha^2}=0$.

For the rest, note that $Q_{\lambda}^{\pi_{\alpha}}(a|s)-\lambda\Omega(\pi_{\alpha}(\cdot|s))=1_{(s,a)}^T M(\alpha)(R-\lambda\Omega_\alpha)$, where $M(\alpha)=(I-\gamma\P(\alpha))^{-1}$, $\Omega_\alpha(s,a)=\Omega(\pi_\alpha(\cdot|s))$ and $\P(\alpha)(s',a'|s,a)=\pi_\alpha  (a'|s')\P(s'|s,a)$.
What's more,
\begin{align}
    \frac{d\left(Q_{\lambda}^{\pi_{\alpha}}(a|s)-\lambda\Omega(\pi_{\alpha}(\cdot|s))\right)}{d\alpha}&=\gamma 1_{(s,a)}^{T} M(\alpha)\frac{d \P(\alpha)}{d\alpha}M(\alpha)(R-\lambda\Omega_{\alpha})-\lambda 1_{(s,a)}^{T} M(\alpha)\frac{d\Omega_{\alpha}}{d\alpha}\\
    \frac{d^2 \left(Q_{\lambda}^{\pi_{\alpha}}(a|s)-\lambda\phi(\pi_{\alpha}(\cdot|s))\right)}{d\alpha^2}&=2\gamma^2 1_{(s,a)}^{T}M(\alpha)\frac{d \P(\alpha)}{d\alpha}M(\alpha)\frac{d \P(\alpha)}{d\alpha}M(\alpha)(R-\lambda\Omega_{\alpha})\\&-2\lambda\gamma 1_{(s,a)}^{T}M(\alpha)\frac{d \P(\alpha)}{d\alpha}M(\alpha)\frac{d\Omega_{\alpha}}{d\alpha}-\lambda 1_{(s,a)}^{T}M(\alpha)\frac{d^2 \Omega_{\alpha}}{d\alpha^2}
\end{align}

By definition, we have:
\begin{align}
    \|M(\alpha)x\|_{\infty}&\le\frac{1}{1-\gamma}\|x\|_{\infty}\\
    \max_{\|u_s\|_{2}=1}\left\|\frac{d\P(\alpha)}{d\alpha}x\right\|_{\infty}&\le\max_{\|u_s\|_{2}=1}\max_{(s,a)}\left|\sum_{s',a'}u_{s'}(a')\P(s'|s,a)x_{s',a'}\right|\\&\le\max_{\|u_s\|_{2}=1}\max_{(s,a)}\sum_{s'}\P(s'|s,a)\sum_{a'}\left|u_{s'}(a')\right|\|x\|_{\infty}\\
    &\le\sqrt{|\AM|}\|x\|_{\infty},
\end{align}
where the third inequality used $\sum_{a'}|u_{s'}(a')|\le\sqrt{|\AM|\sum_{a'}u_{s'}^{2}(a')}=\sqrt{|\AM|}$.

By assumption, $r(s,a)\in[0,1]$, $\Omega_{\alpha}\in[0,\Cphi]$, $|\Omega_{\alpha}'|\in[0,\Cphi^{(1)}]$, $|\Omega_{\alpha}''|\in[0,\Cphi^{(2)}]$, then
\begin{align*}
    &\max_{\|u_s\|_{2}=1}\left|\frac{d(Q_{\lambda}^{\pi_{\alpha}}(a|s)-\lambda\Omega_{\alpha}(\pi(a|s)))}{d\alpha}\right|\\
    &\le\gamma\max_{\|u_s\|_{2}=1}\left\|M(\alpha)\frac{d\P(\alpha)}{d\alpha}M(\alpha)(R-\lambda\Omega_{\alpha})\right\|_{\infty}+\lambda\max_{\|u_s\|_{2}=1}\left\|M(\alpha)\frac{d\Omega_{\alpha}}{d\alpha}\right\|_{\infty}\\
    &\le\frac{\gamma\sqrt{|\AM|}(1+\lambda \Cphi)}{(1-\gamma)^2}+\frac{\lambda \Cphi^{(1)}}{1-\gamma}
\end{align*}

Hence, the second term can be bounded as:
\begin{align}
    2\left|\sum_a u_{a,s}\frac{d\left(Q_{\lambda}^{\pi_{\alpha}}(a|s)-\lambda\Omega(\pi_{\alpha}(a|s))\right)}{d\alpha}\right|&\le2\sqrt{|\AM|}\left[\frac{\gamma\sqrt{|\AM|}(1+\lambda \Cphi)}{(1-\gamma)^2}+\frac{\lambda \Cphi^{(1)}}{1-\gamma}\right]\\
    &=\frac{2\gamma|\AM|(1+\lambda \Cphi)}{(1-\gamma)^2}+\frac{2\lambda \sqrt{|\AM|}\Cphi^{(1)}}{(1-\gamma)}
\end{align}

Next, we consider the third term:
\begin{align}
    \max_{\|u_s\|_{2}=1}\left|\frac{d^2(Q_{\lambda}^{\pi_{\alpha}}(a|s)-\lambda\Omega_{\alpha}(\pi(\cdot|s)))}{d\alpha^2}\right|
    &\le 2\gamma^2 \max_{\|u_s\|_{2}=1}\left\|M(\alpha)\frac{d\P(\alpha)}{d\alpha}M(\alpha)\frac{d\P(\alpha)}{d\alpha}M(\alpha)(R-\lambda\Omega_\alpha)\right\|_{\infty}\\
    &+2\lambda\gamma\max_{\|u_s\|_{2}=1}\left\|M(\alpha)\frac{d\P(\alpha)}{d\alpha}M(\alpha)\frac{d\Omega_{\alpha}}{d\alpha}\right\|_{\infty}\\
    &+\lambda\max_{\|u_s\|_{2}=1}\left\|M(\alpha)\frac{d^2\Omega_{\alpha}}{d\alpha^2}\right\|_{\infty}\\
    &\le\frac{2\gamma^2|\AM|(1+\lambda \Cphi)}{(1-\gamma)^3}+\frac{2\lambda\gamma\sqrt{|\AM|} \Cphi^{(1)}}{(1-\gamma)^2}+\frac{\lambda \Cphi^{(2)}}{1-\gamma}
\end{align}

Finally, we have:
\begin{align}
    \max_{\|u_{s}\|_{2}=1}\left|\left.\frac{d^2 V_{\lambda}^{\pi_{\alpha}}(s)}{d\alpha^2}\right|_{\alpha=0}\right|&\le
    \frac{2\gamma|\AM|(1+\lambda \Cphi)}{(1-\gamma)^2}+\frac{2\lambda\sqrt{|\AM|} \Cphi^{(1)}}{(1-\gamma)}
    +
    \frac{2\gamma^2|\AM|(1+\lambda \Cphi)}{(1-\gamma)^3}+\frac{2\lambda\gamma\sqrt{|\AM|} \Cphi^{(1)}}{(1-\gamma)^2}+\frac{\lambda \Cphi^{(2)}}{1-\gamma}\\
    &\le\frac{2\gamma|\AM|(C_{1}+\lambda C_{2})}{(1-\gamma)^3}+\frac{2\lambda\gamma C_{2}^{(1)}}{(1-\gamma)^{2}}+\frac{2\lambda C_{2}^{(1)}}{1-\gamma}+\frac{\lambda C_{2}^{(2)}}{\sqrt{|\AM|}(1-\gamma)}\\
    &\le\frac{4\gamma|\AM|}{(1-\gamma)^3}+\lambda\cdot\frac{4\gamma|\AM|\Cphi+2(1-\gamma)\sqrt{|\AM|}\Cphi^{(1)}+(1-\gamma)^2\Cphi^{(2)}}{(1-\gamma)^3}
\end{align}
which says $V_{\lambda}^{\pi}$ is $L$-smooth w.r.t. $\pi$.

\subsubsection{Convergence of $J_\mu(\pi,\lambda)$ for a fixed $\lambda$}
\begin{lem}
    \label{lem:diff}
    For any given $\pi$, $\lambda$, denote $\pi^{*}_\lambda$ is the optimal policy for regularized MDP with $\lambda$, we have the following equation:
    \begin{align}
        J_\mu(\pi^*_\lambda,\lambda)-J_\mu(\pi,\lambda) = \frac{1}{1-\gamma}\E_{d_{\pi^*_\lambda,\mu}\pi^*_\lambda}[A^{\pi}_{\lambda}(s,a)-\lambda\Omega(\pi_\lambda^*(\cdot|s))]
    \end{align}
    where $A_{\lambda}^{\pi}(s,a)=Q_{\lambda}^{\pi}(s,a)-V^{\pi}_{\lambda}(s)$
\end{lem}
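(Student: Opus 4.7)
The plan is to mimic the classical performance difference lemma of Kakade--Langford, but taking care of the regularization term that now appears inside the value definition. Starting from $J_\mu(\pi^*_\lambda,\lambda) - J_\mu(\pi,\lambda) = V_\lambda^{\pi^*_\lambda}(\mu) - V_\lambda^\pi(\mu)$, I would unfold $V_\lambda^{\pi^*_\lambda}(s_0)$ via its definition~\eqref{eq:value} as a $\pi^*_\lambda$-trajectory expectation of discounted (reward minus $\lambda$-regularization), and add and subtract the constant $V_\lambda^\pi(s_0)$ under the same trajectory expectation.

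The next step is the telescoping identity
\begin{equation*}
-V_\lambda^\pi(s_0) \;=\; \mathbb{E}_{\tau\sim\pi^*_\lambda}\!\left[\sum_{t=0}^\infty \gamma^t\bigl(\gamma V_\lambda^\pi(s_{t+1}) - V_\lambda^\pi(s_t)\bigr)\right],
\end{equation*}
which is a pure sum rearrangement (valid since $\gamma^t V_\lambda^\pi(s_t)\to 0$ by boundedness of $V_\lambda^\pi$ under Assumption~\ref{ass:reg}). Substituting this into the difference $V_\lambda^{\pi^*_\lambda}(s_0)-V_\lambda^\pi(s_0)$ lets me merge the reward term with the $\gamma V_\lambda^\pi(s_{t+1})$ term, so that in expectation each time-$t$ summand becomes $Q_\lambda^\pi(s_t,a_t) - V_\lambda^\pi(s_t) - \lambda\,\Omega(\pi^*_\lambda,s_t) = A_\lambda^\pi(s_t,a_t) - \lambda\,\Omega(\pi^*_\lambda,s_t)$, by the regularized Bellman relation $Q_\lambda^\pi(s,a) = r(s,a) + \gamma\,\mathbb{E}_{s'}V_\lambda^\pi(s')$ used for $a_t\sim\pi^*_\lambda(\cdot|s_t)$.

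Finally, I would take expectation over $s_0\sim\mu$ and recognize the discounted visitation distribution: for any function $f$,
\begin{equation*}
\mathbb{E}_{s_0\sim\mu}\mathbb{E}_{\tau\sim\pi^*_\lambda}\!\left[\sum_{t=0}^\infty \gamma^t f(s_t,a_t)\right] \;=\; \frac{1}{1-\gamma}\,\mathbb{E}_{s\sim d_{\pi^*_\lambda,\mu},\,a\sim\pi^*_\lambda(\cdot|s)}[f(s,a)],
\end{equation*}
which follows directly from the definition~\eqref{eq:d}. Applying this with $f(s,a) = A_\lambda^\pi(s,a) - \lambda\,\Omega(\pi^*_\lambda(\cdot|s))$ yields the claim.

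There is no real obstacle here: the only point to be careful about is bookkeeping the extra $-\lambda\,\Omega(\pi^*_\lambda,s_t)$ term that survives because it sits outside of $Q_\lambda^\pi$ in the unfolded value, so it is carried along as-is through the telescoping and state-visitation conversion. The remainder is the standard performance-difference manipulation.
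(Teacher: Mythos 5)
Your proposal is correct and follows essentially the same route as the paper's proof: the paper likewise unfolds $J_\mu(\pi^*_\lambda,\lambda)$ as a $\pi^*_\lambda$-trajectory expectation, adds and subtracts $V_\lambda^\pi(s_t)$ so that $J_\mu(\pi,\lambda)$ telescopes away, identifies $r(s_t,a_t)+\gamma V_\lambda^\pi(s_{t+1})-V_\lambda^\pi(s_t)$ with $A_\lambda^\pi(s_t,a_t)$ via the regularized Bellman relation, carries the residual $-\lambda\Omega(\pi^*_\lambda,s_t)$ term along, and converts to the discounted visitation distribution with the $\frac{1}{1-\gamma}$ factor. Your explicit remark on $\gamma^t V_\lambda^\pi(s_t)\to 0$ is a minor added justification the paper leaves implicit; otherwise the arguments coincide.
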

\begin{proof}
    \begin{align}
        J_\mu(\pi^*_\lambda,\lambda)-J_\mu(\pi,\lambda) &= \E_{\pi^*_\lambda}\sum_{t=0}^{\infty}\gamma^t (r(s_t,a_t)-\lambda\Omega(\pi_{\lambda}^*(\cdot|s_t)))-J_\mu(\pi,\lambda)\\
        & = \E_{\pi^*_\lambda}\sum_{t=0}^{\infty}\gamma^t \left[r(s_t,a_t)+V_{\lambda}^{\pi}(s_{t})-V_{\lambda}^{\pi}(s_{t})\right]-J_\mu(\pi,\lambda)-\lambda\Phi_\mu(\pi^*_\lambda)\\
        & = \E_{\pi^*_\lambda}\sum_{t=0}^{\infty}\gamma^t \left[r(s_t,a_t)+\gamma V_{\lambda}^{\pi}(s_{t+1})-V_{\lambda}^{\pi}(s_{t})\right]-\lambda\Phi_\mu(\pi^*_\lambda)\\
        & = \E_{\pi^*_\lambda}\sum_{t=0}^{\infty}\gamma^t A_{\lambda}^{\pi}(s_t,a_t)-\lambda\Phi_\mu(\pi^*_\lambda)\\
        & = \frac{1}{1-\gamma}\E_{d_{\pi^*_\lambda,\mu}\pi^*_\lambda}[A^{\pi}_{\lambda}(s,a)-\lambda\Omega(\pi_\lambda^*(\cdot|s))]
    \end{align}
\end{proof}

\begin{lem}
    \label{lem:increase}
    Denote $\pi_{t+1}=Proj(\pi_t+\eta_\pi\nabla_\pi J_\nu(\pi_t,\lambda))$, we have the following improvement guarantee:
    \begin{align}
        J_\nu(\pi_{t+1},\lambda)-J_\nu(\pi_t,\lambda)\ge\frac{2-\eta_\pi L}{2\eta_\pi}\|\pi_{t+1}-\pi_t\|_{2}^2,
    \end{align}
    where $L$ is the smoothness coefficient of $J_\nu(\pi,\lambda)$.
\end{lem}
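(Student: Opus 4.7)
The plan is to combine two standard ingredients: the descent-lemma consequence of $L$-smoothness of $J_\nu(\cdot,\lambda)$, and the first-order optimality (variational inequality) characterization of the Euclidean projection onto the simplex product $\Delta(\AM)^\SM$.

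First, by the $L$-smoothness of $J_\nu(\cdot,\lambda)$ (which follows from the Hessian bound established in the preceding subsection), applied between $\pi_t$ and $\pi_{t+1}$, I would write
\begin{equation*}
J_\nu(\pi_{t+1},\lambda)\;\ge\;J_\nu(\pi_t,\lambda)+\langle\nabla_\pi J_\nu(\pi_t,\lambda),\pi_{t+1}-\pi_t\rangle-\frac{L}{2}\|\pi_{t+1}-\pi_t\|_2^2.
\end{equation*}
Note the sign: because $J_\nu$ is being maximized and is $L$-smooth, this lower bound is the ``flip'' of the standard descent lemma and holds in both directions.

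Next, I would exploit the projection step. Since $\pi_{t+1}=\mathrm{Proj}_{\Delta(\AM)^\SM}\bigl(\pi_t+\eta_\pi\nabla_\pi J_\nu(\pi_t,\lambda)\bigr)$, the first-order optimality condition for Euclidean projection onto a convex set yields, for every feasible $\pi$,
\begin{equation*}
\bigl\langle \pi_t+\eta_\pi\nabla_\pi J_\nu(\pi_t,\lambda)-\pi_{t+1},\,\pi-\pi_{t+1}\bigr\rangle\;\le\;0.
\end{equation*}
Plugging in the feasible choice $\pi=\pi_t$ and rearranging gives
\begin{equation*}
\eta_\pi\langle\nabla_\pi J_\nu(\pi_t,\lambda),\pi_{t+1}-\pi_t\rangle\;\ge\;\|\pi_{t+1}-\pi_t\|_2^2,
\end{equation*}
so the inner product term in the smoothness bound is at least $\frac{1}{\eta_\pi}\|\pi_{t+1}-\pi_t\|_2^2$. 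Combining the two inequalities gives exactly
\begin{equation*}
J_\nu(\pi_{t+1},\lambda)-J_\nu(\pi_t,\lambda)\;\ge\;\Bigl(\tfrac{1}{\eta_\pi}-\tfrac{L}{2}\Bigr)\|\pi_{t+1}-\pi_t\|_2^2\;=\;\frac{2-\eta_\pi L}{2\eta_\pi}\|\pi_{t+1}-\pi_t\|_2^2.
\end{equation*}

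There is no real obstacle here; the only subtlety worth stating carefully is the sign convention in the descent lemma (we are maximizing, not minimizing) and the fact that the projection onto the product of simplices $\Delta(\AM)^\SM$ still satisfies the standard nonexpansive/variational characterization, since it is a Euclidean projection onto a nonempty closed convex set. Everything else is a one-line rearrangement, and no parameter structure of $\lambda$, $\nu$, or $\Omega$ enters beyond the smoothness constant $L$ already established.
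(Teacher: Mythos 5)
Your proposal is correct and follows essentially the same route as the paper's proof: the smoothness (ascent) lower bound between $\pi_t$ and $\pi_{t+1}$, combined with the projection's first-order optimality condition evaluated at the feasible point $\pi_t$, which is exactly the inequality $\langle\pi_{t+1}-\pi_t-\eta_\pi\nabla_\pi J_\nu(\pi_t,\lambda),\pi_{t+1}-\pi_t\rangle\le 0$ the paper invokes. Your write-up merely makes the variational characterization of the Euclidean projection explicit before specializing it, which is a fair elaboration rather than a different argument.
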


\begin{proof}
    By smoothness of $J_\nu(\pi,\lambda)$, we have:
    \begin{align}
        J_\nu(\pi_{t+1},\lambda)\ge J_\nu(\pi_t,\lambda)+\langle\nabla_\pi J_\nu(\pi_t,\lambda),\pi_{t+1}-\pi_t\rangle-\frac{L}{2}\|\pi_{t+1}-\pi_t\|^{2}_{2}
    \end{align}
    By first order stationary condition, we have:
    \begin{align}
        \langle\pi_{t+1}-\pi_t-\eta_\pi\nabla_\pi J_\nu(\pi_t,\lambda),\pi_{t+1}-\pi_t\rangle\le 0
    \end{align}
    So we obtain the final result:
    \begin{align}
    J_\nu(\pi_{t+1},\lambda)-J_\nu(\pi_t,\lambda)\ge(\frac{1}{\eta_\pi}-\frac{L}{2})\|\pi_{t+1}-\pi_t\|_{2}^{2}.
    \end{align}
\end{proof}

\begin{lem}
	\label{lem:ubg}
    Denote $G(\pi_t,\lambda) = \frac{1}{\eta_\pi}\left[\pi_t-Proj(\pi_t+\eta_\pi\nabla_\pi J_\nu(\pi_t,\lambda))\right]$ and let $\eta_\pi = \frac{1}{L}$, we have:
    \begin{align}
        \min_{t=0,1,...,T-1}\|G(\pi_t,\lambda)\|_{2}\le\sqrt{\frac{2L(J_\nu(\pi^*_\lambda,\lambda)-J_\nu(\pi_0,\lambda))}{T}}.
    \end{align}
\end{lem}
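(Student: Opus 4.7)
The plan is to combine the per-iteration ascent guarantee from Lemma~\ref{lem:increase} with the identity linking consecutive iterates to the gradient mapping $G$, then telescope and use the fact that $J_\nu$ is bounded above by its value at $\pi_\lambda^*$.

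First I would rewrite the update rule to expose the gradient mapping. By definition of the projection step, $\pi_{t+1}=\pi_t-\eta_\pi G(\pi_t,\lambda)$, so $\|\pi_{t+1}-\pi_t\|_2=\eta_\pi\|G(\pi_t,\lambda)\|_2$. Substituting this into Lemma~\ref{lem:increase} with the specific choice $\eta_\pi=1/L$ gives
\[
J_\nu(\pi_{t+1},\lambda)-J_\nu(\pi_t,\lambda)\ge\frac{2-\eta_\pi L}{2\eta_\pi}\,\eta_\pi^2\|G(\pi_t,\lambda)\|_2^2=\frac{1}{2L}\|G(\pi_t,\lambda)\|_2^2,
\]
i.e., each step increases the objective by at least a constant multiple of the squared gradient-mapping norm.

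Next I would telescope this bound from $t=0$ to $t=T-1$. Summing yields
\[
\frac{1}{2L}\sum_{t=0}^{T-1}\|G(\pi_t,\lambda)\|_2^2\le J_\nu(\pi_T,\lambda)-J_\nu(\pi_0,\lambda)\le J_\nu(\pi_\lambda^*,\lambda)-J_\nu(\pi_0,\lambda),
\]
where the last inequality uses the optimality of $\pi_\lambda^*$ for $J_\nu(\cdot,\lambda)$. Bounding the sum below by $T$ times its minimum term and taking the square root then gives the claimed bound
\[
\min_{0\le t\le T-1}\|G(\pi_t,\lambda)\|_2\le\sqrt{\frac{2L(J_\nu(\pi_\lambda^*,\lambda)-J_\nu(\pi_0,\lambda))}{T}}.
\]

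There is no genuine obstacle here; the argument is the standard descent-lemma-plus-telescoping template for projected gradient methods. The only things to be careful about are (i) that the ``descent'' in Lemma~\ref{lem:increase} is really an ascent because we are maximizing $J_\nu$, so the sign works out as above, and (ii) that the identity $\|\pi_{t+1}-\pi_t\|_2=\eta_\pi\|G(\pi_t,\lambda)\|_2$ follows immediately from the definition of $G$ as a scaled displacement of $\pi_t$ to its projected gradient step. Everything else is purely algebraic.
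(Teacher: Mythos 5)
Your proof is correct and follows essentially the same route as the paper: apply Lemma~\ref{lem:increase} with $\eta_\pi=1/L$ to get the per-step gain $\frac{1}{2L}\|G(\pi_t,\lambda)\|_2^2$, telescope, bound the total improvement by $J_\nu(\pi_\lambda^*,\lambda)-J_\nu(\pi_0,\lambda)$, and bound the minimum by the average. The paper's proof is just a terser version of the same argument; your explicit identity $\|\pi_{t+1}-\pi_t\|_2=\eta_\pi\|G(\pi_t,\lambda)\|_2$ is the step it leaves implicit.
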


\begin{proof}
    By Lemma~\ref{lem:increase}, we have $J_\nu(\pi_{t+1},\lambda)-J_\nu(\pi_t,\lambda)\ge\frac{1}{2L}\|G(\pi_t,\lambda)\|_{2}^{2}$. By summing over $t$, we have:
    \begin{align}
        \min_{t=0,1,...,T-1}\|G(\pi_t,\lambda)\|^{2}_{2}\le\frac{2L(J_\nu(\pi^*_\lambda,\lambda)-J_\nu(\pi_0,\lambda))}{T}
    \end{align}
\end{proof}

\begin{lem}
    \label{lem:eps-stationary}
    If $J_\nu(\pi,\lambda)$ is L-smooth w.r.t $\pi$ for a fixed $\lambda$, the following inequality holds for projected gradient descent when $\|G(\pi_t,\lambda)\|_{2}\le\varepsilon$:
    \begin{align}
        \max_{\pi+\delta\in\Delta(\A)^{|\S|},\|\delta\|_{2}\le 1}\langle\delta, \nabla_\pi J_\nu(\pi_{t+1},\lambda)\rangle\le\varepsilon(\eta_\pi L+1)
    \end{align}  
\end{lem}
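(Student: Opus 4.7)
The plan is to decompose the target inner product into a piece evaluated at $\pi_t$, handled by the projection optimality condition, plus a smoothness correction bridging $\pi_t$ to $\pi_{t+1}$. First I would rewrite the hypothesis $\|G(\pi_t,\lambda)\|_2\le\varepsilon$ via the definition $G(\pi_t,\lambda)=(\pi_t-\pi_{t+1})/\eta_\pi$ as the simple step-size bound $\|\pi_{t+1}-\pi_t\|_2\le\eta_\pi\varepsilon$, which is the only quantitative input that the rest of the argument needs.

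Next I would invoke the first-order optimality condition for the Euclidean projection: since $\pi_{t+1}=\mathrm{Proj}(\pi_t+\eta_\pi\nabla_\pi J_\nu(\pi_t,\lambda))$ onto the convex set $\Delta(\A)^{|\S|}$, every feasible $\pi'$ satisfies $\langle \pi_{t+1}-\pi_t-\eta_\pi\nabla_\pi J_\nu(\pi_t,\lambda),\pi'-\pi_{t+1}\rangle\ge 0$. Plugging in $\pi'=\pi_{t+1}+\delta$, which is feasible precisely because the outer maximization is taken over such $\delta$, and then applying Cauchy--Schwarz with $\|\delta\|_2\le 1$ and the step-size bound above yields $\langle\delta,\nabla_\pi J_\nu(\pi_t,\lambda)\rangle\le\eta_\pi^{-1}\|\delta\|_2\|\pi_{t+1}-\pi_t\|_2\le\varepsilon$.

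Then I would bridge from the gradient at $\pi_t$ to the one at $\pi_{t+1}$ using the $L$-smoothness of $J_\nu(\cdot,\lambda)$: by Cauchy--Schwarz and $\|\nabla_\pi J_\nu(\pi_{t+1},\lambda)-\nabla_\pi J_\nu(\pi_t,\lambda)\|_2\le L\|\pi_{t+1}-\pi_t\|_2$, we get $\langle\delta,\nabla_\pi J_\nu(\pi_{t+1},\lambda)-\nabla_\pi J_\nu(\pi_t,\lambda)\rangle\le L\eta_\pi\varepsilon$. Adding the two estimates and taking the maximum over admissible $\delta$ delivers the claimed bound $\varepsilon(\eta_\pi L+1)$.

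There is no real obstacle: the only subtlety worth checking is the admissibility of $\pi'=\pi_{t+1}+\delta$ when invoking the variational inequality for the projection, which is granted for free by the feasibility constraint $\pi+\delta\in\Delta(\A)^{|\S|}$ in the outer maximization. No duality, line-search, or regularization-specific structure enters; the argument is a textbook combination of the variational characterization of Euclidean projection with a descent-lemma-style smoothness estimate, standard in analyses of projected gradient on smooth constrained objectives.
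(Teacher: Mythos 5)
Your proof is correct and is exactly the standard Ghadimi--Lan-style argument that the paper itself defers to by citing \citet{agarwal2019optimality} (it gives no proof of its own): projection optimality at $\pi_{t+1}$ plus Cauchy--Schwarz gives the $\varepsilon$ term, and $L$-smoothness bridges $\nabla_\pi J_\nu(\pi_t,\lambda)$ to $\nabla_\pi J_\nu(\pi_{t+1},\lambda)$ for the extra $\eta_\pi L\varepsilon$. Your reading of the ambiguous constraint as $\pi_{t+1}+\delta\in\Delta(\A)^{|\S|}$ is the right one (it matches the cited source and is what makes the variational inequality at $\pi_{t+1}$ applicable), so no gap remains.
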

\noindent\textbf{Remark}: Lemma~\ref{lem:eps-stationary} can be referred from \cite{agarwal2019optimality}.

\begin{thm}
    \label{thm:converge}
    Let $\eta_\pi = 1/L_\lambda$ and Assumption~\ref{ass:ubnu} hold, we have:
    \begin{align}
        J_\nu(\pi^*_\lambda,\lambda)-J_\nu(\pi_T,\lambda)\le4\rho_\nu\sqrt{|\SM|}\left(\sqrt{\frac{2L_\lambda(J_\nu(\pi^*_\lambda,\lambda)-J_\nu(\pi_0,\lambda))}{T}}\right)
    \end{align}
\end{thm}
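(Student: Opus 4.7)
The plan is to chain together the performance-difference identity for regularized MDPs (Lemma~\ref{lem:diff}), the distribution-mismatch bound $\rho_\nu$ in Assumption~\ref{ass:ubnu}, the $\varepsilon$-stationarity lemma (Lemma~\ref{lem:eps-stationary}), and the rate on $\|G(\pi_t,\lambda)\|_2$ (Lemma~\ref{lem:ubg}). Call the running iterate $\pi$ (so $\pi_T$ is interpreted as the best-so-far iterate from Lemma~\ref{lem:ubg}; equivalently the minimum in $t$ is taken on both sides). First I would rewrite the gap using Lemma~\ref{lem:diff} applied to the initial distribution $\nu$:
\[
J_\nu(\pi_\lambda^*,\lambda)-J_\nu(\pi,\lambda)=\frac{1}{1-\gamma}\,\E_{s\sim d_{\pi_\lambda^*,\nu}}\E_{a\sim\pi_\lambda^*(\cdot|s)}\bigl[A_\lambda^\pi(s,a)-\lambda\Omega(\pi_\lambda^*(\cdot|s))\bigr].
\]

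Next I would unpack the regularized advantage. Using $V_\lambda^\pi(s)=\sum_a\pi(a|s)Q_\lambda^\pi(s,a)-\lambda\Omega(\pi(\cdot|s))$ and convexity of $\Omega$ (which gives $\Omega(\pi)-\Omega(\pi_\lambda^*)\le\langle\nabla\Omega(\pi),\pi-\pi_\lambda^*\rangle$), the inner expectation becomes
\[
\sum_a\bigl(\pi_\lambda^*(a|s)-\pi(a|s)\bigr)\bigl(Q_\lambda^\pi(s,a)-\lambda\nabla_{s,a}\Omega(\pi(\cdot|s))\bigr).
\]
Recognizing the bracket as $(1-\gamma)/d_{\pi,\nu}(s)$ times the policy gradient $\partial J_\nu(\pi,\lambda)/\partial\pi(a|s)$, I can then perform a change of measure to replace $d_{\pi_\lambda^*,\nu}$ by $d_{\pi,\nu}$, paying the factor $\rho_\nu=\|d_{\pi_\lambda^*,\nu}/d_{\pi,\nu}\|_\infty$. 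This yields
\[
J_\nu(\pi_\lambda^*,\lambda)-J_\nu(\pi,\lambda)\le \rho_\nu\,\bigl\langle\pi_\lambda^*-\pi,\,\nabla_\pi J_\nu(\pi,\lambda)\bigr\rangle.
\]

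Now I would convert this inner product into something controlled by $\|G(\pi,\lambda)\|_2$. Let $\delta=\pi_\lambda^*-\pi$ so that $\pi+\delta\in\Delta(\AM)^{\SM}$ and $\|\delta\|_2\le\sqrt{2|\SM|}$ (each row has two simplex entries, so row-wise $\ell_2$ diameter is $\sqrt{2}$). By convexity of the simplex, $\pi+(\delta/\|\delta\|_2)\in\Delta(\AM)^{\SM}$ when $\|\delta\|_2\ge1$; thus $\tilde\delta=\delta/\max(1,\|\delta\|_2)$ is a feasible unit-norm direction, and Lemma~\ref{lem:eps-stationary} with $\eta_\pi L_\lambda=1$ gives $\langle\tilde\delta,\nabla_\pi J_\nu(\pi,\lambda)\rangle\le 2\|G(\pi,\lambda)\|_2$. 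Rescaling back,
\[
\bigl\langle\pi_\lambda^*-\pi,\nabla_\pi J_\nu(\pi,\lambda)\bigr\rangle\le 2\sqrt{2|\SM|}\,\|G(\pi,\lambda)\|_2.
\]
Combining the above with Lemma~\ref{lem:ubg} applied to the best iterate and absorbing $2\sqrt{2}\le 4$ into the constant yields exactly the stated bound.

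The main obstacle I expect is the careful bookkeeping for step three: Lemma~\ref{lem:eps-stationary} is phrased only for unit-norm perturbations that keep the iterate inside the product simplex, so I must argue that scaling down the direction $\pi_\lambda^*-\pi$ preserves feasibility and that the scaling factor reappears cleanly on the right. Everything else is an application of previously stated lemmas, so the proof should fit in a page.
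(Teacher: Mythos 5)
Your outline follows the paper's architecture (Lemma~\ref{lem:diff}, convexity of $\Omega$ to linearize, a change of measure paying $\rho_\nu$, then Lemma~\ref{lem:eps-stationary} and Lemma~\ref{lem:ubg}), but there is a genuine gap at the change-of-measure step. After linearizing, your per-state quantity is $f(s):=\langle \pi_\lambda^*(\cdot|s)-\pi(\cdot|s),\,Q_\lambda^\pi(s,\cdot)-\lambda\nabla\Omega(\pi(\cdot|s))\rangle$, and you push the bound $d_{\pi_\lambda^*,\nu}(s)\le \rho_\nu\, d_{\pi,\nu}(s)$ through the expectation $\sum_s d_{\pi_\lambda^*,\nu}(s) f(s)\le \rho_\nu \sum_s d_{\pi,\nu}(s) f(s)$. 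That step is only valid if $f(s)\ge 0$ for every $s$, which is false in general: $\pi_\lambda^*$ is optimal for the regularized MDP, not greedy with respect to $Q_\lambda^{\pi}$, so at some states the action favored by $\pi_\lambda^*$ can look strictly worse under $Q_\lambda^\pi$ than the action taken by $\pi$ (e.g., $\pi_\lambda^*$ routes to a state that is good under $\pi_\lambda^*$ but bad under $\pi$'s subsequent behavior). In fact your intermediate inequality $J_\nu(\pi_\lambda^*,\lambda)-J_\nu(\pi,\lambda)\le\rho_\nu\langle\pi_\lambda^*-\pi,\nabla_\pi J_\nu(\pi,\lambda)\rangle$ can outright fail: if the negative-$f$ states carry most of the weight $d_{\pi,\nu}$ while the positive-$f$ states have tiny $\nu$-mass, the right-hand side can be negative while the left-hand side is positive. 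So the fixed comparator $\pi_\lambda^*$ cannot survive the reweighting.

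The paper avoids this by first bounding the inner term, state by state, by $\max_{\tilde\pi(\cdot|s)}\bigl[\langle\tilde\pi(\cdot|s),A_\lambda^\pi(s,\cdot)\rangle-\lambda\Omega(\tilde\pi(\cdot|s))\bigr]$, which is pointwise nonnegative (take $\tilde\pi=\pi$, giving exactly $0$); only then is the measure changed at cost $\rho_\nu$, and since the per-state maximizations are independent across states they recombine into $\rho_\nu\max_{\tilde\pi}\langle\tilde\pi-\pi,\nabla_\pi J_\nu(\pi,\lambda)\rangle$ after applying convexity of $\Omega$. With that adaptive comparator in place, the remainder of your argument is essentially the paper's: the feasible-direction rescaling with $\|\tilde\pi-\pi\|_2\le 2\sqrt{|\SM|}$ (your $\sqrt{2|\SM|}$ is a harmless sharpening), Lemma~\ref{lem:eps-stationary} with $\eta_\pi L_\lambda=1$, and Lemma~\ref{lem:ubg}. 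Two smaller bookkeeping points you should also make explicit: Lemma~\ref{lem:eps-stationary} controls the gradient at $\pi_{t+1}$ in terms of $G(\pi_t,\lambda)$, so the min over $t$ bounds the gap at $\pi_{t+1}$; and passing from the best iterate to the last iterate $\pi_T$ uses the monotone improvement guaranteed by Lemma~\ref{lem:increase}, rather than reinterpreting $\pi_T$ as the best-so-far policy.
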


\begin{proof}
	By Lemma~\ref{lem:diff}, we have:
	\begin{align}
		J_\nu(\pi^*_\lambda,\lambda)-J_\nu(\pi,\lambda) &= \frac{1}{1-\gamma}\E_{d_{\pi^*_\lambda,\nu}}\langle\pi^*_\lambda(\cdot|s), A^{\pi}_{\lambda}(s,\cdot)-\lambda\Omega(\pi_\lambda^*(\cdot|s))\rangle\notag\\
        &\le\frac{1}{1-\gamma}\E_{d_{\pi^*_\lambda,\nu}}\max_{\tilde{\pi}(\cdot|s)}\langle\tilde{\pi}(\cdot|s), A^{\pi}_{\lambda}(s,\cdot)-\lambda\Omega(\tilde{\pi}(\cdot|s))\rangle\notag\\
        &\le\frac{1}{1-\gamma}\left[\max_{s}\frac{d_{\pi^*_\lambda,\nu}(s)}{d_{\pi,\nu}(s)}\right]\E_{d_{\pi,\nu}}\max_{\tilde{\pi}(\cdot|s)}\langle\tilde{\pi}(\cdot|s), A^{\pi}_{\lambda}(s,\cdot)-\lambda\Omega(\tilde{\pi}(\cdot|s))\rangle\notag\\
        &\le\frac{\rho_\nu}{1-\gamma}\max_{\tilde{\pi}}\left[\E_{d_{\pi,\nu},\tilde{\pi}}A^{\pi}_\lambda(s,a)-\lambda\Omega(\tilde{\pi}(\cdot|s))\right]
	\end{align}
where the second inequality holds by $\max_{\tilde{\pi}(\cdot|s)}\langle\tilde{\pi}(\cdot|s), A^{\pi}_{\lambda}(s,\cdot)-\lambda\Omega(\tilde{\pi}(\cdot|s))\rangle\ge0$ (let $\widetilde{\pi}=\pi$), and the final step follows $\tilde{\pi}(\cdot|s)$ are independent with each state. Next we turn to upper bound $\E_{d_{\pi,\nu},\tilde{\pi}}A^{\pi}_\lambda(s,a)-\lambda\Omega(\tilde{\pi}(\cdot|s))$.
    \begin{align}
		\E_{d_{\pi,\nu},\tilde{\pi}}\left[A^{\pi}_\lambda(s,a)-\lambda\Omega(\tilde{\pi}(\cdot|s))\right]&\overset{(a)}{=}\E_{d_{\pi,\nu},\tilde{\pi}}\left[A^{\pi}_\lambda(s,a)-\lambda\Omega(\tilde{\pi}(\cdot|s))\right]-\E_{d_{\pi,\nu},\pi}\left[A^{\pi}_\lambda(s,a)-\lambda\Omega(\pi(\cdot|s))\right]\notag\\
		&\overset{(b)}{=}\E_{d_{\pi,\nu},\tilde{\pi}}\left[Q^{\pi}_\lambda(s,a)-\lambda\Omega(\tilde{\pi}(\cdot|s))\right]-\E_{d_{\pi,\nu},\pi}\left[Q^{\pi}_\lambda(s,a)-\lambda\Omega(\pi(\cdot|s))\right]\notag\\
		&=\E_{d_{\pi,\nu}}\left[\langle\tilde{\pi}(\cdot|s)-\pi(\cdot|s), Q_{\lambda}^{\pi}(s, \cdot) \rangle-\lambda(\Omega(\tilde{\pi}(\cdot|s))-\Omega(\pi(\cdot|s)))\right]\notag\\
		&\overset{(c)}{\le}\E_{d_{\pi,\nu}}\left[\langle\tilde{\pi}(\cdot|s)-\pi(\cdot|s), Q_{\lambda}^{\pi}(s,\cdot) \rangle-\lambda\langle\tilde{\pi}(\cdot|s)-\pi(\cdot|s), \nabla\Omega(\pi(\cdot|s))\rangle\right]\notag\\
		&\overset{(d)}{=}(1-\gamma)\langle\tilde{\pi}-\pi, \nabla J_\nu(\pi,\lambda)\rangle
    \end{align}
where (a) follows from $\sum_{a}\pi(a|s)A^{\pi}_\lambda(s,a)-\lambda\Omega(\pi(\cdot|s))=0$, (b) follows from $A^{\pi}_\lambda(s,a)=Q^{\pi}_{\lambda}(s,a)-V^{\pi}_\lambda(s)$, (c) follows since $\Omega(\pi)$ is a convex function, and (d) follows from the definition of $\nabla J_\nu(\pi,\lambda)$.Then we obtain an upper bound of $J_\nu(\pi^*_\lambda,\lambda)-J_\nu(\pi,\lambda)$:
	\begin{align}
		J_\nu(\pi^*_\lambda,\lambda)-J_\nu(\pi,\lambda)&\le\rho_\nu \max_{\tilde{\pi}}\langle\tilde{\pi}-\pi, \nabla J_\nu(\pi,\lambda)\rangle\notag\\
        &\le2\rho_\nu\sqrt{|\SM|}\max_{\pi+\delta\in\Delta(\AM)^{|\SM|},\|\delta\|_{2}\le 1}\langle\delta, \nabla_\pi J_\nu(\pi,\lambda)\rangle
	\end{align}
where the final inequality holds as $\|\tilde{\pi}-\pi\|_{2}\le2\sqrt{|\SM|}$ and:
	\begin{align}
		\max_{\tilde{\pi}}\langle\tilde{\pi}-\pi, \nabla J(\pi,\lambda)\rangle\le2\sqrt{|\SM|}\max_{\pi+\delta\in\Delta(\AM)^{|\SM|},\|\delta\|_{2}\le 1}\langle\delta, \nabla_\pi J_\nu(\pi,\lambda)\rangle\notag
	\end{align}

    By Lemma~\ref{lem:eps-stationary} and Lemma~\ref{lem:ubg}, we have:
    \begin{align}
        \min_{t=0,1,...,T-1}\max_{\pi_t+\delta\in\Delta(\AM)^{|\SM|},\|\delta\|_{2}\le 1}\langle\delta, \nabla_\pi J_\nu(\pi_{t+1},\lambda)\rangle\le2\sqrt{\frac{2L_\lambda(J_\nu(\pi^*_\lambda,\lambda)-J_\nu(\pi_0,\lambda))}{T}}
    \end{align}

    Gathering all these results together, we have:
    \begin{align}
		J_\nu(\pi^*_\lambda,\lambda)-J_\nu(\pi_T,\lambda)&\le\min_{t=0,1,...,T-1} J_\nu(\pi^*_\lambda,\lambda)-J_\nu(\pi_{t+1},\lambda)\notag\\
		&\le4\rho_\nu\sqrt{|\SM|}\left(\sqrt{\frac{2L_\lambda(J_\nu(\pi^*_\lambda,\lambda)-J_\nu(\pi_0,\lambda))}{T}}\right)
    \end{align}
    where the first inequality holds by Lemma~\ref{lem:increase}.
\end{proof}

\subsubsection{Proof of Corollary~\ref{cor:pgaahcc}}
\begin{proof}
    By Equation~(\ref{eq:dt}) and Theorem~\ref{thm:pga}, we have:
    \begin{align}
        J_\nu(\pi^*_{\lambda_t},\lambda_t)-J_\nu(\hpi_{t+1}, \lambda_t)&\le4\rho_\nu\sqrt{|\SM|}\sqrt{\frac{2L_{\lambda_t} D_t}{T}}\notag\\
        &\le4\rho_\nu\sqrt{|\SM|}\sqrt{\frac{8\lambda_0 L_{\lambda_t} \Cphi}{2^t(1-\gamma)T}}
    \end{align}
    Let the RHS of above inequality equals $\frac{\lambda_0}{2^t}\frac{C_\phi}{1-\gamma}$, then the total time satisfies $\propii$ at timestep $t$ is at most:
    \begin{align}
        \tm(\lambda_t)\le\frac{128|\SM|\rho_\nu^2 L_{\lambda_t}(1-\gamma)}{\lambda_0 \Cphi}2^t
    \end{align}
\end{proof}

\subsubsection{Proof of Theorem~\ref{thm:pgared}}
\label{prf:thmpgared}
\begin{proof}
    In order to obtain an $\varepsilon$-optimal policy w.r.t. initial distribution $\mu$, we have to get an $\varepsilon/\rho$-optimal policy w.r.t. initial distribution $\nu$ at first by Theorem~\ref{thm:diffval}. By Corollary~\ref{cor:pgaahcc}, we can obtain an $\varepsilon$-optimal policy in total time:
	\begin{align}
		\sum_{t=0}^{T-1}\tm(\lambda_t)\le\sum_{t=0}^{T-1}\frac{128|\SM|\rho_\nu^2 L_{\lambda_t}(1-\gamma)}{\lambda_0 \Cphi}2^t
	\end{align}
	Note that $L_\lambda\le\frac{4|\AM|}{(1-\gamma)^3}+\lambda\cdot\frac{4|\AM|\widetilde{C}_\Phi}{(1-\gamma)^3}$, where $\widetilde{C}_\Phi$ is dependent on $\Cphi^{(0,1,2)}$, and $T=O(\log_2\frac{6\rho\lambda_0 \Cphi}{\varepsilon(1-\gamma)})$, thus the total time is:
	\begin{align}
	    \sum_{t=0}^{T-1}\tm(\lambda_t)&\le\frac{128|\SM|\rho_\nu^2(1-\gamma)}{\lambda_0 \Cphi}\sum_{t=0}^{T-1}L_{\lambda_t}2^t\notag\\
	    &\le\frac{128|\SM||\AM|\rho_\nu^2}{(1-\gamma)^2\lambda_0 \Cphi}(2^T+\lambda_0 C_2 T)\notag\\
	    &=O\left(\frac{|\SM||\AM|\rho\rho_\nu^2\widetilde{C}_\Phi}{\varepsilon(1-\gamma)^3}\right)
	\end{align}
\end{proof}

\subsection{Sampling}
\label{apx:sample}
To simplify, we denote $\widetilde{\nabla}J_\nu(\pi,\lambda)$ as an estimator of $\nabla J_\nu(\pi,\lambda)$, where $\nabla_{s,a}J_\nu(\pi,\lambda)=\frac{1}{1-\gamma}d_{\pi,\nu}(s)(Q_\lambda^{\pi}(s,a)-\lambda\nabla_{s,a}\Omega(\pi(\cdot|s))$.
\begin{lem}
    \label{lem:increase_sample}
    Denote $\pi_{t+1}=Proj(\pi_t+\eta_\pi\widetilde{\nabla}_\pi J_\nu(\pi_t,\lambda))$, we have the following improvement guarantee:
    \begin{align}
        J_\nu(\pi_{t+1},\lambda)-J_\nu(\pi_t,\lambda)\ge\frac{2-\eta_\pi L_\lambda}{2\eta_\pi}\|\pi_{t+1}-\pi_t\|_{2}^2+\varepsilon_t,
    \end{align}
    where $L_\lambda$ is the smoothness coefficient of $J_\nu(\pi,\lambda)$ and $\varepsilon_t = \langle\nabla J_\nu(\pi_t,\lambda)-\widetilde{\nabla}J_\nu(\pi_t,\lambda),\pi_{t+1}-\pi_t\rangle$.
\end{lem}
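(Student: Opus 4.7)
The plan is to mimic the proof of Lemma~\ref{lem:increase} exactly, with the sole modification that the first-order optimality condition now involves the estimated gradient $\widetilde{\nabla} J_\nu(\pi_t,\lambda)$ rather than the true gradient, which forces us to track an extra inner product term---precisely the $\varepsilon_t$ appearing in the statement.

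First I would invoke the $L_\lambda$-smoothness of $J_\nu(\cdot,\lambda)$ (established in the preceding section) to obtain the standard descent-style inequality
\[
J_\nu(\pi_{t+1},\lambda) \;\ge\; J_\nu(\pi_t,\lambda) + \langle \nabla J_\nu(\pi_t,\lambda),\, \pi_{t+1}-\pi_t \rangle - \tfrac{L_\lambda}{2}\|\pi_{t+1}-\pi_t\|_2^2.
\]
Next, because $\pi_{t+1}$ is the projection of $\pi_t + \eta_\pi \widetilde{\nabla} J_\nu(\pi_t,\lambda)$ onto the convex set $\Delta(\AM)^{\SM}$, the first-order optimality condition for the projection yields
\[
\langle \pi_{t+1} - \pi_t - \eta_\pi \widetilde{\nabla} J_\nu(\pi_t,\lambda),\, \pi_{t+1}-\pi_t\rangle \;\le\; 0,
\]
which rearranges to $\langle \widetilde{\nabla} J_\nu(\pi_t,\lambda),\, \pi_{t+1}-\pi_t\rangle \ge \tfrac{1}{\eta_\pi}\|\pi_{t+1}-\pi_t\|_2^2$.

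The key bridging step is the identity
\[
\langle \nabla J_\nu(\pi_t,\lambda),\, \pi_{t+1}-\pi_t\rangle = \langle \widetilde{\nabla} J_\nu(\pi_t,\lambda),\, \pi_{t+1}-\pi_t\rangle + \varepsilon_t,
\]
by the definition of $\varepsilon_t$. Substituting this into the smoothness inequality and using the projection bound, the right-hand side becomes $\tfrac{1}{\eta_\pi}\|\pi_{t+1}-\pi_t\|_2^2 + \varepsilon_t - \tfrac{L_\lambda}{2}\|\pi_{t+1}-\pi_t\|_2^2$, which simplifies to $\tfrac{2-\eta_\pi L_\lambda}{2\eta_\pi}\|\pi_{t+1}-\pi_t\|_2^2 + \varepsilon_t$, matching the claimed bound.

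There is no genuine obstacle here; the argument is a one-line modification of Lemma~\ref{lem:increase}, and the only item worth flagging is that $\varepsilon_t$ is retained as a signed quantity (it may be negative) so the inequality is stated without absolute values, consistent with how it will later be summed over $t$ and controlled in expectation or in high probability using concentration bounds on the sampled gradient. All other ingredients ($L_\lambda$-smoothness, projection non-expansiveness as encoded in the variational inequality) have already been verified earlier in the appendix.
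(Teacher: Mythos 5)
Your proof is correct and follows essentially the same route as the paper: the $L_\lambda$-smoothness inequality, the projection's first-order optimality condition with the estimated gradient, and the identity $\langle \nabla J_\nu(\pi_t,\lambda), \pi_{t+1}-\pi_t\rangle = \langle \widetilde{\nabla} J_\nu(\pi_t,\lambda), \pi_{t+1}-\pi_t\rangle + \varepsilon_t$, which is exactly how the paper absorbs the gradient-estimation error. Your remark that $\varepsilon_t$ is kept as a signed quantity is consistent with how the paper later controls it via concentration.
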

\begin{proof}
    By L-smooth, we have:
    \begin{align}
        J_\nu(\pi_{t+1},\lambda)\ge J_\nu(\pi_t,\lambda)+\langle\widetilde{\nabla}J_\nu(\pi_t,\lambda),\pi_{t+1}-\pi_t\rangle-\frac{L}{2}\|\pi_{t+1}-\pi_t\|_{2}^{2}+\varepsilon_t
    \end{align}
    By first-order condition, we have:
    \begin{align}
        \langle\pi_{t+1}-\pi_t-\eta\widetilde{\nabla}J_\nu(\pi_t,\lambda),\pi_{t+1}-\pi_t\rangle\le0
    \end{align}
    Combining these two together, we obtain desired result.
\end{proof}

It's not easy to obtain exact knowledge of $d_{\pi,\nu}$. But, we can obtain an another  $d_{\pi,\nu,K}$ to approximate $d_{\pi,\nu}$. According to \citet{kakade2002approximately, shani2019adaptive}, we draw a start state $s$ from $\nu(s)$, and accept it as initial state for trajectory simulation with probabilty $1-\gamma$. Otherwise, we transits it to next state with probability $\gamma$. We repeat the process until an acceptance is made within time $K$. In mathematical form, $d_{\pi,\nu,K}(s)=(1-\gamma)\nu(s)+(1-\gamma)\sum_{t=1}^{K-1}\gamma^t P(s_t=s|\nu, \pi)+\gamma^K P(s_T=s|\nu,\pi)$.

From a state-action pair $(s_0,a_0)\sim d_{\pi,\nu, K}\times\mathcal{U}(\AM)$, we can simulate a truncated trajectory $(s_0, a_0, s_1, a_1, ..., s_{K-1}, a_{K-1})$ with given policy $\pi$. Denote $\widehat{Q}_\lambda^{\pi}(s_0, a_0)=R(s_0, a_0)+\sum_{t=1}^{K-1}\gamma^t(R(s_t, a_t)-\lambda\Omega(\pi(\cdot|s_t)))$, which is an unbiased estimator of 
\begin{align}
    \overline{Q}_\lambda^{\pi}(s_0, a_0)=\mathbb{E}_{\pi,\mathbb{P}}\left[R(s_0,a_0)+\sum_{t=1}^{K-1}\gamma^t(R(s_t, a_t)-\lambda\Omega(\pi(\cdot|s_t)))\right]
\end{align}
Note that $\overline{Q}_\lambda^\pi$ could be close enough with $Q_\lambda^{\pi}$ when $K$ is sufficiently large. Thus, we can derive an estimator of $\nabla_{s,a} J_\nu(\pi,\lambda)$ , with N independent trajectories, as:
\begin{align}
    \widetilde{\nabla}_{s,a}J_\nu(\pi,\lambda)=\frac{|\AM|}{1-\gamma}\frac{1}{N}\sum_{n=1}^{N}\left[\widehat{Q}_\lambda^{\pi}(s_{0,n}, a_{0,n})-\lambda\nabla_{s,a}\Omega(\pi(\cdot|s_{0,n}))\right]I\{s_{0,n}=s, a_{0,n}=a\}:\overset{\Delta}{=}\frac{1}{N}\sum_{n=1}^{N}\widehat{X}_n(s,a)
\end{align}

\begin{lem}
\label{lem:trunc_hoeffding}
For fixed $\widetilde{\pi}$, we have:
\begin{align}
        P\left(\left|\langle\widetilde{\nabla}J_\nu(\pi,\lambda)-\mathbb{E}\widetilde{\nabla}J_\nu(\pi,\lambda), \widetilde{\pi}-\pi\rangle\right|\ge\varepsilon\right)\le2\exp\left(-\frac{N\varepsilon^2(1-\gamma)^4}{2|\AM|^2(1+\lambda\widetilde{C}_\Phi)^2}\right)    
\end{align}
where $\mathbb{E}\widetilde{\nabla}_{s,a}J_\nu(\pi,\lambda)=\frac{1}{1-\gamma}d_{\pi,\nu,K}(s)\left(\overline{Q}_\lambda^{\pi}(s,a)-\lambda\nabla_{s,a}\Omega(\pi(\cdot|s))\right)$.
\end{lem}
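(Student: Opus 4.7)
The plan is to recast the inner product as an empirical average of \emph{i.i.d.} bounded scalars and then invoke Hoeffding's inequality. Writing the estimator as $\widetilde{\nabla}_{s,a}J_\nu(\pi,\lambda)=\frac{1}{N}\sum_{n=1}^{N}\widehat{X}_n(s,a)$, I would define
\[
Y_n \;:=\; \langle \widehat{X}_n,\; \widetilde{\pi}-\pi\rangle \;=\; \sum_{s,a}\widehat{X}_n(s,a)\bigl(\widetilde{\pi}(a|s)-\pi(a|s)\bigr).
\]
The crucial observation is that $\widehat{X}_n(s,a)$ is supported only on the single pair $(s_{0,n},a_{0,n})$ because of the indicator $I\{s_{0,n}=s,a_{0,n}=a\}$, so $Y_n$ collapses to a single-term expression involving only the truncated return $\widehat{Q}_\lambda^\pi(s_{0,n},a_{0,n})$, the regularizer gradient at $s_{0,n}$, and the coordinate $\widetilde{\pi}(a_{0,n}|s_{0,n})-\pi(a_{0,n}|s_{0,n})$. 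Linearity of expectation then gives $\mathbb{E}[Y_n]=\langle\mathbb{E}\widetilde{\nabla}J_\nu(\pi,\lambda),\widetilde{\pi}-\pi\rangle$, so that $\frac{1}{N}\sum_n Y_n - \mathbb{E}Y_1$ is precisely the deviation we want to control.

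Next I would obtain an almost-sure bound on $|Y_n|$. The truncated return satisfies $|\widehat{Q}_\lambda^\pi(s_0,a_0)|\le\sum_{t=0}^{K-1}\gamma^t(1+\lambda C_\Phi)\le(1+\lambda C_\Phi)/(1-\gamma)$ since rewards are in $[0,1]$ and $\Omega\le C_\Phi$. The regularizer-gradient term is bounded by $\lambda\|\nabla\Omega\|_\infty$. Combining these, absorbing both constants (and the $C_\Phi$ bound) into $\widetilde{C}_\Phi$, and using $|\widetilde{\pi}(a|s)-\pi(a|s)|\le1$ together with the prefactor $|\AM|/(1-\gamma)$, I get
\[
|Y_n| \;\le\; M \;:=\; \frac{|\AM|\bigl(1+\lambda\widetilde{C}_\Phi\bigr)}{(1-\gamma)^2}.
\]

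Because the trajectories are independent, the $Y_n$ are i.i.d.\ random variables bounded in $[-M,M]$. Hoeffding's inequality therefore yields
\[
\mathbb{P}\!\left(\left|\tfrac{1}{N}\sum_{n=1}^{N}Y_n - \mathbb{E}Y_1\right|\ge\varepsilon\right)
\;\le\; 2\exp\!\left(-\frac{2N\varepsilon^2}{(2M)^2}\right)
\;=\; 2\exp\!\left(-\frac{N\varepsilon^2(1-\gamma)^4}{2|\AM|^2(1+\lambda\widetilde{C}_\Phi)^2}\right),
\]
which is exactly the stated bound.

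The only delicate point is the boundedness step, where one must be careful to package the bounds on $\Omega$ and $\nabla\Omega$ into a single constant $\widetilde{C}_\Phi$ so that the sum $\frac{1+\lambda C_\Phi}{1-\gamma}+\lambda\|\nabla\Omega\|_\infty$ can be written as $(1+\lambda\widetilde{C}_\Phi)/(1-\gamma)$; this is the same $\widetilde{C}_\Phi$ convention already used in Theorem \ref{thm:pgared} and Section \ref{prf:thmpgared}, so no new estimate is needed. Everything else is a routine application of Hoeffding to an i.i.d.\ scalar sum, with no conditioning issues since the $N$ trajectories are independent and $\widetilde{\pi}$ is fixed (hence nonrandom) by hypothesis.
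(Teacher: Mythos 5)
Your proposal is correct and follows essentially the same route as the paper: rewrite the inner product as an i.i.d.\ average of the bounded scalars $Y_n=\langle\widehat{X}_n,\widetilde{\pi}-\pi\rangle$, bound $|Y_n|$ using the indicator structure and the $\widetilde{C}_\Phi$ convention, and apply Hoeffding. The only (harmless) difference is that you exploit the fact that the indicator selects a single $(s_{0,n},a_{0,n})$ coordinate, giving $|Y_n|\le |\AM|(1+\lambda\widetilde{C}_\Phi)/(1-\gamma)^2$, a factor of $2$ tighter than the paper's bound via $\sum_a|\widetilde{\pi}(a|s)-\pi(a|s)|\le 2$, which in fact makes the stated exponent come out exactly from the two-sided Hoeffding inequality.
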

\begin{proof}
    First of all, we proof that $\langle\widehat{X}_n,\widetilde{\pi}-\pi \rangle$ is bounded:
    \begin{align}
        |\langle\widehat{X}_n,\widetilde{\pi}-\pi \rangle|&=\left|\sum_{s\in\SM}\sum_{a\in\AM}\widehat{X}_{n}(s,a)(\widetilde{\pi}(a|s)-\pi(a|s))\right|\notag\\
        &\le\frac{|\AM|}{1-\gamma}\sum_{s\in\SM}\sum_{a\in\AM}\|\widehat{Q}_\lambda^\pi-\lambda\nabla\Omega(\pi)\|_{\infty}|\widetilde{\pi}(a|s)-\pi(a|s)|I\{s_{0,k}=s, a_{0,k}=a\}\notag\\
        &\le\frac{|\AM|}{1-\gamma}\|\widehat{Q}_\lambda^\pi-\lambda\nabla\Omega(\pi)\|_{\infty}\sum_{s\in\SM}\sum_{a\in\AM}|\widetilde{\pi}(a|s)-\pi(a|s)|I\{s_{0,k}=s\}\notag\\
        &\le\frac{2|\AM|(1+\lambda\widetilde{\Cphi})}{(1-\gamma)^2}
    \end{align}
    By Hoeffding's inequality, we have:
    \begin{align}
        P\left(\left|\langle\widetilde{\nabla}J_\nu(\pi,\lambda)-\mathbb{E}\widetilde{\nabla}J_\nu(\pi,\lambda), \widetilde{\pi}-\pi\rangle\right|\ge\varepsilon\right)\le2\exp\left(-\frac{N\varepsilon^2(1-\gamma)^4}{2|\AM|^2(1+\lambda\widetilde{C}_\Phi)^2}\right)
    \end{align}
\end{proof}

\begin{lem}
\label{lem:trunc_bias}
For any $\pi,\widetilde{\pi}$, we have:
\begin{align}
    \left|\langle\nabla J_\nu(\pi,\lambda)-\mathbb{E}\widetilde{\nabla}J_\nu(\pi,\lambda), \widetilde{\pi}-\pi\rangle\right|\le\frac{6\gamma^K(1+\lambda\Cphi)}{(1-\gamma)^2}
\end{align}
\end{lem}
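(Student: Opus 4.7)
The plan is to split $\nabla J_\nu(\pi,\lambda) - \mathbb{E}\widetilde{\nabla} J_\nu(\pi,\lambda)$ into a ``state-distribution tail'' and a ``$Q$-function tail,'' both decaying like $\gamma^K$. Writing $\delta_d(s) := d_{\pi,\nu}(s) - d_{\pi,\nu,K}(s)$ and $\delta_Q(s,a) := Q_\lambda^\pi(s,a) - \overline{Q}_\lambda^\pi(s,a)$, and adding/subtracting $d_{\pi,\nu,K}(s)\bigl(Q_\lambda^\pi(s,a) - \lambda \nabla_{s,a}\Omega(\pi(\cdot|s))\bigr)$, the common $\lambda\nabla\Omega$ factor (which is identical in the two gradient expressions) cancels inside the $Q$-difference and one obtains
\begin{align*}
(1-\gamma)\bigl[\nabla_{s,a} J_\nu(\pi,\lambda) - \mathbb{E}\widetilde{\nabla}_{s,a} J_\nu(\pi,\lambda)\bigr]
= \delta_d(s)\bigl(Q_\lambda^\pi(s,a) - \lambda \nabla_{s,a}\Omega(\pi(\cdot|s))\bigr) + d_{\pi,\nu,K}(s)\,\delta_Q(s,a).
\end{align*}

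Next I would estimate the two gaps in the right norms. Unrolling the definition of $d_{\pi,\nu,K}$ gives $\delta_d(s) = (1-\gamma)\sum_{t=K}^\infty \gamma^t P(s_t = s \mid \nu,\pi) - \gamma^K P(s_K = s \mid \nu,\pi)$, whose $\ell^1$-norm is at most $(1-\gamma)\sum_{t=K}^\infty \gamma^t + \gamma^K = 2\gamma^K$. Similarly, writing out the truncation $\delta_Q(s,a) = \mathbb{E}_{\tau \sim \pi}\!\bigl[\sum_{t=K}^\infty \gamma^t\bigl(r(s_t,a_t) - \lambda\Omega(\pi(\cdot|s_t))\bigr) \bigm| s_0 = s, a_0 = a\bigr]$ and using the pointwise bound $|r - \lambda\Omega| \le 1 + \lambda\Cphi$ yields $\|\delta_Q\|_\infty \le \gamma^K(1+\lambda\Cphi)/(1-\gamma)$.

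Then I would take the inner product with $\widetilde\pi - \pi$ and apply H\"older per state. Because $\sum_a (\widetilde\pi(a|s) - \pi(a|s)) = 0$, I can center $Q_\lambda^\pi(s,\cdot)$ by any constant before applying H\"older, yielding the tight state-wise bound $|\langle Q_\lambda^\pi(s,\cdot), \widetilde\pi - \pi\rangle| \le \max_a Q_\lambda^\pi(s,a) - \min_a Q_\lambda^\pi(s,a) \le (1+\lambda\Cphi)/(1-\gamma)$. Combining this with the convexity-based bound $|\lambda\langle \nabla_{s,\cdot}\Omega(\pi(\cdot|s)), \widetilde\pi - \pi\rangle| \le \lambda\Cphi$ (the subgradient inequality $\langle\nabla\Omega(\pi),\widetilde\pi-\pi\rangle \le \Omega(\widetilde\pi) - \Omega(\pi) \le \Cphi$ on one side, and the reverse inequality at $\widetilde\pi$ giving the matching lower bound) gives $|\langle Q_\lambda^\pi - \lambda\nabla\Omega, \widetilde\pi - \pi\rangle| \le 2(1+\lambda\Cphi)/(1-\gamma)$, so the first term contributes at most $\|\delta_d\|_1 \cdot 2(1+\lambda\Cphi)/[(1-\gamma)^2] \le 4\gamma^K(1+\lambda\Cphi)/(1-\gamma)^2$. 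The second term contributes at most $\sum_s d_{\pi,\nu,K}(s)\cdot 2\|\delta_Q\|_\infty/(1-\gamma) \le 2\gamma^K(1+\lambda\Cphi)/(1-\gamma)^2$, since $d_{\pi,\nu,K}$ is a probability distribution. Summing these two contributions gives the claimed $6\gamma^K(1+\lambda\Cphi)/(1-\gamma)^2$.

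The one delicate step is the two-sided control of $\langle \nabla\Omega(\pi),\widetilde\pi - \pi\rangle$: convexity gives the upper side for free via the subgradient inequality, but the lower side amounts to bounding the Bregman divergence $D_\Omega(\widetilde\pi \| \pi) = \Omega(\widetilde\pi) - \Omega(\pi) - \langle\nabla\Omega(\pi),\widetilde\pi-\pi\rangle$, which can in principle blow up near the boundary of $\Delta(\AM)$ for non-smooth $\Omega$ (e.g.\ negative entropy). Under the regularity conditions implicit elsewhere in the projected-gradient analysis (bounded $\nabla\Omega$, or equivalently a uniform Bregman bound on the simplex) this is absorbed into a multiple of $\Cphi$; otherwise the proof naturally picks up an additional $\Omega$-smoothness constant.
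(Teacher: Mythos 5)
Your proposal is correct in structure and recovers the stated constant, but it groups the error differently from the paper and is, in one respect, more careful. The paper splits $\langle\nabla J_\nu-\mathbb{E}\widetilde{\nabla}J_\nu,\widetilde{\pi}-\pi\rangle$ into $I_1=\frac{1}{1-\gamma}\sum_s d_{\pi,\nu}(s)\langle Q_\lambda^\pi(s,\cdot)-\overline{Q}_\lambda^\pi(s,\cdot),\widetilde{\pi}(\cdot|s)-\pi(\cdot|s)\rangle$ and $I_2=\frac{1}{1-\gamma}\sum_s\bigl(d_{\pi,\nu}(s)-d_{\pi,\nu,K}(s)\bigr)\langle\overline{Q}_\lambda^\pi(s,\cdot),\widetilde{\pi}(\cdot|s)-\pi(\cdot|s)\rangle$, bounding $I_1\le\frac{2}{1-\gamma}\|Q_\lambda^\pi-\overline{Q}_\lambda^\pi\|_\infty\le\frac{2\gamma^K(1+\lambda\Cphi)}{(1-\gamma)^2}$ via $\|\widetilde{\pi}(\cdot|s)-\pi(\cdot|s)\|_1\le 2$, and $I_2\le\frac{2}{1-\gamma}\|d_{\pi,\nu}-d_{\pi,\nu,K}\|_1\|\overline{Q}_\lambda^\pi\|_\infty\le\frac{4\gamma^K(1+\lambda\Cphi)}{(1-\gamma)^2}$, summing to $6$. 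You instead attach $Q_\lambda^\pi-\lambda\nabla\Omega$ to the visitation gap $\delta_d$ and $\delta_Q$ to $d_{\pi,\nu,K}$; your two tail estimates ($\|\delta_d\|_1\le 2\gamma^K$ and $\|\delta_Q\|_\infty\le\gamma^K(1+\lambda\Cphi)/(1-\gamma)$) coincide with the paper's, and your per-state centering of $Q_\lambda^\pi$ (exploiting $\sum_a(\widetilde{\pi}-\pi)(a|s)=0$) is what lets you afford the extra $\lambda\nabla\Omega$ factor and still land on the constant $6$.

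The delicate step you flag is real and you diagnose it correctly: a two-sided bound $|\langle\nabla\Omega(\pi(\cdot|s)),\widetilde{\pi}(\cdot|s)-\pi(\cdot|s)\rangle|\le\Cphi$ does not follow from convexity together with $0\le\Omega\le\Cphi$ alone (the lower side is exactly a Bregman term, unbounded for, e.g., entropic $\Omega$ near the boundary); one needs the bound $\|\nabla\Omega\|_\infty\le\Cphi^{(1)}$ that the paper assumes elsewhere in this appendix, at the price of replacing $(1+\lambda\Cphi)$ by a $(1+\lambda\widetilde{C}_\Phi)$-type constant. It is worth noting that the paper's own proof sidesteps this only by silently dropping the cross term $-\frac{\lambda}{1-\gamma}\sum_s\bigl(d_{\pi,\nu}(s)-d_{\pi,\nu,K}(s)\bigr)\langle\nabla\Omega(\pi(\cdot|s)),\widetilde{\pi}(\cdot|s)-\pi(\cdot|s)\rangle$: its pieces $I_1$ and $I_2$ do not add up to the full difference between $\nabla J_\nu$ and $\mathbb{E}\widetilde{\nabla}J_\nu$ as defined in the preceding lemma. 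So your decomposition is the complete one, and the honest conclusion in both arguments is that the bound holds with $\Cphi$ replaced by a constant also controlling $\|\nabla\Omega\|_\infty$, consistent with the $(1+\lambda\widetilde{C}_\Phi)$ factors the paper carries in the surrounding sampling results.
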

\begin{proof}
    By definition, we decompose the error $\left|\langle\nabla J_\nu(\pi,\lambda)-\mathbb{E}\widetilde{\nabla}J_\nu(\pi,\lambda), \widetilde{\pi}-\pi\rangle\right|$ into two parts $I_1,I_2$:
    \begin{align}
        &I_1=\left|\sum_{s\in\SM}\frac{1}{1-\gamma}d_{\pi,\nu}(s)\langle Q_\lambda^\pi(s,\cdot)-\overline{Q}_\lambda^{\pi}(s,\cdot), \widetilde{\pi}(\cdot|s)-\pi(\cdot|s)\rangle\right|\notag\\
        &I_2=\left|\sum_{s\in\SM}\frac{1}{1-\gamma}\left(d_{\pi,\nu}(s)-d_{\pi,\nu,T}(s)\right)\langle\overline{Q}_\lambda^\pi(s,\cdot), \widetilde{\pi}(\cdot|s)-\pi(\cdot|s)\rangle\right|\notag
    \end{align}
    For $I_1$, we have:
    \begin{align}
        I_1&\le\sum_{s\in\SM}\frac{1}{1-\gamma}d_{\pi,\nu}(s)\left|\langle\overline{Q}_\lambda^\pi(s,\cdot)-Q_\lambda^{\pi}(s,\cdot), \widetilde{\pi}(\cdot|s)-\pi(\cdot|s)\rangle\right|\notag\\
        &\le\frac{1}{1-\gamma}\sum_{s\in\SM}d_{\pi,\nu}(s)\left\|\overline{Q}_{\lambda}^{\pi}(s,\cdot)-Q_{\lambda}^{\pi}(s,\cdot)\right\|_{\infty}\left\|\widetilde{\pi}(\cdot|s)-\pi(\cdot|s)\right\|_{1}\notag\\
        &\le\frac{2}{1-\gamma}\left\|\overline{Q}_\lambda^\pi-Q_\lambda^\pi\right\|_{\infty}\notag\\
        &\le\frac{2\gamma^K(1+\lambda\Cphi)}{(1-\gamma)^2}
    \end{align}
    For $I_2$, we mainly concern the difference between $d_{\pi,\nu}$ and $d_{\pi,\nu,T}$:
    \begin{align}
        \sum_{s\in\SM}\left|d_{\pi,\nu}(s)-d_{\pi,\nu,K}(s)\right|&=\sum_{s\in\SM}\left|(1-\gamma)\sum_{t=K}^{\infty}\gamma^t P(s_t=s|\nu,\pi)-\gamma^K P(s_K=s|\nu,\pi)\right|\notag\\
        &\le(1-\gamma)\sum_{s\in\SM}\sum_{t=K}^{\infty}\gamma^t P(s_t=s|\nu,\pi)+\gamma^K\sum_{s\in\SM}P(s_K=s|\nu,\pi)\notag\\
        &=2\gamma^K
    \end{align}
    Thus, we have:
    \begin{align}
        I_2&\le\sum_{s\in\SM}\frac{1}{1-\gamma}\left|d_{\pi,\nu}(s)-d_{\pi,\nu,T}(s)\right|\left|\langle\overline{Q}_{\lambda}^{\pi}(s,\cdot),\widetilde{\pi}(\cdot|s)-\pi(\cdot|s)\rangle\right|\notag\\
        &\le\frac{2}{1-\gamma}\sum_{s\in\SM}\left|d_{\pi,\nu}(s)-d_{\pi,\nu,T}(s)\right|\|\overline{Q}_{\lambda}^{\pi}\|_{\infty}\notag\\
        &\le\frac{4\gamma^K(1+\lambda\Cphi)}{(1-\gamma)^2}
    \end{align}
    Combining $I_1$, $I_2$ together, we have:
    \begin{align}
        \left|\langle\nabla J_\nu(\pi,\lambda)-\mathbb{E}\widetilde{\nabla}J_\nu(\pi,\lambda), \widetilde{\pi}-\pi\rangle\right|\le\frac{6\gamma^K(1+\lambda\Cphi)}{(1-\gamma)^2}
    \end{align}
\end{proof}

\begin{lem}
\label{lem:err_t}
Fixing randomness before $\pi_t$ and letting $K=\log\frac{12(1+\lambda\Cphi)}{\varepsilon(1-\gamma)^2}/\log\frac{1}{\gamma}$, with probability $1-\delta$ and trajectory sample size $N=\frac{8|\AM|^2(1+\lambda\widetilde{C}_\Phi)^2}{\varepsilon^2(1-\gamma)^4}\left(|\SM|\log|\AM|+\log\frac{2}{\delta}\right)$, we have:
\begin{align}
|\varepsilon_t|\le\varepsilon
\end{align}
\end{lem}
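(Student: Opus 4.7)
The plan is to split $\varepsilon_t$ into a deterministic truncation-bias contribution and a mean-zero stochastic contribution by inserting $\mathbb{E}\widetilde{\nabla}J_\nu(\pi_t,\lambda)$, and then to bound each piece by $\varepsilon/2$. Concretely, I would write
\[
\varepsilon_t \;=\; \langle \nabla J_\nu(\pi_t,\lambda)-\mathbb{E}\widetilde{\nabla}J_\nu(\pi_t,\lambda),\,\pi_{t+1}-\pi_t\rangle + \langle \mathbb{E}\widetilde{\nabla}J_\nu(\pi_t,\lambda)-\widetilde{\nabla}J_\nu(\pi_t,\lambda),\,\pi_{t+1}-\pi_t\rangle,
\]
and treat the two summands with different tools. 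The first (bias) term is handled deterministically by Lemma~\ref{lem:trunc_bias} applied with $\widetilde{\pi}=\pi_{t+1}$: the prescribed horizon $K=\log(12(1+\lambda\Cphi)/[\varepsilon(1-\gamma)^2])/\log(1/\gamma)$ is calibrated precisely so that $6\gamma^K(1+\lambda\Cphi)/(1-\gamma)^2=\varepsilon/2$, which disposes of this term regardless of any randomness.

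The hard part will be the stochastic term, because $\pi_{t+1}$ is produced by the projected-gradient step that itself uses $\widetilde{\nabla}J_\nu(\pi_t,\lambda)$; in particular, $\pi_{t+1}$ is coupled to the sampling noise, so Lemma~\ref{lem:trunc_hoeffding}, which requires a deterministic test direction, cannot be invoked with $\widetilde{\pi}=\pi_{t+1}$ directly. The key observation I plan to exploit is that the functional $\widetilde{\pi}\mapsto\langle \mathbb{E}\widetilde{\nabla}J_\nu(\pi_t,\lambda)-\widetilde{\nabla}J_\nu(\pi_t,\lambda),\widetilde{\pi}-\pi_t\rangle$ is linear in $\widetilde{\pi}$, and $\Delta(\AM)^{\SM}$ is a polytope whose vertices are precisely the $|\AM|^{|\SM|}$ deterministic policies; hence the supremum of this linear functional over $\Delta(\AM)^{\SM}$ is attained at one of these vertices. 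Since $\pi_{t+1}\in\Delta(\AM)^{\SM}$, bounding the functional uniformly over all deterministic policies suffices to control it at the particular random $\pi_{t+1}$.

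To finish, I would apply Lemma~\ref{lem:trunc_hoeffding} with target accuracy $\varepsilon/2$ to each of the $|\AM|^{|\SM|}$ deterministic policies and take a union bound; the resulting failure probability is at most $2|\AM|^{|\SM|}\exp(-N\varepsilon^2(1-\gamma)^4/[8|\AM|^2(1+\lambda\widetilde{C}_\Phi)^2])$, and requiring this to be at most $\delta$ gives exactly the stated sample size $N=\frac{8|\AM|^2(1+\lambda\widetilde{C}_\Phi)^2}{\varepsilon^2(1-\gamma)^4}(|\SM|\log|\AM|+\log(2/\delta))$, where the $|\SM|\log|\AM|$ factor is the logarithm of the vertex count. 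Combining the deterministic $\varepsilon/2$ bias bound with the $\varepsilon/2$ stochastic bound on this good event yields $|\varepsilon_t|\le\varepsilon$ with probability at least $1-\delta$, as claimed.
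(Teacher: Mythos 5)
Your proposal is correct and follows essentially the same route as the paper's proof: the bias part is killed by Lemma~\ref{lem:trunc_bias} with the prescribed $K$ giving exactly $\varepsilon/2$, and the noise part is controlled via Lemma~\ref{lem:trunc_hoeffding} together with a union bound over the $|\AM|^{|\SM|}$ deterministic policies (vertices of $\Delta(\AM)^{\SM}$), yielding the same sample size $N$. The only difference is cosmetic ordering (you split bias/noise before reducing to vertices, the paper reduces to vertices first), and your explicit handling of the dependence of $\pi_{t+1}$ on the sampling noise makes the measurability point the paper treats tersely fully precise.
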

\begin{proof}
    By definition of $\varepsilon_t$, we have:
    \begin{align}
        P\left(|\varepsilon_t|>\varepsilon\right)&=P\left(\left|\langle\nabla J_\nu(\pi_t,\lambda)-\widetilde{\nabla}J_\nu(\pi_t,\lambda), \pi_{t+1}-\pi_t\rangle\right|>\varepsilon\right)\notag\\
        &\le P\left(\max_{\widetilde{\pi}}\left|\langle\nabla J_\nu(\pi_t,\lambda)-\widetilde{\nabla}J_\nu(\pi_t,\lambda), \widetilde{\pi}-\pi_t\rangle\right|>\varepsilon\right)\notag\\
        &\le |\AM|^{|\SM|}P\left(\left|\langle\nabla J_\nu(\pi_t,\lambda)-\widetilde{\nabla}J_\nu(\pi_t,\lambda), \widetilde{\pi}-\pi_t\right\rangle|>\varepsilon\right)
    \end{align}
    where the last inequality holds that $\widetilde{\pi}$ is the maximum point of a linear function, which is deterministic, and the number of deterministic policies are $|\AM|^{|\SM|}$. Thus, we continue to bound the following term:
    \begin{align}
        &P\left(\left|\langle\nabla J_\nu(\pi_t,\lambda)-\widetilde{\nabla}J_\nu(\pi_t,\lambda), \widetilde{\pi}-\pi_t\right\rangle|>\varepsilon\right)\notag\\
        =&P\left(\left|\langle\nabla J_\nu(\pi_t,\lambda)-\mathbb{E}\widetilde{\nabla}J_\nu(\pi_t,\lambda)+\mathbb{E}\widetilde{\nabla}J_\nu(\pi_t,\lambda)-\widetilde{\nabla}J_\nu(\pi_t,\lambda),\widetilde{\pi}-\pi_t\rangle\right|>\varepsilon\right)\notag\\
        \le &P\left(\left|\langle\nabla J_\nu(\pi_t,\lambda)-\mathbb{E}\widetilde{\nabla}J_\nu(\pi_t,\lambda), \widetilde{\pi}-\pi_t\rangle\right|+\left|\langle\mathbb{E}\widetilde{\nabla}J_\nu(\pi_t,\lambda)-\widetilde{\nabla}J_\nu(\pi_t,\lambda), \widetilde{\pi}-\pi_t\rangle\right|>\varepsilon\right)
    \end{align}
    By setting $K=\log\frac{12(1+\lambda\Cphi)}{\varepsilon(1-\gamma)^2}/\log\frac{1}{\gamma}$ in Lemma~\ref{lem:trunc_bias}, we have:
    \begin{align}
        P\left(\left|\langle\nabla J_\nu(\pi_t,\lambda)-\widetilde{\nabla}J_\nu(\pi_t,\lambda), \widetilde{\pi}-\pi_t\right\rangle|>\varepsilon\right)&\le P\left(\left|\langle\mathbb{E}\widetilde{\nabla} J_\nu(\pi_t,\lambda)-\widetilde{\nabla}J_\nu(\pi_t,\lambda), \widetilde{\pi}-\pi_t\right\rangle|>\frac{\varepsilon}{2}\right)\notag\\
        &\le2\exp\left(-\frac{N\varepsilon^2(1-\gamma)^4}{8|\AM|^2(1+\lambda\widetilde{C}_\Phi)^2}\right)    
    \end{align}
    where the last inequality follows from Lemma~\ref{lem:trunc_hoeffding}. Combining all these together and taking sample size $N=\frac{8|\AM|^2(1+\lambda\widetilde{C}_\Phi)^2}{\varepsilon^2(1-\gamma)^4}\left(|\SM|\log|\AM|+\log\frac{2}{\delta}\right)$, we obtain:
    \begin{align}
        P\left(|\varepsilon_t|>\varepsilon\right)\le\delta
    \end{align}
\end{proof}

\begin{lem}
Taking $\eta=1/L_\lambda$, with probability $1-\delta$ and the number of trajectories from $t=0,...,T-1$ being $\frac{8|\AM|^2(1+\lambda\widetilde{C}_\Phi)^2 T}{\varepsilon^2(1-\gamma)^4}\left(|\SM|\log|\AM|+\log\frac{2T}{\delta}\right)$, we have:
\begin{align}
    \min_{t=0,...,T-1}\|G(\pi_t,\lambda )\|_{2}^{2}\le\frac{2L_{\lambda}(J_\nu(\pi_\lambda^*,\lambda)-J_\nu(\pi_0, \lambda))}{T}+\varepsilon
\end{align}
where $G(\pi_t,\lambda)=(\pi_{t+1}-\pi_t)/\eta$.
\end{lem}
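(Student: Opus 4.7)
The plan is to combine Lemma~\ref{lem:increase_sample} with a telescoping argument and then control the stochastic error terms $\varepsilon_t$ via a union-bounded application of Lemma~\ref{lem:err_t}. First I would specialize Lemma~\ref{lem:increase_sample} to $\eta=1/L_\lambda$, which makes the quadratic prefactor $(2-\eta L_\lambda)/(2\eta)=L_\lambda/2$; using the identity $\|\pi_{t+1}-\pi_t\|_2=\|G(\pi_t,\lambda)\|_2/L_\lambda$ from the definition of $G$, this gives the per-step improvement
\[
J_\nu(\pi_{t+1},\lambda)-J_\nu(\pi_t,\lambda)\;\ge\;\tfrac{1}{2L_\lambda}\|G(\pi_t,\lambda)\|_2^2+\varepsilon_t.
\]

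Next I would telescope this inequality from $t=0$ to $T-1$ and bound the left-hand side by the suboptimality gap using $J_\nu(\pi_T,\lambda)\le J_\nu(\pi_\lambda^*,\lambda)$:
\[
\tfrac{1}{2L_\lambda}\sum_{t=0}^{T-1}\|G(\pi_t,\lambda)\|_2^2\;\le\;J_\nu(\pi_\lambda^*,\lambda)-J_\nu(\pi_0,\lambda)-\sum_{t=0}^{T-1}\varepsilon_t.
\]
Bounding $T\min_t\|G(\pi_t,\lambda)\|_2^2$ by the sum on the left and moving the noise term to the right gives
\[
\min_{t}\|G(\pi_t,\lambda)\|_2^2\;\le\;\tfrac{2L_\lambda\bigl(J_\nu(\pi_\lambda^*,\lambda)-J_\nu(\pi_0,\lambda)\bigr)}{T}+\tfrac{2L_\lambda}{T}\sum_{t=0}^{T-1}|\varepsilon_t|.
\]

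To turn the residual into the advertised additive $\varepsilon$, I would invoke Lemma~\ref{lem:err_t} at each step with failure probability $\delta/T$ and with the per-step accuracy $\tilde{\varepsilon}$ chosen so that $2L_\lambda\tilde{\varepsilon}$ absorbs into the final slack $\varepsilon$ (after rescaling; the per-step sample size then becomes exactly $\tfrac{8|\AM|^2(1+\lambda\widetilde{C}_\Phi)^2}{\tilde{\varepsilon}^2(1-\gamma)^4}(|\SM|\log|\AM|+\log(2T/\delta))$, which totals to $T$ times this expression across the $T$ iterations, matching the stated count). A union bound over $t=0,\dots,T-1$ then ensures that $|\varepsilon_t|\le\tilde{\varepsilon}$ simultaneously for every step with probability $1-\delta$, at which point substituting back yields the claim.

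The one technical point that warrants care is the adaptive nature of the randomness: each iterate $\pi_t$ is itself random and depends on the trajectories drawn at steps $0,\dots,t-1$, so $\varepsilon_t$ is not a fixed quantity. This is handled cleanly because Lemma~\ref{lem:err_t} is proved by a covering argument over the $|\AM|^{|\SM|}$ deterministic policies (the sup is attained at a deterministic $\widetilde{\pi}$), so its bound holds uniformly over any realized value of $\pi_t$; conditioning on the filtration up to step $t$ and then union-bounding over $t$ is therefore legitimate, and this is really the only delicate step in an otherwise mechanical telescoping argument.
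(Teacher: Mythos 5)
Your proposal follows essentially the same route as the paper's own proof: specialize Lemma~\ref{lem:increase_sample} at $\eta=1/L_\lambda$, telescope, bound the minimum by the average, and control the $\varepsilon_t$ terms via Lemma~\ref{lem:err_t} applied with per-step failure probability $\delta/T$ (hence the $\log\frac{2T}{\delta}$), combined with conditioning on the filtration to handle the adaptivity of $\pi_t$. If anything you are slightly more careful than the paper, whose own proof silently drops the $2L_\lambda$ factor multiplying $\frac{1}{T}\sum_t\varepsilon_t$ in the telescoped bound and thus uses per-step accuracy $\varepsilon$ directly; your rescaling of the per-step accuracy is the honest fix, at the cost of only a constant-factor ($L_\lambda$-dependent) mismatch with the trajectory count as stated.
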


\begin{proof}
    By Lemma~\ref{lem:increase_sample}, we have:
    \begin{align}
        \min_{t=0,...,T-1}\|G(\pi_t,\lambda )\|_{2}^{2}\le\frac{2L_{\lambda}(J_\nu(\pi_\lambda^*,\lambda)-J_\nu(\pi_0, \lambda))}{T}-\frac{\sum_{t=0}^{T-1}\varepsilon_t}{T}
    \end{align}
    Besides, denote $\mathcal{F}_{t+1}=\sigma\left(\mathcal{F}_t\cup\sigma(\{X_{t,n}\}_{n=1}^{N})\right)$ and $\mathcal{F}_0$ contains all information before $\pi_0$, where $X_{n,t}$ are i.i.d random variables drawn from $\pi_t$ and environment. Thus, we have:
    \begin{align}
        P\left(\left|\frac{\sum_{t=0}^{T-1}\varepsilon_t}{T}\right|>\varepsilon\right)&\le\sum_{t=0}^{T-1}P\left(|\varepsilon_t|>\varepsilon\right)\notag\\
        &=\sum_{t=0}^{T-1}\mathbb{E}\left[P(|\varepsilon_t|>\varepsilon|\mathcal{F}_t)\right]\notag\\
        &\le\delta
    \end{align}
    where the last inequality follows from Lemma~\ref{lem:err_t} with number of trajectories being  $N_t=\frac{8|\AM|^2(1+\lambda\widetilde{C}_\Phi)^2}{\varepsilon^2(1-\gamma)^4}\left(|\SM|\log|\AM|+\log\frac{2T}{\delta}\right)$ at each step $t=0,...,T-1$.
\end{proof}

\begin{thm}
\label{thm:converge_sample}
Fixing $\lambda$ and taking $\eta=1/L_\lambda$, with probability $1-\delta$ and number of trajectories $\widetilde{O}\left(\frac{|\SM||\AM|^2(1+\lambda\widetilde{C}_\Phi)^2}{\varepsilon^2(1-\gamma)^4}\right)$ at each time-step $t$, we have:
\begin{align}
\min_{t=0,...,T-1}J_\nu(\pi_{\lambda}^*,\lambda)-J_\nu(\pi_{t+1},\lambda)\le4\rho_\nu\sqrt{|\SM|}\sqrt{\frac{L_{\lambda}(J(\pi_\lambda^*,\lambda)-J(\pi_0, \lambda))}{T}+\varepsilon}
\end{align}
In order to make the LHS equaling with $\varepsilon'$, the total iteration complexity could be $T=O\left(\frac{|\SM||\AM|\rho_\nu^2(1+\lambda\widetilde{C}_\Phi)^2}{\varepsilon'^2(1-\gamma)^4}\right)$ and the total number of sampled trajectories is $\widetilde{O}\left(\frac{|\SM|^4|\AM|^3\rho_\nu^6(1+\lambda\widetilde{C}_\Phi)^4}{\varepsilon'^6(1-\gamma)^{8}}\right)$
\end{thm}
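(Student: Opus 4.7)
The plan mirrors the proof of Theorem~\ref{thm:pga} (i.e.\ Theorem~\ref{thm:converge} in the appendix) but replaces every exact-gradient step with a sampled-gradient step and tracks the induced stochastic error through the same chain of inequalities.

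First, I would invoke the lemma stated immediately before Theorem~\ref{thm:converge_sample}, which under the truncation $K=\log\frac{12(1+\lambda\Cphi)}{\varepsilon(1-\gamma)^2}/\log\frac{1}{\gamma}$ and the per-iteration trajectory count $N=\widetilde{O}\bigl(\frac{|\AM|^2(1+\lambda\widetilde{C}_\Phi)^2}{\varepsilon^2(1-\gamma)^4}(|\SM|\log|\AM|+\log\frac{T}{\delta})\bigr)$ guarantees, with probability $1-\delta$,
\[
\min_{t=0,\dots,T-1}\|G(\pi_t,\lambda)\|_2^2 \;\le\; \frac{2L_\lambda\bigl(J_\nu(\pi_\lambda^*,\lambda)-J_\nu(\pi_0,\lambda)\bigr)}{T}+\varepsilon.
\]
Here $G(\pi_t,\lambda)=(\pi_t-\pi_{t+1})/\eta$ is the \emph{empirical} gradient mapping, so the bound controls the distance between consecutive iterates produced by the stochastic update.

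Second, at the index $t^*$ attaining this minimum I would convert the empirical gradient-mapping bound into a near-stationarity bound for the true gradient at $\pi_{t^*+1}$. The calculation of Lemma~\ref{lem:eps-stationary} goes through verbatim with $\nabla J_\nu$ replaced by $\widetilde{\nabla}J_\nu(\pi_{t^*},\lambda)$; the only extra step is to add back $\langle\delta,\,\nabla J_\nu(\pi_{t^*+1},\lambda)-\widetilde{\nabla}J_\nu(\pi_{t^*},\lambda)\rangle$, whose magnitude over the finite collection of deterministic $\delta$-directions is already controlled by Lemma~\ref{lem:trunc_hoeffding} and Lemma~\ref{lem:trunc_bias} at the same sample-size budget and can therefore be absorbed into the same $\varepsilon$. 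Chaining this with the advantage-decomposition bound from the proof of Theorem~\ref{thm:converge} (which uses $\rho_\nu$ from Assumption~\ref{ass:ubnu} and the norm inflation $\|\tilde\pi-\pi\|_2\le 2\sqrt{|\SM|}$) and using $\eta=1/L_\lambda$ so that $\eta L_\lambda+1=2$ yields the claimed inequality
\[
\min_{t=0,\dots,T-1}\bigl[J_\nu(\pi_\lambda^*,\lambda)-J_\nu(\pi_{t+1},\lambda)\bigr] \;\le\; 4\rho_\nu\sqrt{|\SM|}\sqrt{\frac{L_\lambda\bigl(J(\pi_\lambda^*,\lambda)-J(\pi_0,\lambda)\bigr)}{T}+\varepsilon}.
\]

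Third, for the advertised complexity I would balance $T$ and $\varepsilon$ so that each term inside the square root is $O\!\bigl(\varepsilon'^2/(\rho_\nu^2|\SM|)\bigr)$. Substituting the smoothness estimate $L_\lambda=O\!\bigl(|\AM|(1+\lambda\widetilde{C}_\Phi)/(1-\gamma)^3\bigr)$ and the initial value gap $O\!\bigl((1+\lambda\Cphi)/(1-\gamma)\bigr)$ produces the iteration count $T=O\!\bigl(\frac{|\SM||\AM|\rho_\nu^2(1+\lambda\widetilde{C}_\Phi)^2}{\varepsilon'^2(1-\gamma)^4}\bigr)$; setting $\varepsilon=\Theta(\varepsilon'^2/(\rho_\nu^2|\SM|))$ and multiplying $T$ by the per-iteration $N$ yields the total trajectory bound $\widetilde{O}\!\bigl(\frac{|\SM|^4|\AM|^3\rho_\nu^6(1+\lambda\widetilde{C}_\Phi)^4}{\varepsilon'^6(1-\gamma)^{8}}\bigr)$.

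The main obstacle is not any single inequality but the bookkeeping around the three simultaneous sources of error: (i) the $\delta$ failure probability, which I absorb via a union bound into the $\log(T/\delta)$ factor in $N$; (ii) the truncation bias $O(\gamma^K(1+\lambda\Cphi)/(1-\gamma)^2)$ from replacing $Q_\lambda^\pi$ by its $K$-step average, which I kill by choosing $K$ logarithmic in $1/\varepsilon$; and (iii) the deviation between $\widetilde{\nabla}J_\nu(\pi_{t^*},\lambda)$ and $\nabla J_\nu(\pi_{t^*+1},\lambda)$ appearing in the near-stationarity step, which I must verify is the same $\varepsilon$ that later ends up inside the final square root so that the balancing argument is self-consistent.
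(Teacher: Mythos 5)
Your proposal is correct and follows essentially the same route as the paper: invoke the sampled gradient-mapping lemma preceding the theorem, reuse the conversion from gradient mapping to value suboptimality from the proof of Theorem~\ref{thm:converge} (giving the $4\rho_\nu\sqrt{|\SM|}$ factor), and then balance $T$ against $\varepsilon=\Theta(\varepsilon'^2/(\rho_\nu^2|\SM|))$ and multiply by the per-step trajectory count to get the totals. If anything, your handling of the deviation between $\widetilde{\nabla}J_\nu(\pi_{t},\lambda)$ and $\nabla J_\nu(\pi_{t+1},\lambda)$ in the near-stationarity step is more explicit than the paper, which simply cites ``by proof of Theorem~\ref{thm:converge}.''
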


\begin{proof}
    By proof of Theorem~\ref{thm:converge}, with probability $1-\delta$, we have:
    \begin{align}
        \min_{t=0,...,T-1}J_\nu(\pi_\lambda^*,\lambda)-J_\nu(\pi_{t+1},\lambda)&\le4\rho_\nu\sqrt{|\SM|}\min_{t=0,...,T-1}\|G(\pi_t,\lambda)\|_{2}\notag\\
        &\le4\rho_\nu\sqrt{|\SM|}\sqrt{\frac{2L_{\lambda}(J(\pi_\lambda^*,\lambda)-J(\pi_0, \lambda))}{T}+\varepsilon}\notag\\
        &\le4\rho_\nu\sqrt{|\SM|}\left(\sqrt{\frac{2L_{\lambda}(J(\pi_\lambda^*,\lambda)-J(\pi_0, \lambda))}{T}}+\sqrt{\varepsilon}\right)
    \end{align}
    By taking $4\rho_\nu\sqrt{|\SM|}\sqrt{\varepsilon}\le\frac{\varepsilon'}{2}$, the number of trajectories at each time-step is $\widetilde{O}\left(\frac{|\SM|^3|\AM|^2\rho_\nu^4(1+\lambda\widetilde{C}_\Phi)^2}{\varepsilon'^4(1-\gamma)^4}\right)$. 
    Besides, in order to make the LHS less than $\varepsilon'$, then $T=O\left(\frac{|\SM||\AM|\rho_\nu^2(1+\lambda\widetilde{C}_\Phi)^2}{\varepsilon'^2(1-\gamma)^4}\right)$ is enough. The total number of trajectories could be $\widetilde{O}\left(\frac{|\SM|^4|\AM|^3\rho_\nu^6(1+\lambda\widetilde{C}_\Phi)^4}{\varepsilon'^6(1-\gamma)^{8}}\right)$.
\end{proof}

Finally, we consider combining Algorithm~\ref{alg:ar} with Projected Gradient Ascent serving as sub-solver and only samples can be accessed. Besides, we also assume we can find $\arg\min_{t=0,...,T-1}J(\pi_\lambda^*,\lambda)-J(\pi_t,\lambda)$ firstly, and later we will relax this assumption.

\begin{lem}
\label{lem:condition_sample}
At each time-step $t$ with $\lambda_t$, taking $\eta=1/L_{\lambda_t}$, $T_t=\frac{128|\SM|\rho_\nu^2 L_{\lambda_t}(1-\gamma)}{\lambda_0\Cphi}2^t$ and $\varepsilon=\frac{\widehat{\varepsilon}^2}{16|\SM|\rho_\nu^2}$, with probability $1-\delta_t$ we have:
\begin{align}
    \min_{k=0,...,K-1}J_\nu(\pi_{\lambda_t}^*,\lambda)-J_\nu(\pi_{k+1},\lambda)\le\frac{\lambda_0\Cphi}{2^t(1-\gamma)}+\widehat{\varepsilon}
\end{align}
\end{lem}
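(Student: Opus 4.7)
The plan is to apply Theorem~\ref{thm:converge_sample} to the inner loop at step $t$ and then to bound the initial sub-optimality $D_t := J_\nu(\pi_{\lambda_t}^*,\lambda_t) - J_\nu(\hpi_t,\lambda_t)$ inductively, in direct analogy with the proof of Theorem~\ref{thm:reduction2}.

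First, Theorem~\ref{thm:converge_sample} applied with regularization $\lambda_t$, iteration budget $T_t$, starting policy $\hpi_t$, and statistical accuracy $\varepsilon = \widehat{\varepsilon}^2/(16|\SM|\rho_\nu^2)$ gives, with probability at least $1-\delta_t$,
\begin{align*}
\min_{k=0,\ldots,T_t-1} J_\nu(\pi_{\lambda_t}^*,\lambda_t) - J_\nu(\pi_{k+1},\lambda_t)
\le 4\rho_\nu\sqrt{|\SM|}\left(\sqrt{\frac{L_{\lambda_t} D_t}{T_t}} + \sqrt{\varepsilon}\right),
\end{align*}
where we used $\sqrt{a+b}\le\sqrt{a}+\sqrt{b}$. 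The second summand simplifies immediately to $\widehat{\varepsilon}$ by the choice of $\varepsilon$.

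Next, I would control $D_t$ by induction on $t$. By the value decomposition~\eqref{eq:decom} and Lemma~\ref{lem:bound_phi}, the same manipulation used in the derivation of~\eqref{eq:dt} yields
\begin{align*}
D_t \le V^*_{\lambda_{t-1}}(\mu) - V^{\hpi_t}_{\lambda_{t-1}}(\mu) + \frac{2\lambda_0 \Cphi}{2^t(1-\gamma)}.
\end{align*}
The inductive hypothesis (the conclusion of the lemma at step $t-1$, producing the output policy $\hpi_t$) controls the first term on the right by $\frac{\lambda_0 \Cphi}{2^{t-1}(1-\gamma)} + \widehat{\varepsilon}$, giving $D_t \le \frac{4\lambda_0 \Cphi}{2^t(1-\gamma)} + \widehat{\varepsilon}$. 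Absorbing $\widehat{\varepsilon}$ into the leading term (which is harmless whenever $\widehat{\varepsilon}$ is of the target accuracy order), we may use $D_t \le \frac{8\lambda_0 \Cphi}{2^t(1-\gamma)}$. Substituting this together with the chosen $T_t = \frac{128|\SM|\rho_\nu^2 L_{\lambda_t}(1-\gamma)}{\lambda_0 \Cphi}\,2^t$ into the first summand, the square roots cancel algebraically and leave exactly $\frac{\lambda_0 \Cphi}{2^t(1-\gamma)}$, completing the inductive step.

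The base case $t=0$ is handled by the trivial bound $D_0 \le (1+\lambda_0 \Cphi)/(1-\gamma)$, which is comfortably accommodated by the choice of $T_0$. The main obstacle I anticipate is the self-referential flavour of the induction: the sub-solver guarantee we are trying to prove at step $t$ is exactly what supplies, through the Theorem~\ref{thm:reduction2}-style decomposition, the required bound on $D_t$ needed to invoke Theorem~\ref{thm:converge_sample}. Care must therefore be taken to unroll the induction cleanly, to justify the absorption of the additive $\widehat{\varepsilon}$ term into the dominant $\frac{\lambda_0\Cphi}{2^t(1-\gamma)}$ factor when bounding $D_t$, and, if one eventually wants the conclusion simultaneously across all outer iterations, to take a union bound by choosing a schedule $\{\delta_t\}$ with appropriate total failure probability.
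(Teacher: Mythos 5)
Your proposal follows essentially the same route as the paper's proof: invoke Theorem~\ref{thm:converge_sample} at level $\lambda_t$ with budget $T_t$ and statistical accuracy $\varepsilon=\hat{\varepsilon}^2/(16|\SM|\rho_\nu^2)$, bound $D_t$ via the recursion~\eqref{eq:dt} coming from the proof of Theorem~\ref{thm:reduction2} (the paper leaves this inductive dependence on the previous outer step implicit), and plug in the stated choices. The only difference is bookkeeping of the $\hat{\varepsilon}$ term in $D_t$: your absorption $D_t\le 8\lambda_0\Cphi/(2^t(1-\gamma))$ inflates the leading term by a harmless $\sqrt{2}$ factor, whereas the paper keeps $D_t\le 4\lambda_0\Cphi/(2^t(1-\gamma))+\hat{\varepsilon}$ and carries the resulting $2L_{\lambda_t}\hat{\varepsilon}/T_t$ term inside the square root (which it then drops without comment), so neither treatment is tighter than the other in any substantive way.
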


\begin{proof}
    By Theorem~\ref{thm:converge_sample}, we have:
    \begin{align}
        \min_{k=0,...,T_t-1}J_\nu(\pi_{\lambda_t}^*,\lambda_t)-J_\nu(\pi_{k+1},\lambda_t)\le4\rho_\nu\sqrt{|\SM|}\sqrt{\frac{2L_{\lambda_t}D_t}{T_t}+\varepsilon}\le4\rho_\nu\sqrt{|\SM|}\sqrt{\frac{8\lambda_0 L_{\lambda_t}\Cphi}{2^t(1-\gamma)T_t}+\frac{2L_{\lambda_t}\widehat{\varepsilon}}{T_t}+\varepsilon}\notag
    \end{align}
    By taking $T_t=\frac{128|\SM|\rho_\nu^2 L_{\lambda_t}(1-\gamma)}{\lambda_0\Cphi}2^t$ and $\varepsilon=\frac{\widehat{\varepsilon}^2}{16|\SM|\rho_\nu^2}$, we have:
    \begin{align}
        \min_{k=0,...,T_t-1}J_\nu(\pi_{\lambda_t}^*,\lambda)-J_\nu(\pi_{k+1},\lambda)\le\frac{\lambda_0\Cphi}{2^t(1-\gamma)}+\widehat{\varepsilon}
    \end{align}
\end{proof}

\begin{thm}
Suppose we can obtain $\widehat{\pi}_{t+1}$ attaining minimum of $J_\nu(\pi_{\lambda_t}^*,\lambda_t)-J_\nu(\pi_{k+1}, \lambda_t)$ over $k=0,...,T_t-1$ at each time-step $t$. To obtain an $\varepsilon$-optimnal policy w.r.t initial distribution $\mu$, with probability $1-\delta$, the total iteration complexity is $O\left(\frac{|\SM||\AM|\rho\rho_\nu^2\widetilde{C}_\Phi}{\varepsilon(1-\gamma)^3}\right)$ and the number of trajectories is $\widetilde{O}\left(\frac{|\SM|^4|\AM|^3\rho^5\rho_\nu^6(1+\lambda_0\widetilde{C}_\Phi)^6}{\varepsilon^5(1-\gamma)^{7}}\right)$.
\end{thm}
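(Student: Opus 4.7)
The plan is to instantiate \textsf{AdaptReduce} (Algorithm~\ref{alg:ar}) with the sub-solver being Projected Gradient Ascent with sampling, from which we then select the best iterate $\widehat{\pi}_{t+1}=\argmin_{k=0,\dots,T_t-1}\bigl(J_\nu(\pi_{\lambda_t}^*,\lambda_t)-J_\nu(\pi_{k+1},\lambda_t)\bigr)$. First, I would verify that this sub-solver satisfies $\propii$ evaluated at the behavior distribution $\nu$: Lemma~\ref{lem:condition_sample} shows that running the sub-solver for $T_t=\frac{128|\SM|\rho_\nu^2 L_{\lambda_t}(1-\gamma)}{\lambda_0\Cphi}2^t$ iterations with the error-level $\varepsilon=\hat{\varepsilon}^2/(16|\SM|\rho_\nu^2)$ from Theorem~\ref{thm:converge_sample} produces an output whose regularized suboptimality (in $\nu$) is at most $\frac{\lambda_0\Cphi}{2^t(1-\gamma)}+\hat{\varepsilon}$ with probability $1-\delta_t$, which is precisely the $\propii$ requirement with $\varepsilon_t=\frac{\lambda_0}{2^t}\frac{\Cphi}{1-\gamma}$.

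Next, I would apply (a $\nu$-analog of) Theorem~\ref{thm:reduction2}; inspecting its proof, nothing is specific to $\mu$, so the telescoping argument yields $V^*(\nu)-V^{\hat{\pi}_T}(\nu)\le\frac{6\lambda_0 C_\phi}{2^T(1-\gamma)}+\hat{\varepsilon}$. Using the distribution-shift Theorem~\ref{thm:diffval} with $\lambda=0$, this gives $V^*(\mu)-V^{\hat{\pi}_T}(\mu)\le\rho\bigl(\frac{6\lambda_0 C_\phi}{2^T(1-\gamma)}+\hat{\varepsilon}\bigr)$. Setting $\hat{\varepsilon}=\varepsilon/(2\rho)$ and choosing $T=O\!\left(\log\frac{\rho\lambda_0 C_\phi}{\varepsilon(1-\gamma)}\right)$ so that $\frac{6\rho\lambda_0 C_\phi}{2^T(1-\gamma)}\le\varepsilon/2$ delivers the desired $\varepsilon$-optimality w.r.t.\ $\mu$. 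A union bound over the $T$ outer iterations with $\delta_t=\delta/T$ preserves the overall $1-\delta$ success probability, inflating sample sizes only by a $\log(T/\delta)$ factor.

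It remains to sum up the cost. The total number of outer-loop iterations is $\sum_{t=0}^{T-1}T_t$; since $T_t\propto L_{\lambda_t}2^t$ and $L_{\lambda_t}=O(|\AM|(1+\lambda_0\widetilde{C}_\Phi)/(1-\gamma)^3)$ uniformly in $t\le T$, the geometric sum is dominated by its final term $T_{T-1}\propto 2^T\propto\frac{\rho\lambda_0 C_\phi}{\varepsilon(1-\gamma)}$, giving the iteration complexity $O\!\left(\frac{|\SM||\AM|\rho\rho_\nu^2\widetilde{C}_\Phi}{\varepsilon(1-\gamma)^3}\right)$ claimed. For the trajectory count, the per-iteration sample size from Lemma~\ref{lem:err_t} (with the error level identified above) is $N_t=\widetilde{O}\!\left(|\SM|^3|\AM|^2\rho^4\rho_\nu^4(1+\lambda_0\widetilde{C}_\Phi)^2/(\varepsilon^4(1-\gamma)^4)\right)$, essentially independent of $t$; multiplying by the total number of inner iterations yields the stated $\widetilde{O}$-bound on trajectories.

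\paragraph{Main obstacle.} The trickiest bookkeeping is the interplay between three levels of error control: (i) the statistical error inside each gradient estimate (which enters squared through the $\|G(\pi_t,\lambda)\|_2$ bound in Theorem~\ref{thm:converge_sample}, hence a factor $1/\varepsilon^4$ in $N_t$ rather than $1/\varepsilon^2$), (ii) the contraction schedule $\varepsilon_t=\lambda_0\Cphi/(2^t(1-\gamma))$ required by $\propii$, and (iii) the distribution-mismatch inflation by $\rho$ when translating $\nu$-guarantees to $\mu$-guarantees. Carefully choosing $\hat{\varepsilon}=\Theta(\varepsilon/\rho)$ and tracking how $\rho$, $\rho_\nu$, and $(1+\lambda_0\widetilde{C}_\Phi)$ propagate through $T_t$ and $N_t$ is where most of the work lies; beyond that, everything reduces to collecting terms and applying a union bound over the $T=O(\log(1/\varepsilon))$ outer iterations.
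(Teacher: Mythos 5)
Your proposal follows essentially the same route as the paper's own proof: you verify the $\propii$ property via Lemma~\ref{lem:condition_sample} with $T_t=\frac{128|\SM|\rho_\nu^2 L_{\lambda_t}(1-\gamma)}{\lambda_0\Cphi}2^t$, combine Theorem~\ref{thm:reduction2} with Theorem~\ref{thm:diffval} (taking $\hat{\varepsilon}=\Theta(\varepsilon/\rho)$ and $T=O(\log\frac{\rho\lambda_0\Cphi}{\varepsilon(1-\gamma)})$), bound the total iteration count by the geometrically dominant last term, and multiply the per-step trajectory count $\widetilde{O}\bigl(\frac{|\SM|^3|\AM|^2\rho^4\rho_\nu^4(1+\lambda_0\widetilde{C}_\Phi)^2}{\varepsilon^4(1-\gamma)^4}\bigr)$ by the total number of inner iterations, with a union bound over outer steps, just as the paper does. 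The argument is correct (your explicit handling of the $\nu$-versus-$\mu$ lift is in fact slightly cleaner than the paper's), so no gap to report.
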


\begin{proof}
    By Lemma~\ref{lem:condition_sample}, the iteration complexity at each timestep $t$ is $\tm(\lambda_t)=\frac{128|\SM|\rho_\nu^2 L_{\lambda_t}(1-\gamma)}{\lambda_0 \Cphi}2^t$, which is the same as Corollary~\ref{cor:pgaahcc}. Thus, the total iteration complexity could be $O\left(\frac{|\SM||\AM|\rho\rho_\nu^2\widetilde{C}_\Phi}{\varepsilon(1-\gamma)^3}\right)$ by taking $T=O(\log_2\frac{12\rho\lambda_0 \Cphi}{\varepsilon(1-\gamma)})$. By Theorem~\ref{thm:reduction2} and Theorem~\ref{thm:diffval}, the final output of Algorithm~\ref{alg:ar} satisfies:
    \begin{align}
        V^*(\mu)-V^{\widehat{\pi}_T}(\mu)\le\frac{\varepsilon}{2}+\rho\widehat{\varepsilon}
    \end{align}
    In order to obatian an $\varepsilon$-optimal policy, the number of trajectories we have to sample at each time-step in Projected Gradient Ascent is $\widetilde{O}\left(\frac{|\SM|^3|\AM|^2\rho^4\rho_\nu^4(1+\lambda_0\widetilde{C}_\Phi)^2}{\varepsilon^4(1-\gamma)^4}\right)$ by Theorem~\ref{thm:converge_sample}. Thus, the total number of trajectories is $\widetilde{O}\left(\frac{|\SM|^4|\AM|^3\rho^5\rho_\nu^6(1+\lambda_0\widetilde{C}_\Phi)^3}{\varepsilon^5(1-\gamma)^{7}}\right)$, where we ignore logarithm terms, especially $\log\left(\frac{T\sum_{t=1}^{T}\tm(\lambda_t)}{\delta}\right)$.
\end{proof}

However, $\widehat{\pi}_t$ cannot be obtained directly by $J_\nu(\pi,\lambda)$. Thus we evaluate $J_\nu(\pi,\lambda)$ by re-sampling as $\widehat{J}_\nu(\pi,\lambda)$.

\begin{lem}
\label{lem:min}
Suppose $\{X_{i,t}\}_{i=1}^{N}$ are i.i.d and bounded by $[0,M]$ for each t. With sample complexity $N=\frac{M^2}{2\varepsilon^2}\log\frac{T}{\delta}$ and probability $1-\delta$,
\begin{align}
    \max_{t=0,...,T-1} \frac{1}{N}\sum_{i=1}^{N}X_{i,t}\ge\max_{t=0,...,T-1}\mu_t-\varepsilon
\end{align}
where $\mu_t$ is mean of $X_{i,t}$.
\end{lem}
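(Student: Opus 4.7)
The plan is a routine Hoeffding plus union bound argument. First, I fix an index $t \in \{0,\ldots,T-1\}$ and apply the one-sided Hoeffding inequality to the i.i.d.\ bounded random variables $\{X_{i,t}\}_{i=1}^{N} \subset [0,M]$ with mean $\mu_t$, obtaining
\[
 P\left( \frac{1}{N}\sum_{i=1}^{N} X_{i,t} < \mu_t - \varepsilon \right) \le \exp\!\left( - \frac{2 N \varepsilon^2}{M^2} \right).
\]
Plugging in the prescribed sample size $N = \frac{M^2}{2\varepsilon^2}\log\frac{T}{\delta}$ makes the exponent equal to $-\log(T/\delta)$, so the above probability is at most $\delta / T$.

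Next, I take a union bound over the $T$ indices $t = 0,\ldots,T-1$:
\[
 P\left( \exists\, t \in \{0,\ldots,T-1\} : \frac{1}{N}\sum_{i=1}^{N} X_{i,t} < \mu_t - \varepsilon \right) \le T \cdot \frac{\delta}{T} = \delta.
\]
On the complementary event, which has probability at least $1-\delta$, we have $\frac{1}{N}\sum_{i=1}^{N} X_{i,t} \ge \mu_t - \varepsilon$ simultaneously for every $t$. Taking the maximum over $t$ on both sides yields
\[
 \max_{t=0,\ldots,T-1} \frac{1}{N}\sum_{i=1}^{N} X_{i,t} \ge \max_{t=0,\ldots,T-1} (\mu_t - \varepsilon) = \max_{t=0,\ldots,T-1} \mu_t - \varepsilon,
\]
which is exactly the claimed inequality.

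There is no genuine obstacle here; the only thing to be slightly careful about is that we need the one-sided rather than the two-sided Hoeffding bound (to avoid an extra factor of $2$ that would force a $\log(2T/\delta)$ in $N$), and that the union bound is applied before taking the maximum so that the deviation event is controlled uniformly in $t$. The independence assumption across $i$ for each fixed $t$ is all that is required; in particular, no independence between different $t$'s is needed.
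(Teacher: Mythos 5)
Your proof is correct and follows essentially the same route as the paper's: a one-sided Hoeffding bound for each fixed $t$, a union bound over the $T$ indices (yielding the same $N=\frac{M^2}{2\varepsilon^2}\log\frac{T}{\delta}$), and then passing to the maximum, with no independence across different $t$'s needed. The only cosmetic difference is that the paper argues via a direct inclusion of the event $\{\max_t \frac{1}{N}\sum_i X_{i,t} < \max_t \mu_t - \varepsilon\}$ in $\{\exists t:\ \frac{1}{N}\sum_i X_{i,t} < \mu_t-\varepsilon\}$, whereas you work on the complementary high-probability event; these are equivalent.
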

\begin{proof}
    In fact we have:
    \begin{align}
        P\left(\max_{t=0,...,T-1} \frac{1}{N}\sum_{i=1}^{N}X_{i,t}<\max_{t=0,...,T-1}\mu_t-\varepsilon\right)&\le P\left(\exists \text{t, s.t. }\frac{1}{N}\sum_{i=1}^{N}X_{i,t}<\mu_t-\varepsilon \right)\notag\\
        &\le\sum_{t=0}^{T-1}P\left(\frac{1}{N}\sum_{i=1}^{N}X_{i,t}<\mu_t-\varepsilon\right)\notag\\
        &\le T\exp\left(-\frac{2\varepsilon^2 N}{M^2}\right)
    \end{align}
    Thus, taking $N=\frac{M^2}{2\varepsilon^2}\log\frac{T}{\delta}$ is enough.
\end{proof}

\begin{lem}
\label{lem:minall}
Under the same setting in Lemma~\ref{lem:min}, denote $\widehat{t}=\argmax_{t=0,...,T-1}\frac{1}{N}\sum_{i=1}^{N}X_{i,t}$, with probability $1-2\delta$, we have:
\begin{align}
    \mu_{\widehat{t}}\ge\max_{t=0,...,T-1}\mu_t-2\varepsilon
\end{align}
\end{lem}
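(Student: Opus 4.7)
The plan is to combine Lemma~\ref{lem:min} (which gives one-sided control over the maximum of the empirical means) with a symmetric one-sided Hoeffding bound for each individual index, and then chain the two estimates at the (random) index $\widehat{t}$. Concretely, define $\widehat{\mu}_t := \frac{1}{N}\sum_{i=1}^{N} X_{i,t}$ and $t^* := \argmax_{t=0,\dots,T-1}\mu_t$. Our target inequality rewrites as $\mu_{\widehat{t}} \ge \mu_{t^*} - 2\varepsilon$.

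First, I would invoke Lemma~\ref{lem:min} directly: with probability at least $1-\delta$,
\[
\widehat{\mu}_{\widehat{t}} \;=\; \max_{t=0,\dots,T-1}\widehat{\mu}_t \;\ge\; \mu_{t^*} - \varepsilon.
\]
This handles the ``upward'' deviation in the selection step, ensuring that the empirical winner is a good empirical proxy for the true maximizer. Next, I would establish the opposite direction, namely that $\widehat{\mu}_t$ does not overestimate $\mu_t$ too much for any fixed $t$. Because each $X_{i,t}\in[0,M]$ is i.i.d., Hoeffding's inequality gives, for every fixed $t$, $\Pr(\widehat{\mu}_t - \mu_t > \varepsilon) \le \exp(-2\varepsilon^2 N/M^2)$. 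Taking $N = \frac{M^2}{2\varepsilon^2}\log\frac{T}{\delta}$ as in Lemma~\ref{lem:min} and applying a union bound over $t=0,\dots,T-1$, we get that with probability at least $1-\delta$,
\[
\widehat{\mu}_t - \mu_t \;\le\; \varepsilon \qquad \text{for all } t=0,\dots,T-1.
\]

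Finally, I would take the intersection of these two high-probability events (which holds with probability at least $1-2\delta$ by the union bound) and specialize the second inequality to the random index $t=\widehat{t}$ (which is legitimate since the bound is simultaneous over all $t$). This yields
\[
\mu_{\widehat{t}} \;\ge\; \widehat{\mu}_{\widehat{t}} - \varepsilon \;\ge\; \mu_{t^*} - 2\varepsilon \;=\; \max_{t=0,\dots,T-1}\mu_t - 2\varepsilon,
\]
which is the desired conclusion. I do not anticipate a genuine obstacle here: the only subtle point is the need for a \emph{uniform} one-sided bound in the second step so that it can be evaluated at the data-dependent index $\widehat{t}$, and that is exactly what the union bound over $T$ indices (and the choice of $N$ inherited from Lemma~\ref{lem:min}) delivers.
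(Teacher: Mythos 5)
Your proof is correct and follows essentially the same route as the paper: combine Lemma~\ref{lem:min} for the downward deviation of the empirical maximum with a symmetric Hoeffding bound for the upward deviation, then intersect the two events to get failure probability $2\delta$. If anything, your treatment is slightly cleaner, since you make the upward bound uniform over all $t$ before evaluating it at the data-dependent index $\widehat{t}$, whereas the paper applies the pointwise $\delta/T$ bound directly at the random index $\widehat{t}$, a step that strictly speaking requires exactly the union bound you spell out.
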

\begin{proof}
    By Lemma~\ref{lem:min} and symmetric property, we have:
    \begin{align}
        P\left(\frac{1}{N}\sum_{i=1}^{N}X_{i,\widehat{t}}>\mu_{\widehat{t}}+\varepsilon\right)\le\frac{\delta}{T}
    \end{align}
    Thus, we have:
    \begin{align}
        P\left(\mu_{\widehat{t}}<\max_{t=0,...,T-1}\mu_t-2\varepsilon\right)&\le P\left(\frac{1}{N}\sum_{i=1}^{N}X_{i,\widehat{t}}>\mu_{\widehat{t}}+\varepsilon\right)+P\left(\max_{t=0,..,T-1}\mu_t-\varepsilon>\max_{t=0,..,T-1}\frac{1}{N}\sum_{i=1}^{N}X_{i,t}\right)\notag\\
        &\le\frac{1+T}{T}\delta\le2\delta
    \end{align}
\end{proof}

At time-step $t$, the output policies of Projected Gradient Ascent are $\pi_0,...,\pi_{T_t-1}$. For each policy $\pi_k$, we interact with environment to obtain $n$ trajectories $(s_{n,0},a_{n,0},...s_{n,K-1}, a_{n,K-1})$ from initial distribution $\nu$ and policy $\pi_k$, and evaluate $J_\nu(\pi_k, \lambda)$ by:
\begin{align}
    \widehat{J}(\pi_k,\lambda)=\frac{1}{N}\sum_{n=1}^{N}\widehat{Q}^{\pi_k}_{\lambda}(s_{n,0},a_{n,0})-\lambda\Omega(\pi_k(\cdot|s_{n,0}))
\end{align}
where $\widehat{Q}^{\pi_k}_{\lambda}(s_{n,0},a_{n,0})=R(s_{n,0},a_{n,0})+\sum_{k=1}^{K-1}\gamma^t(R(s_{n,t}, a_{n,t})-\lambda\Omega(\pi_k(\cdot|s_{n,t})))$. Thus, the expectation of $\widehat{J}_\nu(\pi_k,\lambda)$ is:
\begin{align}
        \mathbb{E}\widehat{J}_\nu(\pi_k,\lambda)=\mathbb{E}_{\pi,\mathbb{P}}\left[\sum_{t=0}^{K-1}\gamma^t(R(s_t, a_t)-\lambda\Omega(\pi(\cdot|s_t)))\right]
\end{align}

\begin{thm}
At time-step $t$, with probability $1-2\delta$ and $N=\frac{2(1+\lambda\Cphi)^2}{\varepsilon^2(1-\gamma)^2}\log\frac{T_t}{\delta}$ for each policy $\pi_i$,
\begin{align}
    J_\nu(\pi_{\lambda_t}^*,\lambda_t)-J_\nu(\pi_{\widehat{i}},\lambda_t)\le \min_{i=1,...,T_t} J_\nu(\pi_{\lambda_t}^*,\lambda_t)-J_\nu(\pi_i,\lambda_t)+2\varepsilon
\end{align}
where $\widehat{i}=\argmax_{i=1,...,T_t} \widehat{J}_\nu(\pi_i,\lambda_t)$.
\end{thm}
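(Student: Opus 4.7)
The plan is to reduce the theorem to a direct application of Lemma~\ref{lem:minall} with the single-trajectory estimators playing the role of the i.i.d.\ samples $X_{i,t}$. For each $i \in \{0,\ldots,T_t-1\}$ and each trajectory $n\in\{1,\ldots,N\}$, define
\[
Y_{n,i} \;:=\; \widehat{Q}^{\pi_i}_{\lambda_t}(s_{n,0},a_{n,0}) - \lambda_t\,\Omega(\pi_i(\cdot|s_{n,0})),
\]
so that $\widehat J_\nu(\pi_i,\lambda_t) = \frac{1}{N}\sum_n Y_{n,i}$ and, by definition, $\mathbb{E}\, Y_{n,i} = \mu_i := \mathbb{E}\widehat J_\nu(\pi_i,\lambda_t)$. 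The $\{Y_{n,i}\}_n$ are i.i.d.\ for fixed $i$, which is exactly the hypothesis needed by Lemma~\ref{lem:minall}.

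The first step is a bounded-range check: because $R(s,a) \in [0,1]$ and $\Omega \in [0,\Cphi]$, the truncated return $\widehat Q^{\pi_i}_{\lambda_t}$ lies in $\bigl[-\tfrac{\gamma\lambda_t\Cphi}{1-\gamma},\,\tfrac{1}{1-\gamma}\bigr]$; subtracting $\lambda_t\Omega(\pi_i(\cdot|s_{n,0})) \in [0,\lambda_t\Cphi]$ places $Y_{n,i}$ in an interval of width at most $M := \frac{2(1+\lambda_t\Cphi)}{1-\gamma}$. Plugging this $M$ into Lemma~\ref{lem:minall} with parameter $\varepsilon$ and with $T\leftarrow T_t$ requires $N \ge \frac{M^2}{2\varepsilon^2}\log\frac{T_t}{\delta} = \frac{2(1+\lambda_t\Cphi)^2}{\varepsilon^2(1-\gamma)^2}\log\frac{T_t}{\delta}$, matching the stated sample size exactly. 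The lemma then yields, with probability at least $1-2\delta$,
\[
\mu_{\widehat i} \;\ge\; \max_{i=0,\ldots,T_t-1} \mu_i - 2\varepsilon.
\]

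The second step is to pass from the truncated-horizon means $\mu_i$ back to the true values $J_\nu(\pi_i,\lambda_t)$. Using the standard geometric-tail bound exploited earlier in Lemma~\ref{lem:trunc_bias}, one has $|\mu_i - J_\nu(\pi_i,\lambda_t)| \le \gamma^K(1+\lambda_t\Cphi)/(1-\gamma)$ uniformly in $i$, which is $O(\varepsilon)$-small for the truncation length $K$ already used in Theorem~\ref{thm:pga_sampling}. Absorbing this bias and rearranging converts $\mu_{\widehat i} \ge \max_i \mu_i - 2\varepsilon$ into $J_\nu(\pi_{\widehat i},\lambda_t) \ge \max_i J_\nu(\pi_i,\lambda_t) - 2\varepsilon$, which is the claim after subtracting both sides from $J_\nu(\pi^*_{\lambda_t},\lambda_t)$.

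The main (mild) obstacle is bookkeeping between the two sources of error: the Hoeffding-type fluctuation around $\mu_i$, handled cleanly by Lemma~\ref{lem:minall}, and the deterministic truncation bias in $\mu_i - J_\nu(\pi_i,\lambda_t)$, handled by choosing $K$ as in the earlier sampling lemmas. Once these are compatible, the theorem follows by invoking Lemma~\ref{lem:minall} with $M = 2(1+\lambda_t\Cphi)/(1-\gamma)$ and taking a union bound across the $T_t$ candidate policies; no new machinery is needed.
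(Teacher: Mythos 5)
Your overall route is the same as the paper's: treat the single-trajectory estimates as i.i.d.\ bounded samples, invoke the Hoeffding-plus-union-bound machinery of Lemmas~\ref{lem:min} and~\ref{lem:minall} over the $T_t$ candidate policies, and control the truncation bias $|\mathbb{E}\widehat J_\nu(\pi_i,\lambda_t)-J_\nu(\pi_i,\lambda_t)|\le \gamma^K(1+\lambda_t\Cphi)/(1-\gamma)$ by the geometric tail. So there is no missing idea; the only real issue is the error budget in your last step, and it matters because the theorem asserts exactly $2\varepsilon$.

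Concretely, you spend the entire Hoeffding budget at accuracy $\varepsilon$ (compensating with the loose range $M=\tfrac{2(1+\lambda_t\Cphi)}{1-\gamma}$ so that the stated $N$ still suffices), obtain $\mu_{\widehat i}\ge\max_i\mu_i-2\varepsilon$, and then claim the truncation bias can be ``absorbed'' to give $J_\nu(\pi_{\widehat i},\lambda_t)\ge\max_i J_\nu(\pi_i,\lambda_t)-2\varepsilon$. Passing from the $\mu_i$'s to the $J$'s costs the bias twice (once for $\widehat i$, once for the maximizer), so your argument as written only yields $2\varepsilon+2\gamma^K(1+\lambda_t\Cphi)/(1-\gamma)$, which is $2\varepsilon+\Theta(\varepsilon)$ for the $K$ of Theorem~\ref{thm:pga_sampling}, not $2\varepsilon$. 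The paper avoids this by splitting the budget: it uses the tighter range width $\tfrac{1+\lambda_t\Cphi}{1-\gamma}$ (valid because each sample lies in $[-\lambda_t\Cphi/(1-\gamma),\,1/(1-\gamma)]$, an interval of that width, rather than merely being bounded in absolute value by it), which with the same $N$ gives Hoeffding accuracy $\varepsilon/2$, and it re-chooses $K=\log\frac{2(1+\lambda_t\Cphi)}{\varepsilon(1-\gamma)}/\log\frac1\gamma$ so the bias is also at most $\varepsilon/2$; the two together give a one-sided deviation of $\varepsilon$ between $\widehat J_\nu$ and $J_\nu$, and then the Lemma~\ref{lem:minall} argument delivers exactly $2\varepsilon$. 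Your proof is repaired by exactly this rebalancing (tight width, $\varepsilon/2$ Hoeffding, $\varepsilon/2$ bias); with your looser width and full-$\varepsilon$ Hoeffding the stated $N$ leaves no room for the bias inside the advertised constant.
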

\begin{proof}
    Note that $|\widehat{J}|\le\frac{1+\lambda\Cphi}{1-\gamma}$, by Lemma~\ref{lem:min} and setting $K=\log\frac{2(1+\lambda\Cphi)}{\varepsilon(1-\gamma)}/\log\frac{1}{\gamma}$, we have:
    \begin{align}
        P\left(\max_{i=1,...,T_t}\widehat{J}_\nu(\pi_i,\lambda_t)<\max_{i=1,...,T_t}J_\nu(\pi_i,\lambda_t) -\varepsilon\right)&\le\sum_{i=1}^{T_t}P\left(\widehat{J}_\nu(\pi_i,\lambda_t)<J_\nu(\pi_i,\lambda_t) -\varepsilon\right)\notag\\
        &\le\sum_{i=1}^{T_t}P\left(\widehat{J}_\nu(\pi_i,\lambda_t)<\mathbb{E}\widehat{J}_\nu(\pi_i,\lambda_t)-\varepsilon/2\right)\notag\\
        &\le T_t\exp\left(-\frac{\varepsilon^2 N(1-\gamma)^2}{2(1+\lambda\Cphi)^2}\right)\notag
    \end{align}
    where the second inequality holds by $J_\nu(\pi,\lambda)-\mathbb{E}\widehat{J}_\nu(\pi,\lambda)\le\frac{\gamma^K(1+\lambda\Cphi)}{1-\gamma}$. Thus, by Lemma~\ref{lem:minall}, we have:
    \begin{align}
        P\left(J_\nu(\pi_{\widehat{i}},\lambda_t)<\max_{i=1,...,T_t}J_\nu(\pi_i,\lambda_t)-2\varepsilon\right)\le2\delta
    \end{align}
\end{proof}

\section{Proof of Section~\ref{sec:minmax}}
\subsection{Proof of Theorem~\ref{thm:maxmax}}
\begin{proof}
	The first part is rather obvious. 
	Since the reward $r$ and the regularization function $\Omega$ are both uniformly bounded as defined, then $|J(\pi, \lambda)| \le | V^\pi(\mu) | +   \lambda  |\Phi^\pi(\mu) | \le \frac{R + \lambda \Cphi}{1-\gamma} < \infty$ for any fixed $\lambda$.
	Therefore, we have that $ \max_{\pi} J(\pi, \lambda)$ is finite and thus well-defined for any fixed $\lambda$.
	
	For the second part (here we don't assume $\Omega$ is bounded), it is an important observation that $J(\pi, \lambda)  =  V^\pi(\mu) - \lambda \Phi^\pi(\mu) $ increases in $\lambda$ since $\Omega$ is non-positive.
	Therefore, 
	\begin{equation*}
	\min_{\lambda \ge 0} J(\pi, \lambda) =J(\lambda, 0) = J(\pi) 
	\end{equation*}
	whose absolute value is bounded.
	By taking maximum in $\pi$ on both side of the last equality, we finish the first equality in~\eqref{eq:maxmax}.
	
	Let's focus on the second equality then.
	For simplicity, denote by $LHS =  \max_{\pi} \min_{\lambda \ge 0} J(\pi, \lambda)$ and $RHS= \min_{\lambda \ge 0}  \max_{\pi} J(\pi, \lambda)$.
	From previous discussion, we already have $LHS = J(\pi^*)$.
	For one thing, note that we always have 
	\[ 	\max_\pi \min_{\lambda\ge0} J(\pi,\lambda)\le\min_{\lambda\ge0}\max_\pi J(\pi,\lambda) , \]
	then $LHS \le RHS$.
	For the other thing,
	\begin{equation*}
        \max_{\pi}J(\pi,\lambda)=J(\pi_\lambda^*,\lambda)\le V^{\pi^*}(\mu)+\frac{\lambda C_\phi}{1-\gamma}
	\end{equation*}
	Minimizing $\lambda$ both sides, we gain $RHS\le J(\pi^*)=LHS$
	Simply putting the two results, we must have equality throughout.
\end{proof}

\end{appendix}

\newpage

\end{document}